\documentclass{article}

 \usepackage[margin=1.2in]{geometry}

\usepackage[utf8]{inputenc}
\usepackage[T1]{fontenc}
\usepackage[english]{babel}
\usepackage{subcaption}
\usepackage{graphicx}
\usepackage{booktabs}
\usepackage{nicefrac}
\usepackage{microtype}
\usepackage{xcolor,colortbl}
\usepackage{amsmath}
\usepackage{amsthm}
\usepackage{amssymb}
\usepackage{tikz}
\usetikzlibrary{arrows.meta,decorations.pathreplacing,positioning}
\usepackage{faktor}
\usepackage{makecell}
\usepackage{comment}
\usepackage{wasysym}
\usepackage{float}
\usepackage{mathrsfs}
\usepackage{url}
\usepackage{esint}
\usepackage{bbm}
\usepackage{bm}
\usepackage{hyperref}
\usepackage{cleveref}
\usepackage{etoc}

\hypersetup{
colorlinks = true,
urlcolor   = blue,
citecolor  = blue,
linkcolor  = blue,
}

\DeclareMathOperator{\diver}{div}

\theoremstyle{plain}
\newtheorem{thm}{Theorem}[section]

\newtheorem*{thm*}{Theorem}
\newtheorem{lem}[thm]{Lemma}
\newtheorem{prop}[thm]{Proposition}

\theoremstyle{remark}
\newtheorem{rmk}[thm]{Remark}

\newcommand{\R}{\mathbb{R}}

\newcommand{\T}{\mathbb{T}}

\newcommand{\Z}{\mathbb{Z}}

\newcommand{\eps}{\varepsilon}

\numberwithin{equation}{section}

\usepackage{authblk}

\title{Quantitative Local Convergence of \\ Mean-Field Stein Variational Gradient Flow}
\author{  L\'ena\"ic Chizat, Maria Colombo, Roberto Colombo, Xavier Fern\'andez-Real}

\date{}

\begin{document}
\maketitle

\begingroup
\makeatletter
\renewcommand{\thefootnote}{$\ast$}
\renewcommand{\@makefnmark}{\normalfont\@thefnmark}
\footnotetext{\, Authors are listed in alphabetical order.}
\makeatother
\endgroup

\vspace{-2em}

\begin{abstract}
\vspace{0.5em}
Stein Variational Gradient Descent (SVGD) is a deterministic interacting-particle method for sampling from a target probability measure given access to its score function.
In the mean-field and continuous-time limit, it is known that the flow converges weakly toward the target, but no quantitative rate is known for the last iterate.
In this paper, we establish quantitative local  convergence in strong norms for this dynamics, when the interaction kernel is of Riesz type on the $d$-dimensional torus.
Specifically, assuming that the initial density and the target are smooth and close in $L^2$-norm,  we obtain explicit polynomial convergence rates in $L^2$-norm that depend on the dimension and on the regularity parameters of the kernel, the initialization and the target. 
We further show that these rates are sharp in certain regimes, and support the theory with numerical experiments. 
In the edge case of kernels with a Coulomb singularity, we recover the global exponential convergence result established in prior work.
Our analysis is inspired by recent results on Wasserstein gradient flows of kernel mean discrepancies.
\end{abstract}

\bigskip
\noindent\textbf{MSC:} 35Q68, 35Q62, 35B40, 35Q84.\\
\noindent\textbf{Keywords:} Stein variational gradient descent; mean-field limit; quantitative convergence rates; Riesz kernels.

%\tableofcontents

\section{Introduction}
Stein Variational Gradient Descent (SVGD), introduced in~\cite{liu2016stein}, is a deterministic interacting-particle method for sampling from a target probability measure $\pi\propto e^{-V}$, only requiring access to $\nabla V$.
In the mean-field and continuous-time limit, the distribution of particles converges to a flow $(\rho_t)$ in the space of probability measures that solves a variant of the Fokker-Planck equation with a velocity field smoothed by weighted convolution with a positive definite kernel~\cite{lu2019scaling}. This flow can be interpreted as the gradient flow of the relative entropy $\mathcal{H}(\cdot|\pi)$ with respect to a ``kernelized'' Wasserstein metric~\cite{liu2017stein}. 

The goal of this paper is to investigate the convergence of $(\rho_t)$ towards $\pi$. To this end, we focus on the model case of Riesz kernels of order $s$ on the $d$-dimensional torus $\mathbb{T}^d$. This is a family of translation-invariant kernels whose Fourier coefficients decay as $|\xi|^{-2s}$. The parameter $s$ hence directly controls the ``smoothing strength'' of the interaction; in particular, continuous kernels correspond to $s>d/2$, $C^1$ kernels to $s>(d+1)/2$, and $C^2$ kernels to $s>(d+2)/2$. 

\paragraph{What is known: qualitative weak convergence} The starting point of convergence analyses is the energy dissipation formula~\cite{liu2017stein}
\begin{equation}\label{eq:intro-dissipation}
 \frac{d}{dt}\mathcal{H}(\rho_{t}|\pi)=-I_{s}(\rho_{t}|\pi),
\end{equation}
where $I_s\geq 0$ is the so-called kernel Stein discrepancy (KSD), see~\eqref{eq:stein-dissipation-identity}. 
 In our setting, $I_s(\rho|\pi)$ is comparable to the squared negative Sobolev norm $\Vert \rho-\pi\Vert^2_{{H}^{1-s}}$ (see Lemma~\ref{lem:comparability-stein-H1-s}), which vanishes iff $\rho=\pi$, so $\pi$ is the only equilibrium of the dynamics. Since the flow $(\rho_t)$ is also weakly compact, one may expect that $\rho_t$ converges weakly to $\pi$, which was indeed shown in~\cite[Prop.~2]{korba2020non} for smooth potentials and kernels.  In Proposition~\ref{prop:qualitative-convergence-SVGD}, we give an alternative proof of this qualitative weak convergence under slightly weaker regularity assumptions.

\paragraph{What is not known: quantitative strong convergence} Such a result gives, however, limited information about the finite-time behavior. For instance, one cannot exclude a priori that $I_s(\rho_t|\pi)$ remains small for an arbitrarily long time and then increases significantly.
Several works overcome this difficulty by considering the averaged iterate $\bar \rho_t=\frac1t \int_0^t \rho_{u}du$ 
which, by~\eqref{eq:intro-dissipation} and convexity of $I_s(\cdot|\pi)$, satisfies the quantitative rate
\begin{align}\label{eq:basic-rate}
I_s(\bar \rho_t|\pi)\leq \mathcal{H}(\rho_0|\pi)/t.
\end{align}
The best iterate ${\rho}^*_t$ over $[0,t]$ also satisfies~\eqref{eq:basic-rate}. However, $\bar \rho_t$ and $\rho^*_t$  are impractical to compute and rarely used. Another limitation of this class of results is that $I_s$ is a weak notion of discrepancy, especially in high dimension, and it depends on the choice of kernel, making it difficult to compare the behavior for various kernels. Moreover, there is no indication that the rate $O(1/t)$ is tight.  We therefore ask the following question:

\begin{center}
\emph{Can one obtain sharp quantitative convergence rates for the mean-field SVGD flow in strong norms?}
\end{center}

\paragraph{Main results}
Our main contribution (Theorem~\ref{thm:quantitative-convergence-s>1}) answers this question in a local sense. Namely, for all $s>1$, assuming that the initial density and the target are smooth and close to each other in $L^2$-norm,  we obtain explicit polynomial convergence rates in $L^2$-norm that depend on $s$, $d$, and on the regularity parameters of the initialization and the target. The polynomial rate in Theorem~\ref{thm:quantitative-convergence-s>1} is sharp when the target $\pi$ is uniform. We give in Remark~\ref{rmk:conjecture} a slightly improved rate for general targets, which we conjecture to be sharp in all regimes where it applies.

The edge case $s=1$ corresponds to kernels with a Coulomb singularity on the diagonal\footnote{Namely, the kernel explodes near the diagonal as $\mathrm{dist}(x,y)^{-d+2}$  when $d\geq 3$.}. In this case, both the energy and the energy dissipation are comparable to $\Vert \rho_t -\pi\Vert^2_{L^2}$, so by~\eqref{eq:intro-dissipation} and Gr\"onwall's lemma, one deduces global convergence in $L^2$ at an exponential rate. This result was already established in the inspiring work~\cite{carrillo2024stein} for kernels with a Coulomb singularity and suitable tails on $\mathbb{R}^d$. In Theorem~\ref{thm:quantitative-convergence-s=1}, we recover a compact analogue of their result with a short proof, and in addition, we show the global well-posedness of the flow in $L^\infty$ via a maximum principle. 

Note, however, that the practical implementation of SVGD requires, at least, that the gradient of the kernel is locally bounded, which holds iff $s\ge (d+1)/2$. So in dimension $d\geq 2$, the practically relevant regime is that of Theorem~\ref{thm:quantitative-convergence-s>1}, where the energy dissipation involves a norm weaker than the energy itself, preventing a global \L{}ojasiewicz-type inequality. We overcome this difficulty by showing refined propagation of regularity estimates; see Section~\ref{subsec:idea-proof} for the proof idea. Our analysis is inspired by recent results on Wasserstein gradient flows of kernel mean discrepancies~\cite{chizat2026quantitative}.

From a practical viewpoint, Theorem~\ref{thm:quantitative-convergence-s>1} sheds a new light on the behavior of SVGD. It is the first result that yields quantitative guarantees depending (sharply) on properties of the kernel and the target\footnote{This is, to a certain extent, analogous to the rates expressed under source/capacity conditions in the literature on kernel regression with gradient descent~\cite{yao2007early}. A major difference however is that the dynamical system we study is non-linear.}. In particular, the rates from Remark~\ref{rmk:conjecture} suggest that a good choice of kernel should have $s$ as small as possible while maintaining stability of the discrete scheme, e.g.~$s=(d+1)/2$. This corresponds to kernels behaving like $-\mathrm{dist}(x,y)$ near the diagonal. In particular, Gaussian-like kernels, for which the Fourier coefficients decay super-polynomially, lead to very slow convergence rates in $L^2$, because higher frequency components of $\rho_t-\pi$ dissipate extremely slowly.

\paragraph{Related work} The influential works~\cite{liu2016stein, liu2017stein} introduced SVGD  along with its mean-field limit, energy decay formula~\eqref{eq:intro-dissipation} and gradient flow interpretation. The rigorous mean-field limit was studied in~\cite{lu2019scaling} and several works have investigated its properties~\cite{duncan2023geometry, nusken2023stein,chewi2020svgd}. Other works have focused on the discrete-time dynamics~\cite{salim2022convergence, sun2022note} and/or the particle dynamics~\cite{carrillo2025convergence, banerjee2024improved, he2026finite}. In those settings, the state-of-the-art analyses rely on ~\eqref{eq:basic-rate} or its discrete variants.  We note also that variants of SVGD with noise have been proposed in~\cite{priser2024long, gallego2018stochastic}: the noise tends to facilitate convergence because it makes the dynamics more similar to the overdamped Langevin dynamics, for which unconditional exponential convergence holds. Finally, some undesirable behaviors for SVGD in high dimension are known, such as a ``mode collapse'' phenomenon~\cite{ba2021understanding}. Our work highlights another problem: unless the regularity of the target increases with the dimension, the convergence guarantee in $L^2$ degrades exponentially with the dimension, by the conjectured sharpness of the rate in Remark~\ref{rmk:conjecture}. 

\paragraph{Organization} In Section~\ref{sec:presentation-results}, we detail our setting, state the main results, and discuss their implications. Numerical experiments in synthetic settings are in Section~\ref{sec:numerics}, and Section~\ref{sec:proof} contains the proof, with some  lemmas postponed to the supplementary materials. We conclude in Section~\ref{sec:conclusion}.

\section{Setting and main results}
\label{sec:presentation-results}

\subsection{Setting}\label{sec:setting}
Let $d\ge 1$, and let $\T^{d}$ be the $d$-dimensional torus. For any $s\ge 1$, we denote by $K_{s}$ the Riesz kernel of order $s$ solving $(-\Delta)^{s}K_{s}=\delta_{0}-1$ in the sense of distributions. We refer to~\cite[Appendix~A.1]{chizat2026quantitative} for a detailed description of $K_s$.

Let $V:\T^{d}\to \R$ be a sufficiently regular potential, and consider the associated probability measure 
\begin{equation}\label{eq:def-pi-intermsof-V}
    \pi=\frac{e^{-V}}{Z}\in \mathcal{P}(\T^{d}),\qquad Z:= \int_{\T^{d}}e^{-V}.
\end{equation}
We study solutions to the following equation, which we refer to as the Mean-Field Stein Variational Gradient Flow (mean-field SVGF):
\begin{equation}\label{eq:SVGD}
     \left\{
    \begin{array}{rclll}
         \partial_{t}\rho+\diver(\rho v)&=&0\qquad &\text{in $(0,T)\times \T^{d}$},\\
            v_{t}&=&-K_{s}*\left(\rho_{t}\nabla\log\left(\frac{\rho_{t}}{\pi}\right)\right)\qquad &\forall t\in (0,T),\\
            \rho_{0}&=&\bar\rho.
    \end{array}
    \right.
\end{equation}
Let $\mathcal{H}(\rho|\pi):= \int_{\T^{d}}\log\left(\frac{\rho}{\pi}\right)\rho$ denote relative entropy of $\rho$ with respect to $\pi$. Then, a solution $\rho_{t}$ of \eqref{eq:SVGD} satisfies formally the following dissipation identity:
\begin{equation}\label{eq:stein-dissipation-identity}
    \frac{d}{dt}\mathcal{H}(\rho_{t}|\pi)=-\int_{\T^{d}}\int_{\T^{d}}K_{s}(x-y)\nabla\frac{\rho_{t}}{\pi}(x)\cdot \nabla \frac{\rho_{t}}{\pi}(y)\pi(x)\pi(y)=-I_{s}(\rho_{t}|\pi),
\end{equation}
where the last equality defines $I_{s}(\rho|\pi)$, known as the kernel Stein discrepancy (KSD) or as the Stein relative Fisher information. As detailed in Lemma~\ref{lem:comparability-stein-H1-s}, it holds $I_s(\rho|\pi)\approx \Vert \rho -\pi\Vert^2_{\dot{H}^{1-s}}$ and therefore $I_s$ vanishes iff $\rho=\pi$. In particular, $\pi$ is the unique equilibrium point of the dynamics. 

We begin with the qualitative weak convergence of the dynamics, which is an adaptation of~\cite[Prop.~2]{korba2020non} to our setting. While~\cite{korba2020non} obtains this result by differentiating twice $\mathcal{H}(\cdot|\pi)$ along the flow, and therefore requires the potential $V$ and the kernel $K$ of regularity $C^2$, we use a different proof technique, exploiting the gradient flow structure, which requires less regularity.

\begin{prop}\label{prop:qualitative-convergence-SVGD}
    Let $s>\tfrac{d}{2}$, $V\in C^{1}(\T^{d})$, and let $\rho \in C_{t}\mathcal{P}_{x}$ be a global solution of \eqref{eq:SVGD} satisfying the energy dissipation inequality
    \begin{equation*}
        \mathcal{H}(\rho_{T}|\pi)+\int_{0}^{T}I_{s}(\rho_{t}|\pi)\,dt \le \mathcal{H}(\bar\rho| \pi)\qquad \forall T\ge 0.
    \end{equation*}
    Then $\rho_{t}$ converges weakly to $\pi$ as $t\to \infty$.
\end{prop}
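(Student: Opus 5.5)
The plan is to combine compactness of $\mathcal{P}(\T^d)$ with the integrability of $I_s$ along the flow, and to use the gradient-flow structure to prevent the dissipation from sitting on a small set of times while $\rho_t$ drifts away from $\pi$. Since $\mathcal{P}(\T^d)$ is weakly compact, it suffices to show that every weak limit point of $(\rho_t)$ as $t\to\infty$ equals $\pi$. From the energy dissipation inequality, $\int_0^\infty I_s(\rho_t|\pi)\,dt\le \mathcal{H}(\bar\rho|\pi)<\infty$ and $\sup_t\mathcal{H}(\rho_t|\pi)\le \mathcal{H}(\bar\rho|\pi)$. The first step is to rewrite the dissipation in a form that makes sense for measures. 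Integrating by parts with $\pi\nabla(\rho/\pi)=\nabla\rho+\rho\nabla V$, we get
\begin{equation*}
I_s(\rho|\pi)=\iint k_\pi(x,y)\,d\rho(x)\,d\rho(y),
\end{equation*}
where $k_\pi$ is the Stein kernel built from $K_s$ and $\nabla V$. Equivalently, $I_s(\rho|\pi)=\|K_s^{1/2}*(\nabla\rho+\rho\nabla V)\|_{L^2}^2$, which is a squared $\dot H^{-s}$ norm of the distribution $\nabla\rho+\rho\nabla V$. Because $s>d/2$, $\dot H^{s}\hookrightarrow C^0$, so the map $\rho\mapsto \nabla\rho+\rho\nabla V\in H^{-s}$ is weakly continuous on $\mathcal{P}$ once $V\in C^1$. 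Testing it against $\varphi\in H^{s}$ gives $\int(-\nabla\varphi+\varphi\nabla V)\,d\rho$; this requires a little care since $\nabla\varphi$ is not continuous, so I would instead argue with smooth test functions and lower semicontinuity. The conclusion I aim for is that $I_s(\cdot|\pi)$ is weakly lower semicontinuous on $\mathcal{P}(\T^d)$ and vanishes only at $\pi$. Lemma~\ref{lem:comparability-stein-H1-s} gives uniqueness for densities. For general measures, $\nabla\rho+\rho\nabla V=0$ distributionally forces $\rho=c\pi$ by an elliptic argument for $\nabla(\rho e^{V})=0$, which is fine since $V\in C^1$.

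The second step is equicontinuity in time, which is where the gradient-flow structure enters. The velocity satisfies $\rho v=-\rho\,K_s*(\nabla\rho+\rho\nabla V)$. For a test function $\varphi\in C^1$,
\begin{equation*}
\frac{d}{dt}\int\varphi\,d\rho_t=\int\nabla\varphi\cdot v_t\,d\rho_t ,
\end{equation*}
and since $v_t=-K_s*w_t$ with $\|K_s^{1/2}*w_t\|_{L^2}^2=I_s$, we get $\|v_t\|_{L^\infty}\lesssim \|K_s^{1/2}\|_{L^2}\,I_s^{1/2}$. Here $K_s^{1/2}\in L^2$ precisely because $s>d/2$: its Fourier coefficients $|\xi|^{-s}$ are square summable. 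Hence $W_1(\rho_t,\rho_u)\lesssim\int_u^t I_s^{1/2}\le (t-u)^{1/2}\big(\int_u^t I_s\big)^{1/2}$, which tends to $0$ uniformly over windows $[u,u+1]$ as $u\to\infty$.

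The third step concludes. Suppose $\rho_{t_n}\rightharpoonup\mu$ with $t_n\to\infty$. By the second step, $\rho_{t_n+\tau}\rightharpoonup\mu$ uniformly in $\tau\in[0,1]$. Fatou's lemma and lower semicontinuity then give
\begin{equation*}
I_s(\mu|\pi)\le\liminf_n\int_0^1 I_s(\rho_{t_n+\tau}|\pi)\,d\tau=0,
\end{equation*}
since the tails of a convergent integral vanish. Thus $\mu=\pi$. I expect the main obstacle to be the lower semicontinuity and identification in the first step under only $V\in C^1$ and $s>d/2$, where the kernel $K_s$ is merely continuous and not $C^1$. I would handle it via the dual formulation $I_s^{1/2}=\sup\{\int(-\diver\psi+\psi\cdot\nabla V)\,d\rho:\ \psi\in C^\infty,\ \|\psi\|_{\dot H^{s}}\le1\}$. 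This is a supremum of weakly continuous functionals, so it is lower semicontinuous.
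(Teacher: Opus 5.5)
Your argument is correct and is essentially the paper's: the abstract LaSalle principle with $\mathcal{H}(\cdot|\pi)$ as energy and $I_s(\cdot|\pi)$ as dissipation, the $W_1$ motion estimate, lower semicontinuity via a dual supremum formula, and Fatou on windows $[t_n,t_n+1]$. Your bound $\|v_t\|_{L^\infty}\lesssim\|K_s^{1/2}\|_{L^2}I_s^{1/2}$ is just the dual form of the paper's $\|\rho_t\nabla f\|_{\dot H^{-s}}\lesssim 1$.

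Two small fixes are needed, plus one unnecessary citation:
\begin{itemize}
\item The supremum must be restricted to zero-mean $\psi$. Otherwise it equals $+\infty$ whenever $\int\nabla V\,d\rho\neq0$, since constant $\psi$ have zero $\dot H^{s}$-norm.
\item Since $K_s$ kills the zero Fourier mode, $I_s(\mu|\pi)=0$ only gives $\nabla\mu+\mu\nabla V\equiv c$ for some constant vector $c$, not $c=0$. You need the extra line: integrating $\nabla(\mu e^{V})=c\,e^{V}$ over $\T^d$ forces $c=0$, after which $\mu e^V$ is constant and $\mu=\pi$.
\item Lemma~\ref{lem:comparability-stein-H1-s} is not available under $V\in C^1$ alone, but your argument for general measures makes it unnecessary.
\end{itemize}
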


\subsection{Statement of the main results}\label{sec:statement}

Our main result is the following quantitative local convergence in the case $s>1$:
\begin{thm}[Local polynomial convergence for $s>1$]\label{thm:quantitative-convergence-s>1}
    Let $s>1$ and $\gamma>\max\{\tfrac{d}{2},s-1\}$. Let $V\in H^{\gamma+s}(\T^{d})$ and $\pi\in \mathcal{P}\cap H^{\gamma+s}(\T^{d})$ be defined as in \eqref{eq:def-pi-intermsof-V}. Let $\bar\rho\in \mathcal{P}\cap  H^{\gamma}(\T^{d})$, and let $\rho\in C_{t}\mathcal{P}_{x}\cap L^{\infty}_{t}H^{\gamma}_{x}$ be a solution of \eqref{eq:SVGD} extended up to the maximal time of existence $T>0$. There are constants $C>0$ and $\delta>0$ depending only on $d,s,\gamma$, $\lVert V\rVert_{H^{\gamma+s}}$, and $\lVert \bar\rho\rVert_{H^{\gamma}}$ such that, if $\mathcal{H}(\bar\rho|\pi)\le \delta$, then $T=\infty$, and 
    \begin{equation}\label{eq:polynomial-decay-s>1}
        \mathcal{H}(\rho_{t}|\pi)\le \left(\mathcal{H}(\bar\rho|\pi)^{-\frac{s-1}{\gamma}}+\frac{t}{C}\right)^{-\frac{\gamma}{s-1}}\qquad \forall t\ge 0.
    \end{equation}
    Moreover, for every $\beta\in [0,\gamma]$ we have 
    \begin{equation}\label{eq:interpolation-decay-s>1}
        \lVert \rho_{t}-\pi\rVert^2_{\dot H^{\beta}}\lesssim \lVert \bar\rho-\pi\rVert_{\dot H^{\gamma}}^{2\frac{\beta}{\gamma}} \left(\mathcal{H}(\bar\rho|\pi)^{-\frac{s-1}{\gamma}}+\frac{t}{C}\right)^{-\frac{\gamma-\beta}{s-1}}\qquad \forall t\ge 0.
    \end{equation}
\end{thm}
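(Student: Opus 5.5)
The plan is to combine the dissipation identity \eqref{eq:stein-dissipation-identity} with a uniform-in-time bound on $\lVert\rho_t-\pi\rVert_{\dot H^\gamma}$, and then interpolate. The engine is Lemma~\ref{lem:comparability-stein-H1-s}, which gives $I_s(\rho|\pi)\gtrsim \lVert\rho-\pi\rVert^2_{\dot H^{1-s}}$. We also use that, locally (when $\rho/\pi$ is close to $1$ in $L^\infty$), $\mathcal{H}(\rho|\pi)\approx \lVert \rho-\pi\rVert_{L^2}^2$. The interpolation inequality
\[
\lVert f\rVert_{L^2}\le \lVert f\rVert_{\dot H^{1-s}}^{\frac{\gamma}{\gamma+s-1}}\lVert f\rVert_{\dot H^{\gamma}}^{\frac{s-1}{\gamma+s-1}}
\]
then yields the \L{}ojasiewicz-type inequality
\[
I_s(\rho_t|\pi)\gtrsim \mathcal{H}(\rho_t|\pi)^{1+\frac{s-1}{\gamma}}\,M^{-\frac{2(s-1)}{\gamma}},\qquad M:=\sup_t\lVert\rho_t-\pi\rVert_{\dot H^\gamma}.
\]
Integrating the ODE $\frac{d}{dt}\mathcal{H}\le -c\,\mathcal{H}^{1+(s-1)/\gamma}$ gives \eqref{eq:polynomial-decay-s>1}. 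Finally, \eqref{eq:interpolation-decay-s>1} follows by interpolating between $L^2$, controlled by $\mathcal{H}$, and $\dot H^\gamma$, controlled by $M\lesssim\lVert\bar\rho-\pi\rVert_{\dot H^\gamma}$.

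The core of the argument is therefore a bootstrap on $[0,T^*)$ with two hypotheses: $\lVert\rho_t-\pi\rVert_{\dot H^\gamma}\le 2A$, where $A$ depends on $\lVert\bar\rho\rVert_{H^\gamma}$ and $\lVert V\rVert_{H^{\gamma+s}}$, and $\lVert\rho_t/\pi-1\rVert_{L^\infty}\le 1/2$. Since $\gamma>d/2$, Sobolev embedding together with interpolation between $L^2$ (small, by $\mathcal{H}\le\delta$ and Pinsker/local equivalence) and $\dot H^\gamma$ (bounded) closes the $L^\infty$ smallness as soon as $\delta$ is small. The substantive step is the $\dot H^\gamma$ energy estimate. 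Writing $u=\rho/\pi-1$, the equation is $\partial_t\rho=\diver(\rho\,K_s*(\pi\nabla u))$. I would test against $(-\Delta)^\gamma(\rho-\pi)$, or more naturally against a weighted version in the variable $u$. The leading term is then $-\langle \nabla\Lambda^\gamma u,\pi K_s*(\pi\nabla\Lambda^\gamma u)\rangle$, which is nonpositive and comparable to $-\lVert u\rVert^2_{\dot H^{\gamma+1-s}}$ modulo commutators. The remaining terms are of two kinds:
\begin{itemize}
\item commutators between $\Lambda^\gamma$ and multiplication by $\pi$ and $\rho$, which cost derivatives of $V$ up to order $\gamma+s$ (this is why $V\in H^{\gamma+s}$ is assumed);
\item nonlinear terms that are quadratic in $u$.
\end{itemize}
Using Kato--Ponce commutator estimates and the smoothing of $K_s$ (order $-2s$), I aim to bound all of these by
\[
\tfrac12\lVert u\rVert^2_{\dot H^{\gamma+1-s}}+C\lVert u\rVert_{\dot H^{1-s}}\lVert u\rVert_{\dot H^{\gamma+1-s}}+C\lVert u\rVert_{L^\infty\cap W^{1,\infty}}\lVert u\rVert^2_{\dot H^{\gamma+1-s}}.
\]
The aim is to obtain
\[
\frac{d}{dt}\lVert u\rVert^2_{\dot H^\gamma}\le C\,I_s(\rho_t|\pi)
\]
plus absorbable terms. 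Integrating in time and using $\int_0^\infty I_s\le\mathcal{H}(\bar\rho|\pi)\le\delta$ then gives $\lVert u_t\rVert^2_{\dot H^\gamma}\le\lVert u_0\rVert^2_{\dot H^\gamma}+C\delta$, which closes the bootstrap. The same a priori bound, combined with the local well-posedness and blow-up criterion implicit in the definition of maximal time, gives $T=\infty$.

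The hardest step is precisely this propagation-of-regularity estimate. Lower-order commutator terms must be controlled by the \emph{time-integrable} quantity $\lVert u\rVert^2_{\dot H^{1-s}}\approx I_s$, or by the good term $\lVert u\rVert^2_{\dot H^{\gamma+1-s}}$, and never by $\lVert u\rVert_{L^2}^2$ alone, since the latter is not known to be integrable. My first attempt would expand the commutator terms in Fourier space. I would then interpolate any intermediate norm $\dot H^{\theta}$ with $1-s\le\theta\le\gamma+1-s$ between these two endpoints and apply Young's inequality. The condition $\gamma>s-1$ should be what ensures the intermediate exponents fall in this range. The nonlinear terms require the $W^{1,\infty}$ smallness of $u$. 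If $\gamma>d/2$ alone does not give this, I would instead absorb those terms using $\lVert u\rVert_{L^\infty}$ smallness together with the extra derivative coming from $\nabla K_s$.
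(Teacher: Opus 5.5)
Your outer structure matches the paper: a bootstrap on the $\dot H^\gamma$ norm, the \L{}ojasiewicz inequality from Lemma~\ref{lem:comparability-stein-H1-s} plus interpolation, integration of $\frac{d}{dt}\mathcal{H}\le -c\,\mathcal{H}^{1+(s-1)/\gamma}$, and a final interpolation. But the step you call the core fails. For $s>1$ the nonlinear terms cannot be absorbed into the dissipation, so an estimate $\frac{d}{dt}\lVert u\rVert_{\dot H^\gamma}^2\le C\,I_s(\rho_t|\pi)+(\text{absorbable})$ is out of reach.

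The nonlinearity $\diver(\sigma_t v_t)$ is a transport term. Testing it against $D^{2\gamma}\sigma_t$ produces $\frac12\int_{\T^d}\diver v_t\,|D^\gamma\sigma_t|^2$, plus commutators of the same size, i.e.\ a contribution of order $\lVert\nabla v_t\rVert_{L^\infty}\lVert\sigma_t\rVert_{\dot H^\gamma}^2$. Since $\gamma+1-s<\gamma$, the good term $\lVert\sigma_t\rVert^2_{\dot H^{\gamma+1-s}}$ is a \emph{weaker} norm and cannot absorb this, however small $\lVert u\rVert_{W^{1,\infty}}$ is. At frequency $|k|\sim N$ the linear damping rate is $N^{2-2s}\to 0$, while a Lipschitz field grows $D^\gamma\sigma_t$ at rate $\lVert\nabla v_t\rVert_{L^\infty}$ uniformly in $N$. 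The smoothing of $K_s$ does not help either: it acts on $v_t$, not on the transported quantity $D^\gamma\sigma_t$. So your bound with $C\lVert u\rVert_{L^\infty\cap W^{1,\infty}}\lVert u\rVert^2_{\dot H^{\gamma+1-s}}$ is false, and $\lVert u_t\rVert^2_{\dot H^\gamma}\le\lVert u_0\rVert^2_{\dot H^\gamma}+C\delta$ does not follow. The high norm is genuinely non-monotone.

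The paper keeps this term. After absorbing the lower-order linear commutators as you intended, Lemma~\ref{lem:energy-estimate-higher-order} gives $\frac{d}{dt}\lVert\sigma_t\rVert^2_{\dot H^\gamma}\lesssim L(t)\lVert\sigma_t\rVert^2_{\dot H^\gamma}+I_s(\rho_t|\pi)$, with $L$ as in \eqref{eq:def-L-lipschitz-field}. The missing ingredient is Lemma~\ref{lem:time-integral-Lipschitz-field}: $\int_0^\tau L\,dt\lesssim\mathcal{H}(\bar\rho|\pi)^\beta$. This does not follow from $\int_0^\infty I_s\le\mathcal{H}(\bar\rho|\pi)$, because $L$ is linear in $\sigma_t$ while $I_s$ is quadratic. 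The argument runs as follows:
\begin{itemize}
\item Let $M:=2\lVert\bar\rho-\pi\rVert^2_{\dot H^\gamma}$ be the bootstrap threshold for $\lVert\sigma_t\rVert^2_{\dot H^\gamma}$. Sobolev embedding and interpolation between $\dot H^{1-s}$ and $\dot H^\gamma$ give $L(t)\lesssim\lVert\sigma_t\rVert_{\dot H^r}\lesssim M^{\theta/2}I_s^{(1-\theta)/2}$, with $r=\max\{2-2s+\frac{\gamma}{2}+\frac{d}{4},1-s\}$ and $\theta=\frac{r+s-1}{\gamma+s-1}$.
\item Write $I_s^{(1-\theta)/2}=I_s\cdot I_s^{-(1+\theta)/2}$ and apply the \L{}ojasiewicz inequality to the second factor. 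This gives $L\lesssim -\frac{d}{dt}\mathcal{H}(\rho_t|\pi)^\beta$ with $\beta=\min\{\frac{2\gamma-d}{8\gamma},\frac{\gamma-s+1}{2\gamma}\}$. This $\beta$ is positive precisely because $\gamma>\max\{\frac{d}{2},s-1\}$; that is where this hypothesis really enters.
\item Gr\"onwall then yields $\lVert\sigma_t\rVert_{\dot H^\gamma}^2\le\big(\frac{M}{2}+C\mathcal{H}(\bar\rho|\pi)\big)\exp\big(C\mathcal{H}(\bar\rho|\pi)^\beta\big)<M$ for $\delta$ small, closing the bootstrap.
\end{itemize}
The bootstrap therefore runs in both directions: the \L{}ojasiewicz decay controls the $\dot H^\gamma$ norm, and the $\dot H^\gamma$ bound feeds the \L{}ojasiewicz inequality.

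Your second bootstrap hypothesis, $\lVert\rho_t/\pi-1\rVert_{L^\infty}\le\frac12$, is harmless but unnecessary. The \L{}ojasiewicz step only needs \eqref{eq:rel-entropy-upper-bound-L2}, which uses just $\pi\ge\lambda$.
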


\begin{rmk}[Sharpness of the rate for uniform targets]\label{rmk:uniform-target}
Under a control of the initial datum in $H^\gamma$, the exponent $-\gamma/(s-1)$ is already sharp for uniform targets. Indeed, as in~\cite[Rmk.~1.6]{chizat2026quantitative}, the zero-mean quantity $\sigma_{t}$ that solves the linearized equation
\[
    \partial_t\sigma_t=-(-\Delta)^{1-s}\sigma_t
\]
has Fourier coefficients evolving as
\[
    \widehat\sigma_k(t)=\widehat\sigma_k(0)
    \exp\!\left(-|2\pi k|^{2-2s}t\right).
\]
Taking $\sigma_0^n(x)=\sqrt{2}(2\mathrm{\pi}n)^{-\gamma}\cos(2\mathrm{\pi}n x_1)$ gives
$\|\sigma_0^n\|_{\dot H^\gamma}=1$ and, at time $t=n^{2s-2}$,
\[
    \|\sigma_t^n\|_{L^2}^2
    \gtrsim n^{-2\gamma}
    \simeq t^{-\frac{\gamma}{s-1}} .
\]
Thus neither the squared $L^2$-norm nor, by Lemma~\ref{lem:comparability-entropy-L2}, the relative entropy can decay faster.
\end{rmk}

\begin{rmk}[Conjecture for tight rates]\label{rmk:conjecture} 
By arguing as in~\cite[Remark~4.8]{chizat2026quantitative}, one can replace the assumption $V\in H^{\gamma+s}$ in Theorem~\ref{thm:quantitative-convergence-s>1} by $V\in H^{\gamma+\varepsilon}$ for some $\varepsilon>0$, at least when $\gamma$ and $s$ are integers (we expect the same to be true for fractional exponents, but the proof would require a further technical generalization of the Kato-Ponce estimate in Lemma \ref{lem:kato-ponce}). This implies that if we define $\gamma^* = \sup\{ \gamma : V,\bar \rho \in H^\gamma\}$ and if $\max\{d/2,s-1\}\leq \gamma^*<\infty $, then for any $\varepsilon>0$, it holds
\begin{equation}\label{eq:sharp-rate}
    \mathcal{H}(\rho_t|\pi)\leq \Big(\mathcal{H}(\bar \rho|\pi)^{-\frac{s-1}{\gamma^{*}-\eps}} + \frac{t}{C_\varepsilon}\Big)^{-\frac{\gamma^* - \eps}{s-1}}.
\end{equation}

Motivated by the previous remark and numerical experiments (see Figure~\ref{fig:1D-polynomial} where this rate with $\varepsilon=0$ is reported in dotted lines), we conjecture that the $-\gamma^*/(s-1)$ exponent is sharp. 
\end{rmk}

As discussed in the introduction, the energy dissipation~\eqref{eq:stein-dissipation-identity} directly implies global exponential convergence in the case $s=1$.  This result was previously obtained in~\cite{carrillo2024stein} on $\mathbb{R}^d$, where the control on tail behavior makes the analysis more difficult. Here we give a short proof in the compact case, and in addition, we show the global well-posedness of the flow in $L^\infty$ via a maximum principle:

\begin{thm}[Global exponential convergence for $s=1$]\label{thm:quantitative-convergence-s=1} Let $s=1$ and $\gamma>\tfrac{d}{2}$, $\gamma\ge 1$. Let $V\in H^{\gamma}(\T^{d})$ and $\pi \in \mathcal{P}\cap H^{\gamma}(\T^{d})$ be defined as in \eqref{eq:def-pi-intermsof-V}. Then, there exists $\alpha>0$ depending only on $d$ and $\lVert V\rVert_{H^{\gamma}}$ such that for every $\bar\rho \in \mathcal{P}\cap L^{\infty}(\T^{d})$, there is a unique global-in-time, uniformly bounded solution $\rho\in C_{t}\mathcal{P}_{x}\cap  L^{\infty}_{t,x}$ of \eqref{eq:SVGD}, and
\begin{equation}\label{eq:exponential-decay-s=1}
    \mathcal{H}(\rho_{t}|\pi)\le \mathcal{H}(\bar\rho|\pi)e^{-\alpha t}\qquad \forall t\ge 0.
\end{equation}
\end{thm}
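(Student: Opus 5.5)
The plan has three parts: a maximum principle for global well-posedness in $L^\infty$, a comparison between the entropy and its dissipation, and Gr\"onwall's lemma for the rate.

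\emph{Structure of the velocity.} For $s=1$ the kernel satisfies $-\Delta K_1=\delta_0-1$. Since $\int\rho\nabla\log(\rho/\pi)=\int(\nabla\rho+\rho\nabla V)$ need not vanish, write
\[
v_t=-K_1*\bigl(\nabla\rho_t+\rho_t\nabla V\bigr)=-\nabla K_1*\rho_t-K_1*(\rho_t\nabla V).
\]
Hence $\diver v_t=-\Delta K_1*\rho_t-K_1*\diver(\rho_t\nabla V)$. Using $-\Delta K_1*\rho=\rho-1$, this gives
\[
\diver v_t=-(\rho_t-1)-K_1*\diver(\rho_t\nabla V).
\]

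\emph{Maximum principle.} Along characteristics, $\dot\rho=-\rho\,\diver v=\rho(\rho-1)+\rho\,K_1*\diver(\rho\nabla V)$. This looks like a blow-up-prone Riccati term, which would be the wrong sign. So the first step is to recheck the sign by going back to the $L^2$-gradient-flow form. The Stein flow is $\partial_t\rho=\diver\bigl(\rho\,K_1*(\pi\nabla(\rho/\pi))\bigr)$. With $h=\rho/\pi$, the velocity is $-K_1*(\pi\nabla h)$, and
\[
\diver(K_1*(\pi\nabla h))=K_1*\diver(\pi\nabla h).
\]
For $\pi\equiv1$ this equals $h-\bar h$, so $\dot\rho=-\rho(\rho-1)$ along characteristics. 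This is the good sign and yields $\|\rho_t\|_\infty\le\max(\|\bar\rho\|_\infty,1)$. For general $\pi$, write
\[
\diver(\pi\nabla h)=\pi\Delta h+\nabla\pi\cdot\nabla h,
\qquad
K_1*(\pi\Delta h)=\pi\,K_1*\Delta h+[K_1*,\pi]\Delta h .
\]
The main term gives $-\pi(h-\bar h)$. The commutator $[K_1*,\pi]\Delta h$ and the lower-order term are of order zero in $h$, with operator norm controlled by $\|\pi\|_{C^{1}}$ (or the $H^\gamma$ norm, using $\gamma>d/2$ and a Kato--Ponce / commutator estimate). They can therefore be bounded by $C(1+\|\rho\|_\infty)$ in $L^\infty$. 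The characteristic ODE then becomes $\dot\rho\le-c\rho^2+C\rho(1+\|\rho\|_\infty)$ at the maximum point. I expect the commutator estimate to be the main obstacle: one needs $L^\infty$ control of $K_1*(\nabla\pi\cdot\nabla h)$ using only $\rho\in L^\infty$ and $V\in H^\gamma$, possibly by writing it as $\nabla K_1*(\cdot)$ after integrating by parts, since $\nabla K_1\in L^1$. With a clean commutator bound, the quadratic absorption closes and gives a global $L^\infty$ bound.

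\emph{Existence and uniqueness.} Local existence and uniqueness in $L^\infty$ follow from a standard fixed point on the characteristics, because $v$ is log-Lipschitz when $\rho\in L^\infty$, as in Yudovich / aggregation theory for a Newtonian kernel. The a priori $L^\infty$ bound then extends the solution globally.

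\emph{Decay.} I will prove $I_1(\rho|\pi)\ge\alpha\,\mathcal H(\rho|\pi)$ and then conclude by the dissipation identity and Gr\"onwall. By Lemma~\ref{lem:comparability-stein-H1-s} with $s=1$, $I_1(\rho|\pi)\gtrsim\|\rho-\pi\|_{L^2}^2$, with constants depending on $\|V\|_{H^\gamma}$ through bounds on $\pi$ and $1/\pi$. In the other direction, $\mathcal H(\rho|\pi)\le\int(\rho-\pi)^2/\pi\le\|1/\pi\|_\infty\|\rho-\pi\|_{L^2}^2$ by $\log x\le x-1$. Together these give $\frac{d}{dt}\mathcal H\le-\alpha\mathcal H$, where $\alpha$ depends only on $d$ and $\|V\|_{H^\gamma}$ and not on $\|\bar\rho\|_\infty$. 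This yields~\eqref{eq:exponential-decay-s=1}. The dissipation identity is rigorous here because the $L^\infty$ bound, together with parabolic-type smoothing or an approximation argument, justifies it.
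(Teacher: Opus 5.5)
Your decay step (the comparability $I_1(\rho|\pi)\gtrsim\|\rho-\pi\|_{L^2}^2$ from Lemma~\ref{lem:comparability-stein-H1-s}, the bound $\mathcal H(\rho|\pi)\le\|1/\pi\|_{L^\infty}\|\rho-\pi\|_{L^2}^2$, then Gr\"onwall) is exactly the paper's argument. The gap is in the maximum principle, which is what gives global existence: as you set it up, it does not close. At a maximum point $x_0$ with $M=\max\rho_t$ you have $\partial_t\rho_t(x_0)=-M\,\diver v_t(x_0)$. The main part of $\diver v_t$ contributes $-M^2+O(M)$. If the remainder is only bounded by $C(1+\|\rho_t\|_{L^\infty})$, multiplying by $\rho_t(x_0)=M$ adds $+CM^2+CM$, so you end with $\dot M\le (C-1)M^2+C'M$. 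Since $C$ depends on $\|V\|_{H^\gamma}$ and is not small, the ``quadratic absorption'' fails, and you are left with a Riccati inequality that allows blow-up. Moreover, $\|\pi\|_{C^1}$ is not available under the hypotheses, since $\nabla V\in H^{\gamma-1}$ need not be bounded. What is missing is a bound on the nonlocal term that is strictly sublinear in $\|\rho_t\|_{L^\infty}$.

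Here is how the paper gets it. Write $\pi\nabla(\rho/\pi)=\nabla\sigma+\sigma\nabla V$ with $\sigma=\rho-\pi$. Then directly $\diver v_t=\sigma_t-\diver(-\Delta)^{-1}(\sigma_t\nabla V)$, with no commutator at all. Your first computation was already this, up to a sign slip: $(-\Delta K_1)*\rho=\rho-1$ enters $\diver v_t$ with a plus sign. Now choose $r>d$ with $H^{\gamma-1}\hookrightarrow L^r$ (possible since $\gamma>d/2$, $\gamma\ge 1$), then $p\in(d,r)$ and $q<\infty$ with $1/p=1/r+1/q$. Calder\'on--Zygmund, Sobolev embedding, H\"older and interpolation give
\[
\|\diver(-\Delta)^{-1}(\sigma_t\nabla V)\|_{L^\infty}\lesssim\|\sigma_t\nabla V\|_{L^p}\le\|\nabla V\|_{L^r}\,\|\sigma_t\|_{L^1}^{1/q}\,\|\sigma_t\|_{L^\infty}^{1-1/q}.
\]
Here $\|\sigma_t\|_{L^1}$ is bounded uniformly in time; the paper uses Pinsker and monotonicity of $\mathcal H$, but mass conservation ($\|\sigma_t\|_{L^1}\le 2$) would also do. At the maximum point this yields
\[
\partial_t\rho_t(x_0)\le -M\bigl(M-\max\pi-C(M+\max\pi)^{1-1/q}\bigr),
\]
which is negative once $M$ is large. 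That gives the uniform $L^\infty$ bound and hence global existence. The essential point is to use only $\nabla V\in L^r$ with $r<\infty$. This leaves room to place $\sigma_t$ in $L^q$ with $q<\infty$ and interpolate with the $L^1$ bound, rather than charging the full $\|\rho_t\|_{L^\infty}$.
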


In Theorems \ref{thm:quantitative-convergence-s>1} and \ref{thm:quantitative-convergence-s=1}, we focus on the quantitative convergence, while the existence of a local solution in a suitable class is assumed. However,  by the arguments detailed in \cite[Section 2]{chizat2026quantitative}, the local well-posedness (existence and uniqueness/stability) of weak solutions holds in $L^\infty$, for $s=1$, in $L^{p}$ with $p>\tfrac{d}{2s-2}$ for $s\in (1,\tfrac{d}{2}+1]$, and in the space of measures for $s>\tfrac{d}{2}+1$. In fact, for solutions in such regularity classes, the velocity field $v_t$ in \eqref{eq:SVGD} is Lipschitz for $s>1$, and log-Lipschitz for $s=1$. Of course, local solutions in these classes can then be extended uniquely as long as the corresponding norm remains bounded in time. In particular, the global well-posedness in the case $s=1$ follows from the a priori uniform bound we obtain in Lemma \ref{lem:maximum-principle-s=1}. Moreover, the equation has propagation of H\"{o}lder and Sobolev regularity from the data. 

\paragraph{Extension to other kernels} 
Although the previous statement is formulated for Riesz kernels, the proof only uses structural properties of its Fourier multiplier. In particular, the argument is stable under replacing $(-\Delta)^{-s}$ by a symmetric convolution operator $\mathcal K f = K * f$ with the same order, the same $L^p$ multiplier bounds, and the same leading dissipative part up to lower-order errors, as shown in Theorem~\ref{thm:general_kernels}. As an illustration, the conclusions of Theorem~\ref{thm:quantitative-convergence-s>1} also hold for the ``negative distance-like'' kernel
\begin{equation}\label{eq:negative-distance-kernel}
    K(x)=\left(1-4|x|\right)_+^{d+2},
\end{equation}
in the regime $s=(d+1)/2$, see Remark \ref{rmk:Askey-kernel}. This kernel can be conveniently evaluated in any dimension, in contrast to  Riesz kernels, which have no closed form on $\mathbb{T}^d$.

{
\subsection{Link with Wasserstein gradient flows of KMD and ideas of the proof}\label{subsec:idea-proof}

Our analysis is closely related to the recent work \cite{chizat2026quantitative} on Wasserstein gradient flows of Kernel Mean Discrepancy (KMD, or MMD) functionals. In that setting, given a target measure $\nu$ on a compact manifold $M$, one considers
\begin{equation*}
    \mathscr E^\nu(\mu):=\frac12\int_M\int_M K(x,y)d(\mu-\nu)(x)d(\mu-\nu)(y),
\end{equation*}
and its Wasserstein gradient flow
\begin{equation}\label{eq:PDE-general-intro}
    \partial_{t}\mu_{t}=\diver \left(\mu_{t}\nabla \mathcal{K}\left(\mu_{t}-\nu\right)\right)\ \ \text{in $(0,T)\times M$},\quad\text{where}\quad \mathcal{K}(\eta)(x):=\int_{M}K(x,y)d\eta(y).
\end{equation}
For the Riesz kernels on $\T^d$, $\mathcal K=(-\Delta)^{-s}$ and $\mathscr E^\nu(\mu)=\frac12\lVert \mu-\nu\rVert_{\dot H^{-s}}^2$. The equation studied here has the same active-scalar structure (see
 \eqref{eq:rewriting-velocity} below). In fact, when $V$ is constant, our dynamics coincide exactly with the KMD flow \eqref{eq:PDE-general-intro} with target $\nu=1$; otherwise, there are additional drift terms generated by the fixed background density $\pi$.

There is, however, an important difference in the Lyapunov structure. In the KMD flow, the decreasing energy is the weak Sobolev discrepancy $ \mathscr{E}^\nu(\mu) = \frac12 \lVert \mu_t-\nu\rVert_{\dot H^{-s}}^2$. In the SVGD flow, the decreasing functional is the relative entropy, and the dissipation is the kernel Stein discrepancy, which under the fixed lower bound on $\pi$ satisfies (see Lemma \ref{lem:comparability-stein-H1-s})
\begin{equation*}
    I_s(\rho|\pi)=\left\lVert \pi\nabla\frac{\rho-\pi}{\pi}\right\rVert_{\dot H^{-s}}^2 \approx \lVert \rho-\pi\rVert_{\dot H^{1-s}}^2.
\end{equation*}
This coercivity depends only on the reference density $\pi$, and not on a lower bound for the evolving density $\rho_t$. This is a major difference with respect to the KMD dissipation, where coercivity is tied to lower bounds on $\mu_t$\footnote{Unconditional weak convergence (i.e.~the analog of Proposition~\ref{prop:qualitative-convergence-SVGD}) does not hold for the KMD flow, mainly for this reason.}.

The quantitative decay follows from a local \L{}ojasiewicz inequality along the flow. Combining the previous comparison with the entropy bound \eqref{eq:rel-entropy-upper-bound-L2} in terms of $\| \rho-\pi\|_{L^2}$, and using Sobolev interpolation, for $s>1$ we obtain the following: if $\sigma=\rho-\pi\in H^\gamma$ for some $\gamma> \max\{d/2, s-1\}$, then
\begin{equation}\label{eq:lojasiewicz-intro-SVGD}
    I_s(\rho|\pi)\gtrsim \mathcal{H}(\rho|\pi)^{1+\frac{s-1}{\gamma}}
    \left(\lVert \rho-\pi\rVert_{\dot H^\gamma}^2\right)^{-\frac{s-1}{\gamma}}.
\end{equation}
When $s=1$, the high norm $H^{\gamma}$ drops out: $I_1(\rho|\pi)\gtrsim \lVert \rho-\pi\rVert_{L^2}^2\gtrsim \mathcal{H}(\rho|\pi)$, and the entropy dissipation identity immediately gives exponential decay by Gr\"onwall. The maximum principle in Lemma \ref{lem:maximum-principle-s=1} is then used to keep the solution in the global well-posedness class; the convergence mechanism itself does not require imposing a positive lower bound on $\rho_t$.

For $s>1$, the \L{}ojasiewicz inequality \eqref{eq:lojasiewicz-intro-SVGD} is effective only if the high Sobolev norm $H^{\gamma}$ stays controlled. This is obtained using the strategy of \cite{chizat2026quantitative}: the high Sobolev norm is not expected to dissipate monotonically, but its growth can be controlled. Specifically, Lemma \ref{lem:energy-estimate-higher-order} propagates $\lVert \sigma_t\rVert_{\dot H^\gamma}$ up to the integral of a Lipschitz norm of the velocity field, and Lemma \ref{lem:time-integral-Lipschitz-field} bounds this integral by a small power of the initial entropy. By a bootstrap argument, this shows that the solution remains for all times in a region where a \L{}ojasiewicz inequality holds, yielding the polynomial entropy decay of Theorem \ref{thm:quantitative-convergence-s>1}. The Sobolev decay follows by interpolation.

}

\section{Numerical experiments}\label{sec:numerics}

We perform numerical experiments in two settings. On the 1D torus (see Figure~\ref{fig:1D}), we solve the mean-field PDE~\eqref{eq:SVGD} directly. Since the equation can be written in divergence form, we use an upwind finite-volume scheme that preserves mass and positivity, and the velocity field is computed using multipliers in Fourier domain. We use a grid of $2048$ points and adaptive CFL step sizes.

On the 2D torus (see Figure~\ref{fig:2D}), we instead implement the fully discrete SVGD particle iterations. For computational efficiency, we use the  ``negative-distance'' kernel~\eqref{eq:negative-distance-kernel}
rather than a Riesz kernel, whose evaluation is more costly since it does not have a closed-form expression. This kernel has the same spectral decay as a Riesz kernel with parameter \(s=(d+1)/2\) and our results apply; see Appendix~\ref{app:general-kernel}. The \(L^2\)-error is estimated through a truncated Fourier expansion. In this setting, the Fourier truncation error, together with finite-particle fluctuations, makes the convergence rates harder to read off cleanly but we can still observe our predictions at a more qualitative level\footnote{We stop plotting the curves once discretization errors visibly dominate. These errors may, however, already affect the slopes before that point, since they are not uniform across Fourier modes. In particular, the curves artificially accelerate when the leading residual error is carried by high-frequency modes that are suppressed by the truncated Fourier estimator of the $L^2$-norm.}. We use $2000$ particles, a step-size of $0.05$, and $16\times 16$ lowest Fourier modes for the $L^2$-error estimation.

In both settings, we consider uniform initialization $\bar \rho=1$, and we generate a random potential $V$ of the desired Sobolev regularity by sampling its Fourier coefficients with centered Gaussian weights with the appropriate variance decay. This allows us to directly control the largest value of $\gamma^*$ such that $V \in H^{\gamma^*}$ that appears in the convergence rate~\eqref{eq:sharp-rate}.

\begin{figure}[htb]
    \centering

    \begin{subfigure}[c]{0.38\textwidth}
        \centering
        \includegraphics[width=\textwidth, trim=0cm 0cm 0cm 0cm, clip]{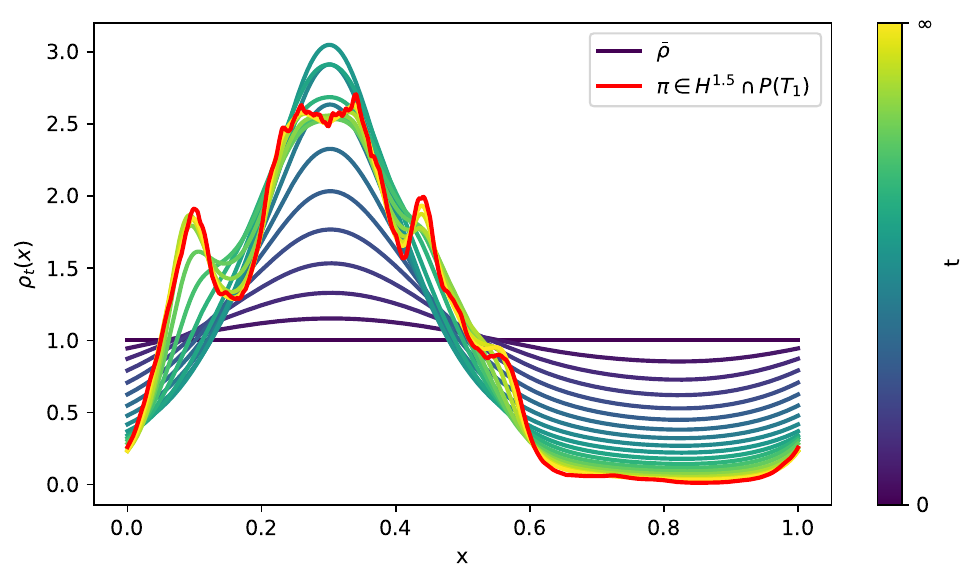}
        \caption{Evolution of the flow ($s=2$)}
        \label{fig:1D-evolution}
    \end{subfigure}
    \hfill
    \begin{subfigure}[c]{0.29\textwidth}
        \centering
        \includegraphics[width=\textwidth]{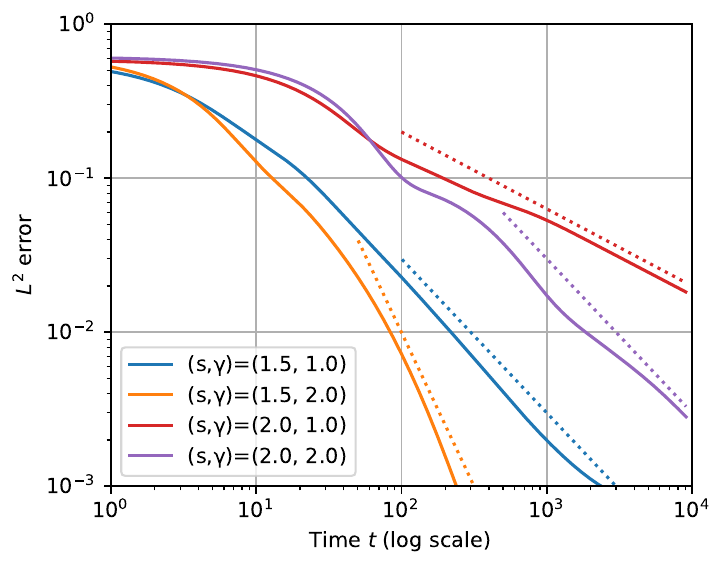}
        \caption{Polynomial rates for $s>1$}
        \label{fig:1D-polynomial}
    \end{subfigure}
    \hfill
    \begin{subfigure}[c]{0.29\textwidth}
        \centering
        \includegraphics[width=\textwidth]{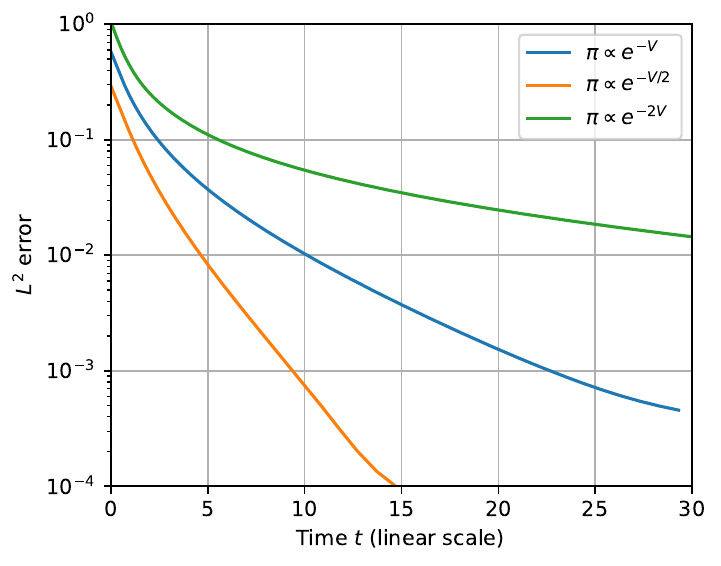}
        \caption{Exponential rates for $s=1$}
        \label{fig:1D-exponential}
    \end{subfigure}

    \caption{Mean-field SVGF in 1D, solved with a finite-volume method (upwind scheme). (a) Evolution of the density for $s=2$ and target in $H^{1.5}$. (b) For $s>1$, the experimental errors (solid lines) and (the square-root of) the theoretical rates of~\eqref{eq:sharp-rate} (dotted lines) are in good agreement, confirming the sharpness of our analysis. (c) For $s=1$, convergence is exponential, with a rate depending on $\Vert V\Vert_{H^{1+\varepsilon}}$ as shown in Theorem~\ref{thm:quantitative-convergence-s=1}. }
    \label{fig:1D}
\end{figure}

\begin{figure}[htb]
    \centering

    \begin{subfigure}[c]{0.30\textwidth}
        \centering
        \raisebox{0.0cm}{%    
        \includegraphics[scale=0.40,trim= 0cm 0cm 0cm 1cm,clip]{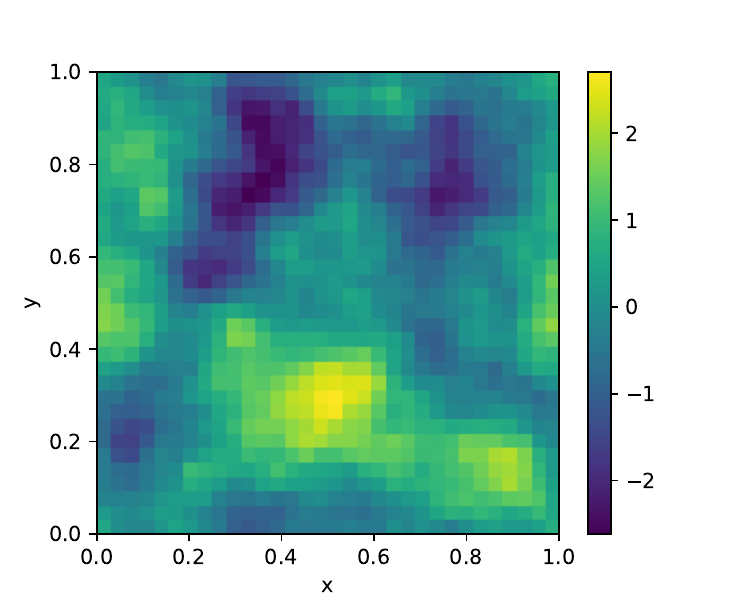}
        }%
        %\vskip -0.1cm
        %\vspace{-0.3cm}%
        \caption{Random potential $V\in H^{1}$}
        \label{fig:2D-V}
    \end{subfigure}
    \hfill
    \begin{subfigure}[c]{0.30\textwidth}
        \centering
        \includegraphics[scale=0.40]{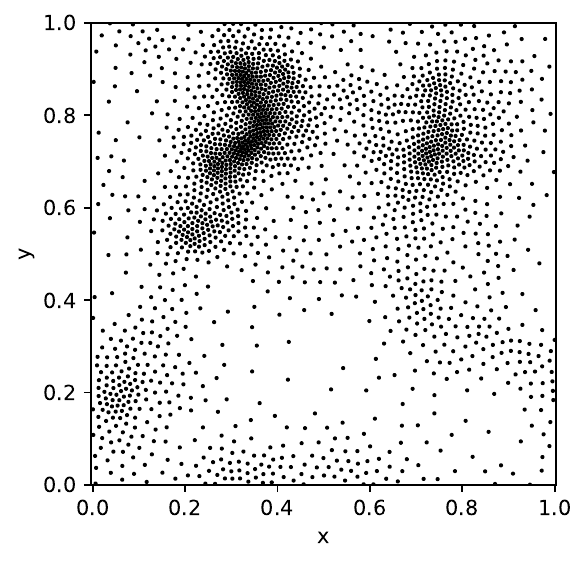}
        \caption{Output of SVGD}
        \label{fig:2D-output}
    \end{subfigure}
    \hfill
    \begin{subfigure}[c]{0.34\textwidth}
        \centering
        \includegraphics[scale=0.39]{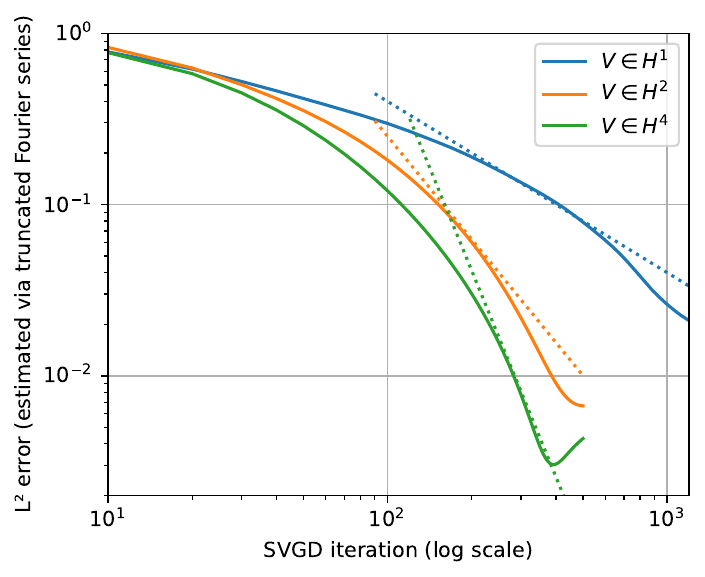}
        \caption{Convergence rates for $s=1.5$}
        \label{fig:2D-error}
    \end{subfigure}

    \caption{SVGD in 2D solved via the interacting particle system, with distance-like kernel ($s=1.5$). (a) A realization of a Gaussian process $V$ with $H^1$ regularity. (b) Output of SVGD for this potential $V$. (c) Convergence of an estimation of $L^2$-error for various target regularities, compared with the (square-root of the) theoretical rates of~\eqref{eq:sharp-rate} (dotted lines). The agreement is weaker than in Figure~\ref{fig:1D}, in particular because of the difficulty to accurately estimate the mean-field $L^2$-error using particles.}
    \label{fig:2D}
\end{figure}

\section{Proof of the main results}\label{sec:proof}

\subsection{Basic estimates on relative entropy and kernel Stein discrepancy}
The Fourier coefficients of a general distribution $f$ on the $d$-torus $\T^{d}$ are given by $\widehat{f}_{k}:=\langle f, e^{2\pi i k\cdot x}\rangle$, where $k\in \Z^{d}$. For any $\beta\in \R$, inhomogeneous and homogeneous $\beta$-Sobolev norms are defined, respectively, as follows:
\begin{equation*}
    \lVert f\rVert_{H^{\beta}}^{2}:= \sum_{k\in \Z^{d}}(1+|2\pi k|^{2})^{\beta}|\widehat{f}_{k}|^{2},\qquad \lVert f\rVert_{\dot H^{\beta}}^{2}:= \sum_{k\in \Z^{d}\setminus \{0\}}|2\pi k|^{2\beta}|\widehat{f}_{k}|^{2}.
\end{equation*}
We denote by $H^{\beta}(\T^{d})$ the space of distributions $f$ for which $\lVert f\rVert_{H^{\beta}}<\infty$. In the sequel we will use the notation $D^{\beta}=(-\Delta)^{\beta/2}$ to denote the multiplier operator defined by the formula 
\begin{equation*}
    D^{\beta}f= \sum_{k\in \Z^{d}\setminus \{0\}}|2\pi k|^{\beta}\widehat{f}_{k}e^{2\pi i k\cdot x}.
\end{equation*}
The operators $D^{\beta}$ are self-adjoint, and commute with standard derivatives. Moreover, homogeneous Sobolev norms can be written as 
\begin{equation*}
    \lVert f\rVert_{\dot H^{\beta}}= \lVert D^{\beta}f\rVert_{L^{2}}.
\end{equation*}
For any $s\ge 1$, the Riesz kernel $K_{s}$ is defined by the expansion
\begin{equation*}
    K_{s}:= \sum_{k\in \Z^{d}\setminus \{0\}}|2\pi k|^{-2s}e^{2\pi i k\cdot x}.
\end{equation*}
In particular, the convolution operator $K_{s}*$ corresponds precisely to $D^{-2s}=(-\Delta)^{-s}$. 

The velocity field generated by a solution $\rho_{t}$ of \eqref{eq:SVGD} can be rewritten in one of the following equivalent forms:
\begin{equation}\label{eq:rewriting-velocity}
    v_{t}= -K_{s}*\left(\rho_{t}\nabla \log\left(\frac{\rho_{t}}{\pi}\right)\right)= -D^{-2s}(\pi \nabla \frac{\rho_{t}-\pi}{\pi})= -\nabla D^{-2s}(\rho_{t}-\pi)-D^{-2s}((\rho_{t}-\pi)\nabla V),
\end{equation}
where the last identity was derived using that $\nabla$ commutes with $D^{-2s}$, and $-\frac{\nabla \pi}{\pi}=\nabla V$.

The first ingredient is the elementary comparability between relative entropy and $L^{2}$-norm of the difference between two densities enjoying suitable uniform bounds:
\begin{lem}\label{lem:comparability-entropy-L2}
    Let $\pi\in \mathcal{P}(\T^{d})$ be such that $0< \lambda\le \pi\ll \mathcal{L}^{d}$. Then, the following bound holds for all $\rho\in \mathcal{P}(\T^{d})$:
    \begin{equation}\label{eq:rel-entropy-upper-bound-L2}
        \mathcal{H}(\rho|\pi)\le\frac{1}{\lambda}\lVert \rho-\pi\rVert_{L^{2}}^{2}.
    \end{equation}
    If in addition $\rho, \pi \le \Lambda$, then
    \begin{equation}\label{eq:rel-entropy-lower-bound-L2}
        \mathcal{H}(\rho|\pi)\ge \frac{1}{2\Lambda}\lVert \rho-\pi\rVert_{L^{2}}^{2}.
    \end{equation}
\end{lem}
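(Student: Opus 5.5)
The plan is to reduce both inequalities to pointwise bounds on the integrand, using the ratio $u:=\rho/\pi$. Since $\rho$ and $\pi$ are both probability measures, we may subtract a zero-mean linear term for free and write
\begin{equation*}
    \mathcal{H}(\rho|\pi)=\int_{\T^{d}}\big(u\log u-u+1\big)\,\pi\,dx .
\end{equation*}
Set $\phi(u):=u\log u-u+1\ge 0$. If $\rho$ is not absolutely continuous with respect to $\pi$, the entropy is $+\infty$, but so is $\lVert\rho-\pi\rVert_{L^2}$ (reading the $L^2$ norm of a non-absolutely-continuous measure as infinite). In that case \eqref{eq:rel-entropy-upper-bound-L2} is trivial, and the hypothesis $\rho\le\Lambda$ in \eqref{eq:rel-entropy-lower-bound-L2} excludes it. So we may assume $\rho\ll\mathcal{L}^{d}$.

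For the upper bound \eqref{eq:rel-entropy-upper-bound-L2}, use the elementary inequality $\phi(u)\le (u-1)^2$ for all $u\ge0$. This follows from $\log u\le u-1$: one gets $u\log u\le u(u-1)$, hence $\phi(u)\le u^2-2u+1$. Integrating against $\pi$ gives
\begin{equation*}
    \mathcal{H}(\rho|\pi)\le\int\frac{(\rho-\pi)^2}{\pi}\,dx,
\end{equation*}
i.e. the entropy is bounded by the $\chi^2$-divergence. Then $\pi\ge\lambda$ yields the factor $1/\lambda$.

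For the lower bound \eqref{eq:rel-entropy-lower-bound-L2}, use the Taylor expansion with integral remainder around $u=1$. Since $\phi(1)=\phi'(1)=0$ and $\phi''(\tau)=1/\tau$,
\begin{equation*}
    \phi(u)=(u-1)^2\int_0^1\frac{1-\theta}{1+\theta(u-1)}\,d\theta\ \ge\ \frac{(u-1)^2}{2\max\{u,1\}} .
\end{equation*}
Multiplying by $\pi$, the integrand becomes
\begin{equation*}
    \pi\,\phi(\rho/\pi)\ \ge\ \frac{(\rho-\pi)^2}{2\,\pi\max\{\rho/\pi,1\}}=\frac{(\rho-\pi)^2}{2\max\{\rho,\pi\}}\ \ge\ \frac{(\rho-\pi)^2}{2\Lambda}.
\end{equation*}
Integrating gives \eqref{eq:rel-entropy-lower-bound-L2}. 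Where $\rho=0$, the integrand equals $\pi=\pi^2/\pi\ge \pi^2/(2\Lambda)$, consistent with the formula.

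There is no real obstacle here. The only point requiring care is the lower bound: the pointwise estimate must use the denominator $\max\{\rho,\pi\}$ rather than $\pi$ alone, since this is exactly where the hypothesis $\rho\le\Lambda$ enters.
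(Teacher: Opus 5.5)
Your proof is correct and follows the paper's argument essentially verbatim. You rewrite $\mathcal{H}(\rho|\pi)=\int\varphi(\rho/\pi)\,\pi$ with $\varphi(r)=r\log r-r+1$, use $\varphi(r)\le(r-1)^2$ and $\pi\ge\lambda$ for the upper bound, and use $\varphi(r)\ge(r-1)^2/(2\max\{r,1\})$, which turns the integrand into $(\rho-\pi)^2/(2\max\{\rho,\pi\})$, for the lower bound. Beyond the paper, you also justify the two elementary inequalities (via $\log r\le r-1$ and Taylor's formula with integral remainder) and dispose of the non-absolutely-continuous case, both of which the paper leaves implicit.
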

\begin{proof}
Since $\rho, \pi$ are both probability measures, we may write
\begin{equation*}
    \mathcal{H}(\rho|\pi)= \int_{\T^{d}} \left(\log\left(\frac{\rho}{\pi}\right)\frac{\rho}{\pi}-\frac{\rho}{\pi}+1\right)\pi.
\end{equation*}
The entropy function $\varphi(r):= \log(r)r-r+1$ satisfies $\varphi(r)\le (r-1)^{2}$. Therefore
\begin{equation*}
    \mathcal{H}(\rho|\pi)\le \int_{\T^{d}} \left(\frac{\rho}{\pi}-1\right)^{2}\pi\le \frac{1}{\lambda}\lVert \rho-\pi\rVert_{L^{2}}^{2}.
\end{equation*}
Moreover, it holds $\varphi(r)\ge \tfrac{1}{2\max\{r,1\}}(r-1)^{2}$. Therefore, if $\rho, \pi\le \Lambda$,
\begin{equation*}
    \mathcal{H}(\rho|\pi)\ge \int_{\T^{d}} \frac{1}{2\max\{\tfrac{\rho}{\pi},1\}}\left(\frac{\rho}{\pi}-1\right)^{2}\pi=\int_{\T^{d}} \frac{(\rho-\pi)^{2}}{2\max\{\rho,\pi\}}\ge \frac{1}{2\Lambda}\lVert \rho-\pi\rVert_{L^{2}}^{2}.\qedhere
\end{equation*}
\end{proof}

We next identify the kernel Stein discrepancy with the homogeneous Sobolev norm naturally associated with the Riesz kernel.

\begin{lem}\label{lem:comparability-stein-H1-s}
Let $s\ge 1$, and let $\pi\in \mathcal{P}\cap H^{\alpha}(\T^{d})$ with $\alpha>\tfrac{d}{2}$ and $\alpha\ge s$. Suppose that $\pi\ge \lambda>0$. Then, the following holds for every $\rho\in \mathcal{P}(\T^{d})$:
    \begin{equation}\label{eq:approx-formula-SFI}
        I_{s}(\rho|\pi)\approx_{d,s,\alpha, \lambda, \lVert \pi\rVert_{H^{\alpha}}} \lVert \rho-\pi\rVert_{\dot H^{1-s}}^{2}.
    \end{equation}
    In particular, if $s>1$ and $\rho-\pi\in H^{\gamma}(\T^{d})$ for some $\gamma>0$, then 
    \begin{equation}\label{eq:coercivity-interpolation-SFI}
        I_{s}(\rho|\pi)\gtrsim_{d,s,\alpha,\gamma,\lambda, \lVert\pi\rVert_{H^{\alpha}}} \mathcal{H}(\rho|\pi)^{1+\frac{s-1}{\gamma}}\left(\lVert \rho-\pi\rVert_{\dot H^{\gamma}}^{2}\right)^{-\frac{s-1}{\gamma}}.
    \end{equation}
\end{lem}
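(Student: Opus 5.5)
Set $\sigma:=\rho-\pi$, which has zero mean. First I will rewrite the discrepancy as a negative Sobolev norm. Since $\nabla\frac{\rho}{\pi}=\nabla\frac{\sigma}{\pi}$ and $K_s*=D^{-2s}$, the definition of $I_s$ in Fourier variables gives
\begin{equation*}
I_s(\rho|\pi)=\Big\lVert \pi\nabla\frac{\sigma}{\pi}\Big\rVert_{\dot H^{-s}}^2 .
\end{equation*}
Then I expand
\begin{equation*}
\pi\nabla\frac{\sigma}{\pi}=\nabla\sigma+\sigma\nabla V=\nabla\sigma-\sigma\,\frac{\nabla\pi}{\pi}.
\end{equation*}

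\textbf{Upper bound.} We have $\lVert\nabla\sigma\rVert_{\dot H^{-s}}\le\lVert\sigma\rVert_{\dot H^{1-s}}$, and it remains to show
\begin{equation*}
\lVert \sigma\nabla V\rVert_{\dot H^{-s}}\lesssim\lVert\sigma\rVert_{\dot H^{1-s}}.
\end{equation*}
By duality, this reduces to bounding $\lVert g\nabla V\rVert_{H^{s-1}}\lesssim\lVert g\rVert_{H^{s}}$ for test functions $g$. Multiplication by $\nabla V$ is bounded on $H^{s-1}$, because $\nabla V\in H^{\alpha-1}$ with $\alpha-1>d/2-1$ and $\alpha-1\ge s-1$. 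I will justify this with a fractional Leibniz/Kato–Ponce product estimate. Here $V=-\log\pi-\log Z$ lies in $H^\alpha$, using $\pi\ge\lambda$ and $\alpha>d/2$ (composition estimate). One subtlety is that $\nabla V$ is only in $H^{\alpha-1}$ with $\alpha-1$ possibly below $d/2$. To avoid this, I would instead write $\sigma\nabla V=\nabla(\sigma V)-V\nabla\sigma$, so that only multiplication by $V\in H^\alpha$, an algebra with $\alpha\ge s$, is needed on $H^{s}$ and $H^{s-1}$ (by duality on $H^{1-s}$).

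\textbf{Lower bound.} This is the main obstacle, since the zeroth-order term could cancel the gradient. I will use the factorization
\begin{equation*}
\pi\nabla\frac{\sigma}{\pi}=\pi\nabla u,\qquad u:=\frac{\sigma}{\pi}.
\end{equation*}
By duality, $\lVert\sigma\rVert_{\dot H^{1-s}}=\sup\int\sigma\phi$ over $\lVert\phi\rVert_{\dot H^{s-1}}\le1$, where $\phi$ can be taken with zero mean. Given such a $\phi$, I solve the weighted elliptic problem
\begin{equation*}
-\diver(\pi\nabla\psi)=\pi\,(\phi-c),\qquad c=\int\phi\,\pi,
\end{equation*}
which is solvable because the right-hand side has zero mean. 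Then
\begin{equation*}
\int\sigma\phi=\int u\,\pi(\phi-c)=\int \pi\nabla u\cdot\nabla\psi\le \Big\lVert\pi\nabla\frac{\sigma}{\pi}\Big\rVert_{\dot H^{-s}}\lVert\nabla\psi\rVert_{\dot H^{s}}.
\end{equation*}
The first equality uses $\int\sigma=0$, so the constant $c$ drops out. It then remains to prove elliptic regularity
\begin{equation*}
\lVert\psi\rVert_{\dot H^{s+1}}\lesssim\lVert\pi(\phi-c)\rVert_{H^{s-1}}\lesssim\lVert\phi\rVert_{H^{s-1}}
\end{equation*}
for the operator $\diver(\pi\nabla\cdot)$ with $\pi\in H^\alpha$, $\alpha\ge s$, $\alpha>d/2$ and $\pi\ge\lambda$. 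This is a standard bootstrap: start from the $H^1$ energy estimate, which uses $\pi\ge\lambda$, and then commute $D^{\beta}$ using Kato–Ponce commutator estimates, raising the regularity step by step up to $s+1$. The constants depend on $\lambda$ and $\lVert\pi\rVert_{H^\alpha}$.

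\textbf{Proof of \eqref{eq:coercivity-interpolation-SFI}.} I interpolate $L^2$ between $\dot H^{1-s}$ and $\dot H^{\gamma}$ on zero-mean functions, using Hölder in Fourier:
\begin{equation*}
\lVert\sigma\rVert_{L^2}\le\lVert\sigma\rVert_{\dot H^{1-s}}^{\theta}\lVert\sigma\rVert_{\dot H^\gamma}^{1-\theta},\qquad \theta=\frac{\gamma}{\gamma+s-1}.
\end{equation*}
Hence
\begin{equation*}
\lVert\sigma\rVert_{\dot H^{1-s}}^2\ge\lVert\sigma\rVert_{L^2}^{2(1+\frac{s-1}{\gamma})}\lVert\sigma\rVert_{\dot H^\gamma}^{-2\frac{s-1}{\gamma}}.
\end{equation*}
Finally, \eqref{eq:rel-entropy-upper-bound-L2} gives $\lVert\sigma\rVert_{L^2}^2\ge\lambda\,\mathcal H(\rho|\pi)$, which yields the claim.
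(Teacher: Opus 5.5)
Your proof is correct, but for the lower bound in \eqref{eq:approx-formula-SFI} you take a different and heavier route than the paper.

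The paper never expands $\pi\nabla\frac{\sigma}{\pi}$. Instead it applies the two-sided multiplier bound of Lemma~\ref{lem:equiv-homogeneous-norms-smooth-multiplier} twice:
\begin{itemize}
\item with $g=\pi$ acting on the mean-free field $\nabla\frac{\sigma}{\pi}$ in $\dot H^{-s}$, which is where $\alpha\ge s$ is used;
\item with $g=\pi^{-1}$ acting on $\sigma$ in $\dot H^{1-s}$.
\end{itemize}
The two are linked by $\lVert\nabla w\rVert_{\dot H^{-s}}=\lVert w\rVert_{\dot H^{1-s}}$, and both directions of the equivalence come out at once. In this factorized form, the cancellation you flag as the main obstacle reduces to one fact: $\dot H^{\beta}$ does not see the zero Fourier mode of $gf$. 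The paper controls that mode in one line from $\widehat f_0=0$, i.e.\ $0=\widehat{(g^{-1}gf)}_0$, together with $\int g^{-1}>0$.

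Your duality argument avoids the zero mode by testing against gradients $\nabla\psi$, which are automatically mean-free. The price is variable-coefficient elliptic regularity $H^{s-1}\to H^{s+1}$ for $-\diver(\pi\nabla\cdot)$ with merely $\pi\in H^{\alpha}$, $\pi\ge\lambda$. That estimate does hold. Write $-\Delta\psi=\pi^{-1}(f+\nabla\pi\cdot\nabla\psi)$ and bootstrap from the $H^1$ energy estimate using $H^{\alpha-1}\cdot H^{t}\subset H^{t'}$ for $t'\le\min\{t,\alpha-1\}$ and $t'<t+\alpha-1-\frac d2$. Each step gains almost $\alpha-\frac d2$ derivatives, and the process reaches $H^{s+1}$ because $\alpha\ge s$. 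So there is no gap, only more machinery than needed.

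Your upper bound and the interpolation step for \eqref{eq:coercivity-interpolation-SFI} match the paper in substance. For the upper bound, the direct product estimate $H^{s}\cdot H^{\alpha-1}\subset H^{s-1}$ (valid since $\alpha>\frac d2$ and $\alpha\ge s$) would already suffice. The rewriting $\sigma\nabla V=\nabla(\sigma V)-V\nabla\sigma$ is therefore optional.
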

\begin{proof}
    We consider the zero-mean measure $\sigma:=\rho-\pi$. Note that, by the definition of KSD, 
    applying Lemma \ref{lem:equiv-homogeneous-norms-smooth-multiplier} twice with smooth multipliers $\pi$, $\pi^{-1}$, and zero-mean functions $\nabla\frac{\sigma}{\pi}$, $\sigma$, respectively, we deduce
    \begin{equation*}
         I_{s}(\rho|\pi)=\lVert \pi \nabla \frac{\sigma}{\pi}\rVert_{\dot H^{-s}}^{2}\approx_{d,s,\alpha,\lVert \pi\rVert_{H^{\alpha}}} \lVert \nabla \frac{\sigma}{\pi}\rVert_{\dot H^{-s}}^{2}=\lVert \frac{\sigma}{\pi}\rVert_{\dot H^{1-s}}^{2}\approx_{d,s,\alpha,\lambda,\lVert \pi\rVert_{H^{\alpha}}} \lVert \sigma\rVert_{\dot H^{1-s}}^{2},
    \end{equation*}
    which gives \eqref{eq:approx-formula-SFI}. 
    Assuming $\rho-\pi\in H^{\gamma}(\T^{d})$ for some $\gamma>0$, \eqref{eq:rel-entropy-upper-bound-L2} and Sobolev interpolation of homogeneous norms give
    \begin{equation*}
        \mathcal{H}(\rho|\pi)
        \lesssim_{\lambda}
        \lVert \sigma\rVert_{L^{2}}^2\le \lVert \sigma\rVert_{\dot H^{1-s}}^{\frac{2\gamma}{\gamma+s-1}}\lVert \sigma\rVert_{\dot H^{\gamma}}^{\frac{2(s-1)}{\gamma+s-1}}
        \lesssim_{d,s, \alpha, \gamma, \lambda, \lVert \pi\rVert_{H^{\alpha}}}
         I_{s}(\rho|\pi) ^{\frac{\gamma}{\gamma+s-1}}\lVert \sigma\rVert_{\dot H^{\gamma}}^{\frac{2(s-1)}{\gamma+s-1}},
    \end{equation*}
    which rewrites as \eqref{eq:coercivity-interpolation-SFI}.
\end{proof}

\subsection{Controls on propagation of regularity}

Let us now state the key technical estimates which ensure that the flow does not lose regularity too fast. Their proofs can be found in Appendix~\ref{app:propagation-of-regularity}. The main a priori estimate is the following high-order energy inequality, which isolates the dissipative linear part from the transport terms.
\begin{lem}\label{lem:energy-estimate-higher-order}
    Let $s\ge 1$ and $\gamma>\tfrac{d}{2}$. Let $V\in H^{\gamma+s}(\T^{d})$ and $\pi\in \mathcal{P}\cap H^{\gamma+s}(\T^{d})$ be defined as in \eqref{eq:def-pi-intermsof-V}. Let $\rho\in C_{t}\mathcal{P}_{x}\cap L_{t}^{\infty}H^{\gamma}_{x}$ be a solution of \eqref{eq:SVGD} defined up to some time $T>0$, and call $\sigma_{t}=\rho_{t}-\pi$. Then, there is a constant $C>0$ depending only on $d,s,\gamma$, and $\lVert V\rVert_{H^{\gamma+s}}$ such that the following estimate holds:
    \begin{equation}\label{eq:energy-estimate-higher-order}
        \frac{d}{dt}\lVert \sigma_{t}\rVert_{\dot H^{\gamma}}^{2}\le CL(t)\lVert \sigma_{t}\rVert_{\dot H^{\gamma}}^{2}-2\big(\min_{\T^{d}}\pi\big)\lVert \sigma_{t}\rVert_{\dot H^{\gamma-s+1}}^{2}+C\lVert \sigma_{t}\rVert_{\dot H^{\gamma-s}}\lVert \sigma_{t}\rVert_{\dot H^{\gamma-s+1}}\qquad \forall t\in (0,T),
    \end{equation}
    where we set
    \begin{equation}\label{eq:def-L-lipschitz-field}
        L(t):=\lVert \nabla^{2}D^{-2s}\sigma_{t}\rVert_{L^{\infty}}+\lVert \nabla D^{-2s}(\sigma_{t}\nabla V)\rVert_{L^{\infty}}.
    \end{equation}
\end{lem}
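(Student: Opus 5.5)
We test the equation for $\sigma_t=\rho_t-\pi$ against $D^{2\gamma}\sigma_t$ and split the velocity according to \eqref{eq:rewriting-velocity}. Since $\pi$ is stationary, $\partial_t\sigma=-\diver(\rho v)=-\diver(\sigma v)-\diver(\pi v)$, with $v=-\nabla D^{-2s}\sigma-D^{-2s}(\sigma\nabla V)$. Hence
\begin{equation*}
\frac12\frac{d}{dt}\lVert\sigma\rVert_{\dot H^\gamma}^2=-\langle D^\gamma\diver(\sigma v),D^\gamma\sigma\rangle-\langle D^\gamma\diver(\pi v),D^\gamma\sigma\rangle .
\end{equation*}
The first pairing is the transport term. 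The second is linear in $\sigma$ and should produce the dissipation $-\min\pi\,\lVert\sigma\rVert_{\dot H^{\gamma-s+1}}^2$ plus lower-order errors.

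\emph{Transport term.} We write $\diver(\sigma v)=v\cdot\nabla\sigma+\sigma\diver v$. For $v\cdot\nabla\sigma$ we use the commutator form: $\langle D^\gamma(v\cdot\nabla\sigma),D^\gamma\sigma\rangle=\langle[D^\gamma,v\cdot\nabla]\sigma,D^\gamma\sigma\rangle-\frac12\int\diver v\,|D^\gamma\sigma|^2$. The Kato--Ponce commutator estimate (Lemma~\ref{lem:kato-ponce}) gives
\begin{equation*}
\lVert[D^\gamma,v\cdot\nabla]\sigma\rVert_{L^2}\lesssim\lVert\nabla v\rVert_{L^\infty}\lVert\sigma\rVert_{\dot H^\gamma}+\lVert D^\gamma v\rVert_{L^2}\lVert\nabla\sigma\rVert_{L^\infty}.
\end{equation*}
Here $\lVert\nabla v\rVert_{L^\infty}\le L(t)$. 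The term $\sigma\diver v$ is handled by the fractional Leibniz rule, which gives $\lVert\sigma\rVert_{L^\infty}\lVert D^{\gamma}\diver v\rVert_{L^2}+\lVert\diver v\rVert_{L^\infty}\lVert\sigma\rVert_{\dot H^\gamma}$.

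The delicate pieces are those where $D^\gamma$ or $D^{\gamma+1}$ lands on $v$, which is roughly $D^{\gamma+2-2s}\sigma$, paired against a low norm of $\sigma$. For $s\ge1$ these have order at most that of $\sigma$ in $\dot H^\gamma$. They are therefore bounded by $\lVert\sigma\rVert_{\dot H^\gamma}$ times quantities like $\lVert\sigma\rVert_{L^\infty}$ or $\lVert\nabla\sigma\rVert_{L^\infty}$, which are not literally $L(t)$.

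I expect these terms to be the main obstacle. I would first try to avoid putting $\nabla\sigma$ in $L^\infty$ by symmetrizing: write $\sigma\diver v$ as the product of $\sigma$ with a multiplier of $\sigma$ of order $2-2s\le0$, and use the symmetric Kato--Ponce form so that only $\lVert\nabla^2D^{-2s}\sigma\rVert_{L^\infty}$ and $\lVert\sigma\rVert_{L^\infty}$ appear. If this fails, a fallback is to absorb such terms into the lower-order contribution $C\lVert\sigma\rVert_{\dot H^{\gamma-s}}\lVert\sigma\rVert_{\dot H^{\gamma-s+1}}$ or into $CL\lVert\sigma\rVert_{\dot H^\gamma}^2$ using $\gamma>d/2$ and interpolation.

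\emph{Linear term.} We compute $-\diver(\pi v)=\diver(\pi\nabla D^{-2s}\sigma)+\diver(\pi D^{-2s}(\sigma\nabla V))$. The leading part is $\pi\Delta D^{-2s}\sigma=-\pi D^{2-2s}\sigma$, which gives
\begin{equation*}
-\langle D^\gamma(\pi D^{2-2s}\sigma),D^\gamma\sigma\rangle=-\int\pi|D^{\gamma+1-s}\sigma|^2-\langle[D^{\gamma+1-s},\pi]D^{1-s}\sigma,D^{\gamma+1-s}\sigma\rangle .
\end{equation*}
The first term is at most $-\min\pi\lVert\sigma\rVert_{\dot H^{\gamma+1-s}}^2$. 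For the commutator we use Kato--Ponce with $\pi\in H^{\gamma+s}$; it gains one derivative, so it is bounded by $C\lVert\sigma\rVert_{\dot H^{\gamma-s}}\lVert\sigma\rVert_{\dot H^{\gamma+1-s}}$. The other pieces, $\nabla\pi\cdot\nabla D^{-2s}\sigma$ and $\diver(\pi D^{-2s}(\sigma\nabla V))$, have order $1-2s\le\gamma-s$ relative to $\sigma$ after pairing. They are controlled by product estimates in $H^{\gamma+s}$ and yield the same error term $C\lVert\sigma\rVert_{\dot H^{\gamma-s}}\lVert\sigma\rVert_{\dot H^{\gamma-s+1}}$.

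Multiplying by two gives \eqref{eq:energy-estimate-higher-order}. All constants depend only on $d,s,\gamma$ and $\lVert V\rVert_{H^{\gamma+s}}$, which also controls $\pi$, $\min\pi$ and $\lVert\pi\rVert_{H^{\gamma+s}}$.
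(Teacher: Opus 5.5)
Your treatment of the linear term is sound and parallels the paper's ${\rm Lin}_1$--${\rm Lin}_3$, up to a slip in the exponents. Moving $D^{s-1}$ across the pairing gives $-\int\pi|D^{\gamma+1-s}\sigma|^2-\langle[D^{\gamma+s-1},\pi]D^{2-2s}\sigma,D^{\gamma+1-s}\sigma\rangle$, not a commutator $[D^{\gamma+1-s},\pi]D^{1-s}\sigma$. The paper instead keeps divergence form and commutes $D^{\gamma+s}$ with $\pi$ acting on $\nabla D^{-2s}\sigma$.

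The genuine gap is the nonlinear term, which is the actual content of the lemma, and you leave it open. Splitting $\diver(\sigma v)=v\cdot\nabla\sigma+\sigma\diver v$ and using the $L^\infty\times L^2$ Kato--Ponce bounds leaves you with $\lVert D^\gamma v\rVert_{L^2}\lVert\nabla\sigma\rVert_{L^\infty}$ and $\lVert\sigma\rVert_{L^\infty}\lVert D^\gamma\diver v\rVert_{L^2}$. Under $\gamma>d/2$ alone, $\lVert\nabla\sigma\rVert_{L^\infty}$ need not even be finite. For $s>1$, neither factor is controlled by $L(t)$, which only involves $\nabla^2D^{-2s}\sigma$, a quantity of negative order $2-2s$. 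Your first fallback cannot work: these terms are cubic in $\sigma$ and contain the top-order factor $\lVert\sigma\rVert_{\dot H^\gamma}$. So they cannot be absorbed into the quadratic lower-order term $C\lVert\sigma\rVert_{\dot H^{\gamma-s}}\lVert\sigma\rVert_{\dot H^{\gamma-s+1}}$ with $C$ independent of $\sigma$. The symmetrized version still produces $\lVert\sigma\rVert_{L^\infty}$, which has the same defect.

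The missing step is the interpolation you allude to, and it needs a specific arrangement. Write $w:=D^{-2s}(\pi\nabla\frac{\sigma}{\pi})=-v$. The paper does not split $\diver(\sigma v)$; it writes the nonlinear term as $-\int D^{\gamma+1}(\sigma w)\cdot\nabla D^{\gamma-1}\sigma$ and peels off three pieces:
\begin{itemize}
\item the exact transport part $\frac12\int\diver w\,(D^\gamma\sigma)^2$;
\item the commutator $D(wD^\gamma\sigma)-wD^{\gamma+1}\sigma$, bounded via \eqref{eq:kato-ponce-gamma-1} by $\lVert\nabla w\rVert_{L^\infty}\lVert\sigma\rVert_{\dot H^\gamma}$;
\item the commutator $D^{\gamma+1}(\sigma w)-wD^{\gamma+1}\sigma$, in which no derivative falls on the $\sigma$ factor.
\end{itemize}
For the last one, \eqref{eq:kato-ponce} with the non-endpoint exponents $p=2\frac{\gamma+2s-2}{2s-2}$, $q=2\frac{\gamma+2s-2}{\gamma}$ gives $\lVert\nabla w\rVert_{L^\infty}\lVert\sigma\rVert_{\dot H^\gamma}+\lVert\sigma\rVert_{L^p}\lVert D^{\gamma+1}w\rVert_{L^q}$. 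Fractional Gagliardo--Nirenberg then interpolates $\lVert\sigma\rVert_{L^p}$ between $\lVert\nabla^2D^{-2s}\sigma\rVert_{L^\infty}$ and $\lVert\sigma\rVert_{\dot H^\gamma}$. It also interpolates $\lVert D^{\gamma+1}w\rVert_{L^q}$ between $\lVert\nabla w\rVert_{L^\infty}$ and $\lVert D^{\gamma-1}(\pi\nabla\frac{\sigma}{\pi})\rVert_{L^2}\lesssim\lVert\sigma\rVert_{\dot H^\gamma}$. The exponents $2/q$ and $2/p$ match, so Young's inequality gives exactly $L(t)\lVert\sigma\rVert_{\dot H^\gamma}$ (recall $\lVert\nabla w\rVert_{L^\infty}\le L(t)$).

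Getting this precise $L(t)$ is not cosmetic. Lemma~\ref{lem:time-integral-Lipschitz-field} integrates it in time through $L(t)\lesssim\lVert\sigma\rVert_{\dot H^r}$ with $r<\gamma$. That bound is not available for a quantity like $\lVert\nabla\sigma\rVert_{L^\infty}$ under $\gamma>\max\{d/2,s-1\}$ alone.
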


It remains to control the cumulative effect of the Lipschitz norm $L(t)$.

\begin{lem}\label{lem:time-integral-Lipschitz-field}
    Under the same assumptions and using the same notation as in Lemma \ref{lem:energy-estimate-higher-order}, assume further that $\gamma>s-1$ and $\lVert \sigma_{t}\rVert_{\dot H^{\gamma}}^{2}\le M$ for all $t\in [0,T)$. Then, there is $\beta=\beta(d,s,\gamma)>0$ such that 
    \begin{equation*}
        \int_{0}^{T}L(t)\,dt\lesssim_{d,s,\gamma,\lVert V\rVert_{H^{\gamma+s}}, M} \mathcal{H}(\bar\rho|\pi)^{\beta}.
    \end{equation*}
\end{lem}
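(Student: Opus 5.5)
The plan is to bound $L(t)$ pointwise in time by a product of a weak norm and a strong norm of $\sigma_t$, and then to integrate in time using the entropy decay rate coming from the Łojasiewicz inequality \eqref{eq:coercivity-interpolation-SFI}, which is available under the uniform bound $\lVert\sigma_t\rVert_{\dot H^\gamma}^2\le M$.

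\textbf{Step 1: pointwise bound on $L(t)$.} For the first term of \eqref{eq:def-L-lipschitz-field}, the Sobolev embedding $\dot H^{d/2+\eps}\hookrightarrow L^\infty$ on zero-mean functions gives
\[
\lVert \nabla^2 D^{-2s}\sigma\rVert_{L^\infty}\lesssim \lVert\sigma\rVert_{\dot H^{2-2s+d/2+\eps}}.
\]
For the second term, Lemma \ref{lem:equiv-homogeneous-norms-smooth-multiplier}, or a Kato--Ponce/product estimate with $\nabla V\in H^{\gamma+s-1}$ and $\gamma+s-1>d/2$, should give
\[
\lVert \nabla D^{-2s}(\sigma\nabla V)\rVert_{L^\infty}\lesssim \lVert\sigma\rVert_{H^{1-2s+d/2+\eps}}.
\]
This is the weaker requirement, up to handling the mean of $\sigma\nabla V$, which is controlled by $\lVert\sigma\rVert_{\dot H^{-m}}$ for any $m$ by smoothness of $V$. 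Set $r:=2-2s+d/2+\eps$. If $r\le 1-s$, then $L(t)\lesssim\lVert\sigma_t\rVert_{\dot H^{1-s}}$. Otherwise $r<\gamma$ for small $\eps$, since $\gamma>d/2$ and $s>1$. In that case interpolation between $\dot H^{1-s}$ and $\dot H^\gamma$ gives
\[
L(t)\lesssim \lVert\sigma_t\rVert_{\dot H^{1-s}}^{\theta}M^{(1-\theta)/2},
\qquad \theta=\frac{\gamma-r}{\gamma+s-1}\in(0,1].
\]
By Lemma \ref{lem:comparability-stein-H1-s}, this becomes $L(t)\lesssim_M I_s(\rho_t|\pi)^{\theta/2}$.

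\textbf{Step 2: time integration.} Write $h(t)=\mathcal{H}(\rho_t|\pi)$, so that $h'=-I_s$. Combining \eqref{eq:coercivity-interpolation-SFI} with the bound $M$ gives $I_s\gtrsim_M h^{1+\kappa}$, where $\kappa=(s-1)/\gamma$. Integrating this differential inequality yields
\[
h(t)\lesssim (h_0^{-\kappa}+ct)^{-1/\kappa}.
\]
The integral $\int_0^T I^{\theta/2}$ is not directly controlled by $\int I\le h_0$ when $\theta<2$, so I would split time dyadically. On $[\tau,2\tau]$, Hölder's inequality gives
\[
\int_\tau^{2\tau} I^{\theta/2}\le \tau^{1-\theta/2}\Bigl(\int_\tau^{2\tau}I\Bigr)^{\theta/2}\le \tau^{1-\theta/2}h(\tau)^{\theta/2}.
\]
Inserting the decay of $h(\tau)$ makes this roughly
\[
\tau^{1-\theta/2}\min\bigl\{h_0,(c\tau)^{-1/\kappa}\bigr\}^{\theta/2}.
\]
Summing over dyadic $\tau\ge t_0:=h_0^{-\kappa}$, the sum converges provided
\[
\frac{\theta}{2\kappa}>1-\frac{\theta}{2},\quad\text{i.e.}\quad \theta\,(1+\kappa)>2\kappa .
\]
The initial block $[0,t_0]$ contributes at most $t_0^{1-\theta/2}h_0^{\theta/2}$. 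Both contributions come out as
\[
h_0^{\theta/2-\kappa(1-\theta/2)}=h_0^{\beta},
\]
and $\beta>0$ exactly under the same condition.

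\textbf{Main obstacle.} The crux is checking $\beta>0$, i.e. $\theta(1+\kappa)>2\kappa$. Using $\theta=(\gamma-r)/(\gamma+s-1)$ and $1+\kappa=(\gamma+s-1)/\gamma$, this reduces to
\[
\gamma-r>2(s-1),\quad\text{i.e.}\quad \gamma>d/2+\eps .
\]
This is exactly the hypothesis $\gamma>d/2$, which is reassuring. The case $r\le 1-s$ gives $\theta=1$, and then $\beta>0$ reduces to $\kappa<1$, i.e. $\gamma>s-1$. This is where the extra assumption $\gamma>s-1$ enters. I would also double-check the handling of the $\nabla V$ term for fractional $\gamma$ using Lemma \ref{lem:kato-ponce}.
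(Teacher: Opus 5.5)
Your proof is correct and gives exactly the paper's exponent. Write $\kappa=(s-1)/\gamma$ and $\theta'=1-\theta$ for the paper's interpolation exponent. Your $\beta=\tfrac{\theta}{2}-\kappa(1-\tfrac{\theta}{2})$ then equals the paper's $1-\tfrac{(1+\theta')(1+\kappa)}{2}$. With the paper's choice $\eps=(\gamma-\tfrac d2)/2$, i.e. $r=\max\{2-2s+\tfrac{\gamma}{2}+\tfrac d4,\,1-s\}$, your two cases give $\beta=\min\{\tfrac{2\gamma-d}{8\gamma},\tfrac{\gamma-s+1}{2\gamma}\}$.

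Step 1 is the same as in the paper; the difference is in Step 2. The paper neither integrates the \L{}ojasiewicz inequality nor splits time dyadically. It writes $L\lesssim I_s^{\theta/2}=I_s\cdot I_s^{\theta/2-1}$ and applies $I_s\gtrsim h^{1+\kappa}$ to the \emph{negative} power, giving $I_s^{\theta/2-1}\lesssim h^{(1+\kappa)(\theta/2-1)}=h^{\beta-1}$. Since $-h'h^{\beta-1}=-\beta^{-1}\tfrac{d}{dt}h^{\beta}$, this yields $\int_0^T L\lesssim h_0^\beta/\beta$ in one line.

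Your H\"older-on-dyadic-blocks argument is the discrete analogue of this chain-rule trick. It is longer, since it needs the explicit decay law, a geometric sum, and truncation at $T$. In exchange, it makes explicit the balance between the H\"older factor $\tau^{1-\theta/2}$ and the decay of $h(\tau)$. It also uses only the integrated decay $h(\tau)\lesssim\min\{h_0,\tau^{-1/\kappa}\}$ together with $\int_\tau^{2\tau}I_s\le h(\tau)$, rather than the pointwise inequality.

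One small correction in Step 1 concerns the $\nabla V$ term. You generally cannot bound it at the exponent $1-2s+\tfrac d2+\eps$. Lemma~\ref{lem:equiv-homogeneous-norms-smooth-multiplier}, with multiplier $\nabla V\in H^{\gamma+s-1}$, only covers exponents $b$ with $|b|\le\gamma+s-1$; this fails for instance when $d=1$, $s=3$, $\gamma=2.1$. Also, $V\in H^{\gamma+s}$ is not smooth, so ``any $m$'' is not available for the mean.

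Neither point is needed. $D^{-2s}$ annihilates the zero mode, so the mean is irrelevant. Applying the lemma at $b=r$ (respectively $b=1-s$), where $|b|\le\max\{\gamma,s-1\}\le\gamma+s-1$, bounds the term by $\lVert\sigma_t\rVert_{\dot H^{r}}$ (respectively $\lVert\sigma_t\rVert_{\dot H^{1-s}}$). That is all you use later, and it is what the paper does.
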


\subsection{Proof of the main results}

We are now ready to prove our main result, Theorem \ref{thm:quantitative-convergence-s>1}:
\begin{proof}[Proof of Theorem \ref{thm:quantitative-convergence-s>1}] Let $T>0$ be the maximal time of existence in $C_{t}\mathcal{P}_{x}\cap L^{\infty}_{t}H^{\gamma}_{x}$ of the solution $\rho_{t}$, and let $\sigma_{t}:=\rho_{t}-\pi$. We call $M:=2\lVert \bar\rho-\pi\rVert_{\dot H^{\gamma}}^{2}$ and consider the first time $\tau>0$ such that $\lVert \sigma_{t}\rVert_{\dot H^{\gamma}}^{2}$ exceeds $M$, i.e. 
\begin{equation*}
    \tau:=\sup\left\{t\in (0,T]:\lVert \sigma_{r}\rVert_{\dot H^{\gamma}}^{2}< M,\,\, \forall r\in [0,t)\right\}>0.
\end{equation*}
Combining \eqref{eq:stein-dissipation-identity} with \eqref{eq:coercivity-interpolation-SFI}, from the definition of $\tau$ we deduce that 
\begin{equation*}
    \frac{d}{dt}\mathcal{H}(\rho_{t}|\pi)\lesssim_{d,s,\gamma,\lVert V\rVert_{H^{\gamma+s}},M} -\mathcal{H}(\rho_{t}|\pi)^{1+\frac{s-1}{\gamma}}\qquad \forall t\in (0,\tau),
\end{equation*}
which integrated yields
\begin{equation*}
    \mathcal{H}(\rho_{t}|\pi)\le \left(\mathcal{H}(\bar\rho|\pi)^{-\frac{s-1}{\gamma}}+\frac{t}{C}\right)^{-\frac{\gamma}{s-1}}\qquad \forall t\in [0,\tau),
\end{equation*}
where $C=C(d,s,\gamma,\lVert V\rVert_{H^{\gamma+s}}, M)>0$. The thesis of the Theorem would follow if we could prove that $\tau=T$. In fact, in this case, by the definition of $\tau$, the solution would be uniformly bounded in $H^{\gamma}(\T^{d})$ up to the maximal time of existence, thus forcing $\tau=T=\infty$.

First observe that for any given $\eps>0$, Sobolev interpolation of homogeneous norms, together with Young's inequality, yields
\begin{equation*}
    \lVert \sigma_{t}\rVert_{\dot H^{\gamma-s}}\lVert \sigma_{t}\rVert_{\dot H^{\gamma-s+1}}\le \eps \lVert \sigma_{t}\rVert_{\dot H^{\gamma-s+1}}^{2}+C(\gamma,s,\eps)\lVert \sigma_{t}\rVert_{\dot H^{1-s}}^{2}.
\end{equation*}
Choosing $\eps$ sufficiently small depending only on $d,s,\gamma$ and $\lVert V\rVert_{H^{\gamma+s}}$ we may then absorb the third positive term in the right-hand side of \eqref{eq:energy-estimate-higher-order} in the second negative one, up to adding a multiple of $\lVert \sigma_{t}\rVert_{\dot H^{1-s}}^{2}$. Using also \eqref{eq:approx-formula-SFI}, we may then rewrite \eqref{eq:energy-estimate-higher-order} as follows:
\begin{equation}\label{eq:higher-order-energy-estimate-simplified}
    \frac{d}{dt}\lVert \sigma_{t}\rVert_{\dot H^{\gamma}}^{2}\lesssim_{d,s,\gamma,\lVert V\rVert_{H^{\gamma+s}}} L(t)\lVert \sigma_{t}\rVert_{\dot H^{\gamma}}^{2}+I_{s}(\rho_{t}|\pi)\qquad \forall t\in (0,\tau),
\end{equation}
where $L(t)$ is as in \eqref{eq:def-L-lipschitz-field}. In addition, by Lemma \ref{lem:time-integral-Lipschitz-field}, for some $\beta=\beta(d,s,\gamma)>0$ we have
\begin{equation}\label{eq:control-integral-lip-vector-field}
    \int_{0}^{\tau}L(t)\,dt\lesssim_{d,s,\gamma,\lVert V\rVert_{H^{\gamma+s}},M} \mathcal{H}(\bar\rho|\pi)^{\beta}.
\end{equation}
Gr\"onwall's inequality applied to \eqref{eq:higher-order-energy-estimate-simplified} then yields, for all $t\in [0,\tau)$, 
\begin{align*}
    \lVert \sigma_{t}\rVert_{\dot H^{\gamma}}^{2}&\le  \lVert \bar\rho-\pi\rVert_{\dot H^{\gamma}}^{2}\exp\left(C\int_{0}^{t}L(r)\,dr\right)+C\int_{0}^{t}I_{s}(\rho_{r}|\pi)\exp\left(C\int_{r}^{t}L(u)\,du\right)dr\\
    &\le \left(\lVert \bar\rho-\pi\rVert_{\dot H^{\gamma}}^{2}+C\int_{0}^{t}I_{s}(\rho_{r}|\pi)\,dr\right)\exp\left(C\int_{0}^{t}L(r)\,dr\right)\\
    &\le \left(\frac{M}{2}+C\mathcal{H}(\bar\rho|\pi)\right)\exp\left(C\mathcal{H}(\bar\rho|\pi)^{\beta}\right),
\end{align*}
where in the third step we used \eqref{eq:control-integral-lip-vector-field}, the dissipation identity \eqref{eq:stein-dissipation-identity}, and the definition of $M$. If $\mathcal{H}(\bar\rho|\pi)$ is chosen sufficiently small, then the right-hand side of the expression above is strictly less than $M$. By the definition of $\tau$, this implies that $\tau=T$, concluding the proof of global existence and \eqref{eq:polynomial-decay-s>1}. Finally, observe that by Lemma \ref{lem:comparability-entropy-L2}, $\mathcal{H}(\rho_{t}|\pi)\approx \lVert \sigma_{t}\rVert_{L^{2}}^{2}$, therefore \eqref{eq:interpolation-decay-s>1} follows directly from Sobolev interpolation.
\end{proof}

In the edge case $s=1$, once global well-posedness is established, the exponential convergence follows directly as follows.

\begin{proof}[Proof of Theorem \ref{thm:quantitative-convergence-s=1}]
    The solution is global thanks to Lemma \ref{lem:maximum-principle-s=1}. Then, by \eqref{eq:stein-dissipation-identity}, \eqref{eq:approx-formula-SFI}, and \eqref{eq:rel-entropy-upper-bound-L2}, we find
    \begin{equation*}
        \frac{d}{dt}\mathcal{H}(\rho_{t}|\pi)=-I_{s}(\rho_{t}|\pi)\lesssim_{d,\gamma, \lVert V\rVert_{H^{\gamma}}} -\lVert \rho_{t}-\pi\rVert_{L^{2}}^{2}\le - \big(\min_{\T^{d}}\pi \big)\mathcal{H}(\rho_{t}|\pi)\qquad \forall t>0.
    \end{equation*}
    Therefore, \eqref{eq:exponential-decay-s=1} follows from Gr\"onwall's inequality. 
\end{proof}

\section{Conclusion and limitations}\label{sec:conclusion}

We have provided a sharp local convergence analysis of SVGD in the continuous-time mean-field limit. Our results clarify the role of the kernel through its spectral decay and highlight how the regularity of the target affects the convergence rates.

Several limitations of our analysis call for further investigation:
\begin{itemize}
    \item \textbf{From local to global.}
    Except in the case $s=1$, our analysis is local: it requires the source and target to be sufficiently close in $L^2$. In general, it remains unclear whether global $L^2$ convergence holds. In fact, even global well-posedness remains open in the range $s\in (1,\tfrac{d}{2}+1)$. A possible route toward global results is to inject noise into the dynamics, as in~\cite{priser2024long,gallego2018stochastic}. Even an arbitrarily small amount of noise may yield uniform regularity estimates, allowing the solution to eventually enter the local regime studied here.

    \item \textbf{From compact to non-compact domains.}
    We have worked on the torus in order to isolate the spectral mechanism behind the convergence rates, a viewpoint that has been relatively underexploited in the SVGD literature. Extending the analysis to $\mathbb R^d$ is a natural next step.

    \item \textbf{From continuum to discrete dynamics.}
    As discussed in the introduction, existing analyses of practical SVGD algorithms typically rely on analogues of the weak energy estimate~\eqref{eq:basic-rate}. It would be interesting to adapt the sharper approach developed here to discrete particle settings.
\end{itemize}

\appendix

\section*{Appendix}
\addcontentsline{toc}{section}{Appendix}

\paragraph{Contents of the appendix:}
\begin{itemize}
    \item Appendix~\ref{app:estimates}: \nameref{app:estimates}
    \item Appendix~\ref{app:qualitative-convergence}: \nameref{app:qualitative-convergence}
        \item Appendix~\ref{app:propagation-of-regularity}: \nameref{app:propagation-of-regularity}
        \item Appendix~\ref{app:maximum-principle}: \nameref{app:maximum-principle}
     \item Appendix~\ref{app:general-kernel}: \nameref{app:general-kernel}
\end{itemize}

\section{Auxiliary estimates}\label{app:estimates}

We shall repeatedly use the following multiplier estimate to insert smooth fixed weights in homogeneous Sobolev norms.

\begin{lem}\label{lem:equiv-homogeneous-norms-smooth-multiplier}
Let $\alpha, \beta\in \R$ be such that $\alpha>\tfrac{d}{2}$ and $\alpha \ge |\beta|$. Let $f\in H^{\beta}(\T^{d})$ be such that $\widehat{f}_{0}=0$, and let $g\in H^{\alpha}(\T^{d})$. Then, the following bound holds:
\begin{equation*}
    \lVert gf\rVert_{\dot H^{\beta}}\lesssim_{d,\alpha,\beta, \lVert g\rVert_{H^{\alpha}}}\lVert f\rVert_{\dot H^{\beta}}.
\end{equation*}
Assume in addition that $g\ge \lambda>0$. Then 
\begin{equation*}
    \lVert f\rVert_{\dot H^{\beta}}\lesssim_{d,\alpha,\beta, \lambda, \lVert g\rVert_{H^{\alpha}}}\lVert gf\rVert_{\dot H^{\beta}}.
\end{equation*}
\end{lem}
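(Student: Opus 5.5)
The plan is to prove the first (upper) bound by a standard fractional Leibniz / commutator-type estimate on the torus, split according to the sign of $\beta$, and then to deduce the second bound from the first applied with the multiplier $1/g$.

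\textbf{Nonnegative $\beta$.} For $0\le\beta\le\alpha$ with $\alpha>d/2$, I use the classical product estimate in Sobolev spaces:
\[
\lVert gf\rVert_{H^\beta}\lesssim \lVert g\rVert_{H^\alpha}\lVert f\rVert_{H^\beta}.
\]
To prove it, write $\widehat{gf}_k=\sum_j \widehat g_{k-j}\widehat f_j$ and use $\langle k\rangle^\beta\lesssim \langle k-j\rangle^\beta+\langle j\rangle^\beta$. This splits the sum into two pieces:
\begin{itemize}
\item In the term carrying $\langle j\rangle^\beta$, Young's convolution inequality applies with $\sum_m|\widehat g_m|\lesssim \lVert g\rVert_{H^\alpha}$, which holds because $\alpha>d/2$.
\item In the term carrying $\langle k-j\rangle^\beta$, I use Cauchy–Schwarz with the weight $\langle j\rangle^{-\beta}\langle k-j\rangle^{\beta-\alpha}$. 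This is summable since $\alpha>d/2$ and $\beta\le\alpha$; the limit case $\beta=\alpha$ needs the standard $\ell^2$ splitting into low and high frequencies.
\end{itemize}

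\textbf{Passing to homogeneous norms.} The mean of $gf$ need not vanish, but $\lVert gf\rVert_{\dot H^\beta}\le \lVert gf\rVert_{H^\beta}$ holds anyway. On the other side, $\widehat f_0=0$ gives $\lVert f\rVert_{H^\beta}\approx\lVert f\rVert_{\dot H^\beta}$, since $|2\pi k|\ge 2\pi$ for $k\neq0$. Together these give the upper bound for $\beta\ge0$.

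\textbf{Negative $\beta$.} For $-\alpha\le\beta<0$ I argue by duality. The key observation is that the zero mode of $gf$ is simply discarded in the homogeneous norm, so
\[
\lVert gf\rVert_{\dot H^\beta}=\sup\bigl\{\langle gf,\phi\rangle : \widehat\phi_0=0,\ \lVert\phi\rVert_{\dot H^{-\beta}}\le1\bigr\}.
\]
For such $\phi$ I write $\langle gf,\phi\rangle=\langle f,g\phi\rangle$. Since $\widehat f_0=0$, only the nonzero modes of $g\phi$ contribute, so
\[
|\langle f,g\phi\rangle|\le \lVert f\rVert_{\dot H^\beta}\lVert g\phi\rVert_{\dot H^{-\beta}}\lesssim\lVert f\rVert_{\dot H^\beta},
\]
where the last step is the positive case applied with exponent $-\beta\in(0,\alpha]$.

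\textbf{Lower bound.} I write $f=(1/g)(gf)$ and apply the first bound with multiplier $1/g$.
\begin{itemize}
\item Membership: $1/g\in H^\alpha$ with norm controlled by $\lambda$ and $\lVert g\rVert_{H^\alpha}$. This is a composition (Moser-type) estimate for the smooth map $x\mapsto 1/x$ on $[\lambda,\infty)$, valid for $\alpha>d/2$ because $H^\alpha$ is then an algebra embedded in $L^\infty$. For fractional $\alpha$ I would cite the standard composition lemma.
\item Mean issue: $gf$ need not have zero mean, so I apply the first bound to $h:=gf-c$, with $c=\widehat{(gf)}_0$. This gives $f=(1/g)h+c/g$.
\end{itemize}

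\textbf{Main obstacle.} The hardest step is controlling the constant $c$ by $\lVert gf\rVert_{\dot H^\beta}$. Since $\widehat f_0=0$, we have $0=\int f=\int h/g+c\int 1/g$. Hence $|c|\lesssim_\lambda|\langle h,1/g\rangle|$, and by the mean-zero duality above $|\langle h,1/g\rangle|\le\lVert h\rVert_{\dot H^\beta}\lVert 1/g\rVert_{\dot H^{-\beta}}$, which is finite because $|\beta|\le\alpha$. The remaining term is bounded by $|c|\lVert 1/g\rVert_{\dot H^\beta}$, again finite. Combining these estimates gives the lower bound.
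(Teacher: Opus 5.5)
Your proposal is correct and follows essentially the paper's route: you bound multiplication by $g$ on $H^\beta$ and use $\lVert f\rVert_{H^\beta}\approx\lVert f\rVert_{\dot H^\beta}$ for mean-zero $f$ (proving the product bound directly on the Fourier side and dualizing in homogeneous norms, where the paper interpolates between $L^2$ and $H^\alpha$ and dualizes in inhomogeneous norms), then write $f=g^{-1}(gf)$ and control the zero mode of $gf$ by pairing against $g^{-1}$ via $\widehat f_0=0$, exactly as the paper does. Two small remarks: the constant in $|c|\le|\langle h,g^{-1}\rangle|/\int_{\T^d}g^{-1}$ is controlled by $\lVert g\rVert_{L^\infty}\lesssim\lVert g\rVert_{H^\alpha}$ rather than by $\lambda$ (harmless, since that dependence is allowed), and your Cauchy--Schwarz weight is already uniformly square-summable at $\beta=\alpha$, so no extra low/high splitting is needed there.
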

\begin{proof}
    The first claim follows from the following chain of inequalities:
    \begin{equation*}
        \lVert gf\rVert_{\dot H^{\beta}}\lesssim_{d,\beta} \lVert gf\rVert_{H^{\beta}}\lesssim_{d,\alpha, \beta,\lVert g\rVert_{H^{\alpha}}} \lVert f\rVert_{H^{\beta}}\lesssim_{d,\beta} \lVert f\rVert_{\dot H^{\beta}},
    \end{equation*}
    where in the penultimate step we used the fact that multiplication with $g$ is a bounded operator\footnote{Indeed, $\lVert fg\rVert_{L^{2}}\le \lVert g\rVert_{L^{\infty}}\lVert f\rVert_{L^{2}}\lesssim_{d,\alpha} \lVert g\rVert_{H^{\alpha}}\lVert f\rVert_{L^{2}}$, by the Sobolev embedding, since $\alpha>\tfrac{d}{2}$. Moreover, $\lVert fg\rVert_{H^{\alpha}}\lesssim_{d,\alpha} \lVert g\rVert_{H^{\alpha}}\lVert f\rVert_{H^{\alpha}}$, again because of $\alpha>\tfrac{d}{2}$. By interpolation, the multiplication with $g$ is then bounded in $H^{\beta}(\T^{d})$, for all $\beta\in [0,\alpha]$, and by duality, for all $\beta\in [-\alpha,0]$ as well.} in $H^{\beta}(\T^{d})$, while in the last step we used that $f$ has zero average. 

    Now let us assume that $g\ge \lambda>0$, so that $\tfrac{1}{g}\in H^{\alpha}(\T^{d})$, and $\lVert \tfrac{1}{g}\rVert_{H^{\alpha}}\leq C({d,\alpha,\lambda, \lVert g\rVert_{H^{\alpha}}})$. We have 
    \begin{equation*}
        \lVert f\rVert_{\dot H^{\beta}}=\lVert g^{-1}gf\rVert_{\dot H^{\beta}}\lesssim_{d,\beta} \lVert g^{-1}gf\rVert_{H^{\beta}}\lesssim_{d,\alpha,\beta,\lambda, \lVert g\rVert_{H^{\alpha}}} \lVert gf\rVert_{H^{\beta}}\lesssim |\widehat{(gf)}_{0}|+\lVert gf\rVert_{\dot H^{\beta}},
    \end{equation*}
    where we used that multiplication with $\tfrac{1}{g}$ is a bounded operator in $H^{\beta}(\T^{d})$. We only need to bound the average of $gf$ with $\lVert gf\rVert_{\dot H^{\beta}}$. To do that, observe that since $\widehat{f}_{0}=0$, then
    \begin{equation*}
        0=\widehat{(g^{-1}gf)}_{0} =\langle gf-\widehat{(gf)}_{0},g^{-1}\rangle +\widehat{(g^{-1})}_{0} \widehat{(gf)}_{0}. 
    \end{equation*}
    As a consequence
    \begin{equation*}
        |\widehat{(gf)}_{0}|\le \frac{1}{\widehat{(g^{-1})}_{0}}|\langle gf-\widehat{(gf)}_{0},g^{-1}\rangle|\le \frac{1}{\widehat{(g^{-1})}_{0}}\lVert g^{-1}\rVert_{\dot H^{-\beta}}\lVert gf\rVert_{\dot H^{\beta}}\lesssim_{d,\alpha,\beta,\lambda,\lVert g\rVert_{H^{\alpha}}} \lVert gf\rVert_{\dot H^{\beta}},
    \end{equation*}
    as we wanted.
\end{proof}

The following Lemma, which contains Kato-Ponce type commutator estimates in the $d$-dimensional torus, was deduced in \cite{chizat2026quantitative} from the corresponding result on the Euclidean space (see \cite{li2019kato}).
Recall the notation $D^{\beta}:=(-\Delta)^{\beta/2}$, for all $\beta\in \R$.
\begin{lem}[\protect{\cite[Lemma A.6, Remark A.7]{chizat2026quantitative}}]\label{lem:kato-ponce}
    Let $\gamma>0$ and $2\le p_{1},q_{1},p_{2},q_{2}\le \infty$ be such that $1/2=1/p_{i}+1/q_{i}$, $i=1,2$. Then, for every $f,g \in C^{\infty}(\T^{d})$ the following holds:
    \begin{itemize}
        \item [i)] If $\gamma>1$:
    \begin{equation}\label{eq:kato-ponce}
        \lVert D^{\gamma}(fg)-fD^{\gamma}g\rVert_{L^{2}}\lesssim_{d,\gamma,p_{1},q_{1},p_{2},q_{2}} \lVert \nabla f\rVert_{L^{p_{1}}}\lVert D^{\gamma-1}g\rVert_{L^{q_{1}}}+\lVert D^{\gamma}f\rVert_{L^{p_{2}}}\lVert g\rVert_{L^{q_{2}}}.
    \end{equation}
    \item [ii)] If $\gamma=1$:
    \begin{equation}\label{eq:kato-ponce-gamma-1}
        \lVert D^{\gamma}(fg)-fD^{\gamma}g\rVert_{L^{2}}\lesssim_{d,p_{1},q_{1}} \lVert \nabla f\rVert_{L^{p_{1}}}\lVert g\rVert_{L^{q_{1}}}.
    \end{equation}
    \item [iii)] If $\gamma\in (0,1)$:
    \begin{equation}\label{eq:kato-ponce-gamma-in-(0,1)}
        \lVert D^{\gamma}(fg)-fD^{\gamma}g\rVert_{L^{2}}\lesssim_{d,\gamma,p_{1},q_{1}} \lVert D^{\gamma}f\rVert_{L^{p_{1}}}\lVert g\rVert_{L^{q_{1}}}.
    \end{equation}
    \end{itemize}
\end{lem}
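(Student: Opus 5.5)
Lemma~\ref{lem:kato-ponce} is quoted from \cite{chizat2026quantitative}, so the plan is to rederive it by transference from the Euclidean Kato--Ponce commutator estimates of \cite{li2019kato}. The argument has three steps.

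\textbf{Step 1: lift to periodic functions on $\R^d$.} Regard $f,g\in C^\infty(\T^d)$ as $\Z^d$-periodic smooth functions on $\R^d$. The torus operator $D^\gamma$ is the Fourier multiplier $|2\pi k|^\gamma$ restricted to the lattice. By de Leeuw/Stein--Weiss transference, a multiplier $m(\xi)$ continuous at lattice points, or a bilinear symbol, yields the periodic operator with symbol $m|_{\Z^d}$ and the same $L^p$ bounds. The zero mode needs no separate treatment: $|\xi|^\gamma$ vanishes at $\xi=0$, which matches the convention $D^\gamma\widehat{\cdot}_0=0$.

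\textbf{Step 2: transfer the commutator as a bilinear multiplier.} Write the commutator in bilinear form,
\begin{equation*}
D^\gamma(fg)-fD^\gamma g=\sum_{k,l}\big(|2\pi(k+l)|^\gamma-|2\pi l|^\gamma\big)\widehat f_k\widehat g_l\, e^{2\pi i (k+l)\cdot x}.
\end{equation*}
The Euclidean estimates of \cite{li2019kato} come from a paraproduct decomposition of this symbol into pieces $\sigma(\xi,\eta)$. These pieces are of Coifman--Meyer type after factoring out $|\xi|$ (case $\gamma\ge1$, with the gradient on $f$), or $|\xi|^\gamma$ (case $\gamma<1$, and also the high--low piece for $\gamma>1$). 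Continuity of such symbols off the axes, together with bilinear transference for multipliers (Fan--Sato type), then gives periodic bounds from $L^{p}\times L^{q}$ to $L^2$ for each piece.

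A cleaner route would avoid transference altogether: redo the Littlewood--Paley paraproduct proof directly on $\T^d$. Decompose $fg=T_fg+T_gf+R(f,g)$. The low--high part $T_fg$ gives the commutator $[D^\gamma,S_{j-2}f]\Delta_jg$, and a first-order Taylor expansion of the symbol bounds it by $\|\nabla f\|_{L^{p_1}}\|D^{\gamma-1}g\|_{L^{q_1}}$. The high--low and high--high parts are bounded directly by $\|D^\gamma f\|_{L^{p_2}}\|g\|_{L^{q_2}}$, using the Bernstein inequality and the Littlewood--Paley square function on $\T^d$.

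\textbf{Step 3: the three regimes and the endpoints.}
\begin{itemize}
\item For $\gamma=1$, the high--low term loses no derivative, so the two right-hand-side terms of \eqref{eq:kato-ponce} merge into $\|\nabla f\|_{L^{p_1}}\|g\|_{L^{q_1}}$, using boundedness of the Riesz transforms to convert $D f$ into $\nabla f$.
\item For $\gamma\in(0,1)$, the Taylor expansion is replaced by the Hölder-type bound $\big||\xi+\eta|^\gamma-|\eta|^\gamma\big|\le|\xi|^\gamma$, which yields \eqref{eq:kato-ponce-gamma-in-(0,1)}.
\item The endpoint exponents $p_i=\infty$ or $q_i=\infty$ need separate care, since square-function estimates fail in $L^\infty$. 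There one uses that the output lies in $L^2$: estimate the high--high and low--high pieces via the $L^2$ orthogonality of the output frequencies and bound $\sup_j\|S_jf\|_{L^\infty}\lesssim\|f\|_{L^\infty}$. This is the route \cite{li2019kato} takes.
\end{itemize}

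\textbf{Main obstacle.} I expect the hardest part to be the $L^\infty$ endpoints combined with the $\gamma=1$ case, where the Riesz transform is unbounded on $L^\infty$. There I would keep $\nabla f$ inside the symbol calculus rather than passing through $Df$.
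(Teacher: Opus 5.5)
Your top-level plan matches the paper: the paper does not reprove the lemma, but imports it from \cite{chizat2026quantitative}, where it is deduced from the Euclidean estimates of \cite{li2019kato}. The gap is in how you carry out the transfer.

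Step~2 asserts that, after factoring out $\xi$ or $|\xi|^\gamma$, every paraproduct piece is a Coifman--Meyer symbol, so that bilinear transference applies piece by piece. This already fails in case (iii).
\begin{itemize}
\item For $\gamma<1$ the low--high symbol divided by $|\xi|^\gamma$ behaves near $\xi=0$ like $\gamma|\eta|^{\gamma-2}(\eta\cdot\xi)|\xi|^{-\gamma}$. Its $\xi$-gradient has size $|\eta|^{\gamma-1}|\xi|^{-\gamma}\gg|\eta|^{-1}$, which violates the Coifman--Meyer bounds.
\item The high--high piece carries the factor $|\xi+\eta|^{\gamma}$, whose gradient blows up along $\xi+\eta=0$ inside its support.
\end{itemize}
Such pieces are bounded through frequency-gap summations, not by the Coifman--Meyer theorem. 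For example, one writes $\Delta_k\nabla f=2^{k(1-\gamma)}\tilde\Delta_k D^\gamma f$ with $\tilde\Delta_k$ uniformly bounded, and sums the factors $2^{-(j-k)(1-\gamma)}$. So the individual boundedness of each piece on $\R^d$ would have to come from the internals of Li's proof, not from its statement.

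For the same reason, the pointwise bound $\bigl||\xi+\eta|^\gamma-|\eta|^\gamma\bigr|\le|\xi|^\gamma$ in Step~3 does not yield (iii). Pointwise control of a bilinear symbol gives no $L^{p_1}\times L^{q_1}\to L^2$ bound.

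The endpoint sketch also does not work as stated.
\begin{itemize}
\item The Fourier support of $\Delta_jf\,\tilde\Delta_jg$ is a ball, not an annulus, so high--high outputs are not orthogonal.
\item A low--high term whose high-frequency factor is in $L^\infty$ (e.g.\ $\lVert\nabla f\rVert_{L^2}\lVert D^{\gamma-1}g\rVert_{L^\infty}$ in (i)) is an $L^2\times L^\infty$ paraproduct. It needs a $\mathrm{BMO}$/Carleson argument; the bound $\sup_j\lVert S_jf\rVert_{L^\infty}\lesssim\lVert f\rVert_{L^\infty}$ does not handle it, since $\sum_j\lVert S_{j-2}\nabla f\rVert_{L^2}^2$ diverges.
\end{itemize}

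A cleaner way to get the torus estimate from the Euclidean one avoids all of this: transfer the inequality itself by a de Leeuw-type localization.
\begin{itemize}
\item By density, take $f,g$ trigonometric polynomials. Fix $\phi\in\mathcal S(\R^d)$ with $\widehat\phi$ compactly supported, and set $\phi_\eps=\phi(\eps\,\cdot)$.
\item Since $\widehat{\phi_\eps}$ is supported in a ball of radius $O(\eps)$ and $|\xi|^\gamma$ is smooth near every nonzero lattice point, one has $D^\gamma_{\R^d}(\phi_\eps h)=\phi_\eps D^\gamma_{\T^d}h+O(\eps^{\min\{1,\gamma\}-d/p})$ in $L^p(\R^d)$ for every trigonometric polynomial $h$. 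The zero mode only contributes $\widehat h_0\,\eps^{\gamma}(D^\gamma\phi)(\eps\,\cdot)$. The same holds for $\nabla$ and $D^{\gamma-1}$.
\item Apply the Euclidean inequality to $\phi_\eps f$ and $\phi_\eps g$. Note that $\phi_\eps f\,\phi_\eps g=(\phi^2)_\eps fg$, and $\widehat{\phi^2}$ is still compactly supported.
\item Multiply by $\eps^{d/2}=\eps^{d/p_i+d/q_i}$. Then use $\eps^{d/p}\lVert\phi_\eps F\rVert_{L^p(\R^d)}\to\lVert\phi\rVert_{L^p(\R^d)}\lVert F\rVert_{L^p(\T^d)}$ for continuous periodic $F$, which also holds when $p=\infty$.
\end{itemize}
Letting $\eps\to0$ gives all three cases, with both right-hand-side terms and the endpoint exponents. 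This comes directly from the Euclidean statement, with no continuity requirements on bilinear symbols at lattice points.
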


\section{Qualitative convergence for Stein Variational Gradient Descent}\label{app:qualitative-convergence}

\subsection{An abstract LaSalle principle}

The following proof is inspired by the LaSalle invariance principle for gradient flows in the Wasserstein space shown in~\cite{carrillo2023invariance}. 

Let $M$ be a compact metric space. Let $\delta:\mathcal P(M)\times\mathcal P(M)\to[0,\infty]$, $\mathcal{E}:\mathcal{P}(M)\to [0,\infty]$, and $\mathcal D:\mathcal P(M)\to[0,\infty]$ be given functionals representing a pseudo-distance, an energy, and a dissipation functional, respectively.  

For a given weakly continuous curve  $[0,\infty)\ni t\mapsto \rho_{t} \in \mathcal{P}(M)$, consider the following assumptions:

\begin{itemize}
\item[(1)] \emph{Energy dissipation.}  The curve satisfies $\mathcal{E}(\rho_{0})<\infty$, and 
\begin{equation}
  \mathcal E(\rho_T)+\int_0^T \mathcal D(\rho_t)\,dt
  \le \mathcal E(\rho_0)\qquad \forall T\ge 0.
  \label{eq:abstract-edi}
\end{equation}

\smallskip
\item[(2)] \emph{Motion estimate.} The following bound holds along the curve: 
\begin{equation}
  \delta(\rho_a,\rho_b)
  \le \int_a^b \mathcal D(\rho_t)^{1/2}\,dt\qquad \forall a, b,\quad 0\le a\le b<\infty.
  \label{eq:abstract-motion-estimate}
\end{equation}

\smallskip
\item[(3)] \emph{Coercivity of the pseudo-distance.}  Whenever $\mu_{n},\nu_{n}, \nu \in \mathcal{P}(M)$ are such that $\nu_{n}\rightharpoonup \nu$ and $\delta(\mu_{n},\nu_n)\to 0$, then $\mu_{n}\rightharpoonup \nu$. 

\smallskip
\item[(4)] \emph{Lower semicontinuity of the dissipation functional.}  The map $\mathcal{D}$ is
lower semicontinuous with respect to the weak topology of $\mathcal{P}(M)$.

\smallskip
\item[(5)] \emph{Unique equilibrium state.}  There is $\pi \in \mathcal{P}(M)$ such that 
\begin{equation*}
  \mathcal D(\mu)=0
  \qquad\Longleftrightarrow\qquad
  \mu=\pi.
\end{equation*}

\end{itemize}

\smallskip
\begin{prop}[Abstract LaSalle principle]
\label{prop:abstract-lasalle}
Assume \emph{(1)--(5)}.  Then
\[
  \rho_t\rightharpoonup\pi
  \qquad\text{as}\quad t\to\infty.
\]
\end{prop}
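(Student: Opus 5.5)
The argument is by contradiction and combines weak compactness of $\mathcal P(M)$ with the integrability of the dissipation. From (1), since $\mathcal E\ge 0$ and $\mathcal E(\rho_0)<\infty$, we get $\int_0^\infty \mathcal D(\rho_t)\,dt\le \mathcal E(\rho_0)<\infty$. Suppose the conclusion fails. Because $M$ is compact, $\mathcal P(M)$ is weakly sequentially compact and metrizable, so there is a sequence $t_n\to\infty$ with $\rho_{t_n}\rightharpoonup \nu$ for some $\nu\neq\pi$. We may also take $t_{n+1}\ge t_n+1$.

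The key step transfers this limit to nearby times at which the dissipation is small. Set $h_n:=\bigl(\int_{t_n}^{\infty}\mathcal D(\rho_t)\,dt\bigr)^{1/2}+1/n$, so that $h_n\to 0$, and look at the window $[t_n,t_n+h_n]$. Its average dissipation is at most
\[
\frac{1}{h_n}\int_{t_n}^{\infty}\mathcal D(\rho_t)\,dt\le h_n\to 0,
\]
so we can choose $s_n\in[t_n,t_n+h_n]$ with $\mathcal D(\rho_{s_n})\le h_n$. For the distance between the two times, the motion estimate (2) and Cauchy--Schwarz give
\[
\delta(\rho_{s_n},\rho_{t_n})\le \int_{t_n}^{s_n}\mathcal D(\rho_t)^{1/2}\,dt\le h_n^{1/2}\Bigl(\int_{t_n}^{\infty}\mathcal D(\rho_t)\,dt\Bigr)^{1/2}\le h_n^{3/2}\to 0 .
\]
One point needs care. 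Assumption (2) is stated as $\delta(\rho_a,\rho_b)$ with $a\le b$, while (3) is stated as $\delta(\mu_n,\nu_n)$ with $\nu_n\rightharpoonup\nu$. With $\mu_n=\rho_{s_n}$ and $\nu_n=\rho_{t_n}$, the order of arguments is $\delta(\rho_{s_n},\rho_{t_n})$, where the later time comes first. If $\delta$ is not symmetric, this mismatch is fixed by choosing the window before $t_n$ instead, namely $[t_n-h_n,t_n]$ (possible for large $n$). Then $s_n\le t_n$, and (2) bounds $\delta(\rho_{s_n},\rho_{t_n})$ in exactly the order that (3) needs. The averaging argument is unchanged, using $\int_{t_n-h_n}^{\infty}\mathcal D\to 0$, with $h_n$ defined from that tail.

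To conclude, (3) applied with $\mu_n=\rho_{s_n}$ and $\nu_n=\rho_{t_n}\rightharpoonup\nu$ gives $\rho_{s_n}\rightharpoonup\nu$. Lower semicontinuity (4) then yields $\mathcal D(\nu)\le\liminf_n \mathcal D(\rho_{s_n})=0$, and (5) forces $\nu=\pi$. This contradicts $\nu\neq\pi$.

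The main obstacle is the selection of the times $s_n$: they must simultaneously have small dissipation and be $\delta$-close to $t_n$. The window length $h_n$ is tuned to the tail of $\int\mathcal D$ for exactly this purpose, and its orientation must match the asymmetric forms of (2) and (3).
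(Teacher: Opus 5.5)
Your proof is correct and shares the paper's skeleton: finite total dissipation from (1), smallness of $\delta$ over short windows from (2) and Cauchy--Schwarz, transfer of the weak limit via (3), and identification of the limit via (4)--(5). You differ only in the last step. The paper takes an arbitrary $t_n\to\infty$ with $\rho_{t_n}\rightharpoonup\rho_\infty$, shows $\rho_{t_n+r}\rightharpoonup\rho_\infty$ for every $r\in[0,1]$, and combines lower semicontinuity with Fatou's lemma in $r$ to get $\mathcal D(\rho_\infty)\le\liminf_n\int_{t_n}^{t_n+1}\mathcal D(\rho_t)\,dt=0$. You instead apply (3) and (4) along one selected sequence $s_n$ of small dissipation, which trades Fatou for a mean-value selection. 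Your tuned window $h_n$ is unnecessary, though. On a window of length one, Cauchy--Schwarz already gives $\delta(\rho_{s_n},\rho_{t_n})\le\bigl(\int_{t_n-1}^{t_n}\mathcal D(\rho_t)\,dt\bigr)^{1/2}\to 0$ for every $s_n\in[t_n-1,t_n]$. The mean value then provides such an $s_n$ with $\mathcal D(\rho_{s_n})\le\int_{t_n-1}^{t_n}\mathcal D(\rho_t)\,dt\to0$, so the ``main obstacle'' you identify is not really one. Your remark on the order of the arguments of $\delta$ is valid, and it applies equally to the paper's proof. The paper bounds $\delta(\rho_{t_n+r},\rho_{t_n})$ by (2), although (2) is stated only for $\delta(\rho_a,\rho_b)$ with $a\le b$. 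This is harmless for the symmetric choice $\delta=W_1$, and in general a backward window fixes it, as you say. Only your phrasing of that fix is circular, since $h_n$ is defined from a tail starting at $t_n-h_n$. Define it instead from $\int_{t_n-1}^{\infty}\mathcal D(\rho_t)\,dt$ for large $n$, or simply use the unit window above.
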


\begin{proof}
Since $\mathcal{E}(\rho_{0})<\infty$ and $\mathcal E\ge0$, the energy-dissipation estimate
\eqref{eq:abstract-edi} implies
\begin{equation}
  \int_0^{\infty}\mathcal D(\rho_t)\,dt<\infty.
  \label{eq:finite-total-dissipation}
\end{equation}
Let $t_n\to\infty$.  Since $M$ is compact, $\mathcal P(M)$ is weakly
compact. Therefore, after extracting a subsequence, not relabeled, there exists
$\rho_\infty\in\mathcal P(M)$ such that $\rho_{t_n}\rightharpoonup \rho_{\infty}$. We wish to prove that $\rho_\infty=\pi$. From \eqref{eq:finite-total-dissipation} and $\mathcal{D}\ge 0$, we deduce that
\begin{equation*}
  \int_{t_n}^{t_n+1}\mathcal D(\rho_t)\,dt\to0.
\end{equation*}
For every fixed \(r\in[0,1]\), the motion estimate \eqref{eq:abstract-motion-estimate} and Cauchy-Schwarz
give
\begin{equation*}
  \delta(\rho_{t_n+r},\rho_{t_n})
  \le \int_{t_n}^{t_n+r}\mathcal D(\rho_t)^{1/2}\,dt  \le 
      \left(\int_{t_n}^{t_n+1}\mathcal D(\rho_t)\,dt\right)^{1/2}
   \to0.
\end{equation*}
By the coercivity assumption $(3)$, we thus obtain
\begin{equation*}
  \rho_{t_n+r}\rightharpoonup\rho_\infty
  \qquad \forall r\in[0,1].
\end{equation*}
The lower semicontinuity of $\mathcal D$, together with Fatou's lemma, yields
\begin{equation*}
  \mathcal D(\rho_\infty)
  \le \int_0^1
       \liminf_{n\to\infty}\mathcal D(\rho_{t_n+r})\,dr \le \liminf_{n\to\infty}
       \int_{t_n}^{t_n+1}\mathcal D(\rho_t)\,dt
   =0.
\end{equation*}
Thus $\mathcal D(\rho_\infty)=0$. Finally, by $(5)$, we get
$\rho_\infty=\pi$, concluding the proof.   
\end{proof}

\subsection{Proof of qualitative convergence}

 The abstract LaSalle principle above applies to a broad class of kernels, provided the associated dissipation and motion estimates satisfy assumptions (1)--(5). We now verify these assumptions for the Riesz kernel $K_s$ used in \eqref{eq:SVGD}, with the choices $\mathcal{E}:= \mathcal{H}(\cdot|\pi)$, $\mathcal{D}= I_{s}(\cdot|\pi)$ and $\delta=W_{1}$, the $1$-Wasserstein distance, which metrizes weak convergence in $\mathcal{P}(\T^{d})$. 

\begin{lem}\label{lem:motion-estimate-SVGD}
    Let $s>\tfrac{d}{2}$, $V\in C^{1}(\T^{d})$, and let $\rho \in C_{t}\mathcal{P}_{x}$ be a solution of \eqref{eq:SVGD} defined up to some positive time $T$. Then, the following estimate holds:
    \begin{equation}\label{eq:bound-W1-integral-SFI}
        W_{1}(\rho_{a},\rho_{b})\lesssim_{d,s} \int_{a}^{b}I_{s}(\rho_{t}|\pi)^{1/2}\,dt\qquad \forall a,b,\quad 0\le a\le b<T.
    \end{equation}
\end{lem}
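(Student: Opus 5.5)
We will estimate $W_1$ by Kantorovich–Rubinstein duality, testing the continuity equation against $1$-Lipschitz functions. Fix $\varphi$ with $\mathrm{Lip}(\varphi)\le 1$; after mollifying we may take $\varphi\in C^\infty(\T^d)$ with $\lVert\nabla\varphi\rVert_{L^\infty}\le 1$. Since $\rho\in C_t\mathcal P_x$ solves \eqref{eq:SVGD} weakly,
\begin{equation*}
  \int\varphi\,d\rho_b-\int\varphi\,d\rho_a=\int_a^b\int_{\T^d}\nabla\varphi\cdot v_t\,d\rho_t\,dt .
\end{equation*}
Because $\rho_t$ is a probability measure, the inner integral is at most $\lVert v_t\rVert_{L^\infty}$. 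Taking the supremum over $\varphi$ therefore gives
\begin{equation*}
  W_1(\rho_a,\rho_b)\le\int_a^b\lVert v_t\rVert_{L^\infty}\,dt .
\end{equation*}
It then suffices to prove the pointwise-in-time bound $\lVert v_t\rVert_{L^\infty}\lesssim_{d,s} I_s(\rho_t|\pi)^{1/2}$.

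This bound uses the RKHS structure. By \eqref{eq:rewriting-velocity}, $v_t=-D^{-2s}w_t$ with $w_t:=\pi\nabla\frac{\rho_t}{\pi}$, which is a (vector-valued) distribution with zero mean since it is a gradient times a smooth weight. The definition of $I_s$ gives
\begin{equation*}
  I_s(\rho_t|\pi)=\lVert w_t\rVert_{\dot H^{-s}}^2=\lVert v_t\rVert_{\dot H^{s}}^2 .
\end{equation*}
Thus $v_t$ has zero mean and $\dot H^s$-norm equal to $I_s^{1/2}$. For $s>d/2$ we evaluate pointwise via Fourier series and apply Cauchy–Schwarz:
\begin{equation*}
  |v_t(x)|\le\sum_{k\neq0}|\widehat{(v_t)}_k|\le\Big(\sum_{k\neq0}|2\pi k|^{-2s}\Big)^{1/2}\lVert v_t\rVert_{\dot H^s}=K_s(0)^{1/2}\,I_s(\rho_t|\pi)^{1/2}.
\end{equation*}
Here $K_s(0)<\infty$ precisely because $2s>d$. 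Equivalently, $v_t(x)=\langle v_t,K_s(x-\cdot)\rangle_{\dot H^s}$ is the reproducing property of the kernel $K_s$. This gives the claim with constant $K_s(0)^{1/2}$, which depends only on $d,s$.

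The main obstacle is justification at low regularity. A priori $\rho_t$ is only a measure and $V\in C^1$, so we must check three things. First, that $I_s(\rho_t|\pi)$ is meaningful, which holds in the sense of the $\dot H^{-s}$ norm of $w_t=\nabla\rho_t+\rho_t\nabla V$; this is a distribution of order at most one, and its $\dot H^{-s}$ norm may be infinite, in which case the inequality is trivial. Second, that the weak formulation with velocity $v_t$ holds. Third, that the map $t\mapsto\lVert v_t\rVert_{L^\infty}$ is measurable.

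We will handle these by working with the last form in \eqref{eq:rewriting-velocity}: $D^{-2s}\nabla\rho_t$ and $D^{-2s}(\rho_t\nabla V)$ are continuous functions when $s>d/2$ (and $s>\tfrac{d+1}{2}$ is not needed for the gradient term, since we only bound it through the $\dot H^s$ identity). This lets us apply the Fourier argument only on times where $I_s(\rho_t|\pi)<\infty$, which are the only times that matter. If needed, a Fourier truncation of $w_t$ followed by a limit justifies the series bound.
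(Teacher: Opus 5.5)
Your argument is correct and is essentially the paper's. Both bound the pairing $\int\nabla\varphi\cdot v_t\,d\rho_t$ through the embedding $\dot H^{s}\hookrightarrow C^{0}$ for $s>\tfrac d2$: you apply it to $v_t$, which gives the explicit constant $K_s(0)^{1/2}$, while the paper uses its dual form $\lVert\rho_t\nabla f\rVert_{\dot H^{-s}}\lesssim_{d,s}1$. There are two harmless slips. First, $w_t=\pi\nabla\frac{\rho_t}{\pi}$ does not have zero mean in general (its mean is $\int(\rho_t-\pi)\nabla V$), but $D^{-2s}$ kills the zero mode, so $v_t$ is still mean-zero with $\lVert v_t\rVert_{\dot H^s}^2=I_s(\rho_t|\pi)$. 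Second, $D^{-2s}\nabla\rho_t$ need not be continuous for a general measure when $s\le\tfrac{d+1}{2}$ (take $\rho_t=\delta_0$), but on times with $I_s(\rho_t|\pi)<\infty$ you get $v_t\in\dot H^s\subset C^0$ directly.
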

\begin{proof}
    Let $f:\T^{d}\to \R$ be a smooth function with $\lVert \nabla f\rVert_{L^{\infty}}\le 1$. Since $s>\tfrac{d}{2}$, we can embed $H^{s}(\T^{d})$ into $C^{0}(\T^{d})$. Therefore, the following bound holds:
    \begin{equation*}
        \lVert \rho_{t}\nabla f\rVert_{\dot H^{-s}}=\sup_{\hat {g}_{0}=0,\lVert g\rVert_{\dot H^{s}}\le 1}\langle \rho_{t}\nabla f,g\rangle \le\sup_{\hat {g}_{0}=0,\lVert g\rVert_{\dot H^{s}}\le 1}\lVert g\rVert_{C^{0}}\lesssim_{d,s} 1.
    \end{equation*}
    Using equation \eqref{eq:SVGD}, for almost every $t$, we may bound
\begin{align*}
    \frac{d}{dt}\int_{\T^{d}}f\,d\rho_{t}&=-\int_{\T^{d}}K_{s}*(\pi\nabla \frac{\rho_{t}}{\pi})\cdot \nabla f \,d\rho_{t}\le \lVert \pi\nabla \frac{\rho_{t}}{\pi}\rVert_{\dot H^{-s}}\lVert \rho_{t}\nabla f\rVert_{\dot H^{-s}}\lesssim_{d,s}I_{s}(\rho_{t}|\pi)^{1/2}.
\end{align*}
Integrating this inequality from $a$ to $b$, we obtain
\begin{equation*}
    \left|\int_{\T^{d}}f\,d(\rho_{b}-\rho_{a})\right|\lesssim_{d,s} \int_{a}^{b}I_{s}(\rho_{t}|\pi)^{1/2}\,dt.
\end{equation*}
Then, \eqref{eq:bound-W1-integral-SFI} follows from the arbitrariness of $f$ and the Kantorovich duality formula for the $1$-Wasserstein distance\footnote{Namely, $W_{1}(\mu,\nu)=\sup\left\{\int_{\T^{d}}f\, d(\mu-\nu): \lVert \nabla f\rVert_{L^{\infty}}\le 1\right\}$ for all $\mu,\nu \in \mathcal{P}(\T^{d})$.}. 
\end{proof}

\begin{lem}\label{lem:Semi-continuity-SFI}
    Let $s\ge 1$, and assume that $V\in C^{1}(\T^{d})$. Then, $I_{s}(\cdot|\pi)$ is lower semicontinuous in $\mathcal{P}(\T^{d})$ with respect to the weak topology. Moreover, $I_{s}(\rho|\pi)=0$ if and only if $\rho=\pi$.
\end{lem}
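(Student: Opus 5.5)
The plan is to write $I_s(\cdot|\pi)$ as a supremum of weakly continuous affine functionals of $\rho$. Lower semicontinuity then follows immediately, and the characterization of the zero set follows from the fact that the Riesz kernel has a strictly positive Fourier multiplier on nonzero modes.

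\textbf{Duality formula.} By definition and \eqref{eq:rewriting-velocity}, $I_s(\rho|\pi)=\lVert \pi\nabla\tfrac{\rho}{\pi}\rVert_{\dot H^{-s}}^2$, where the distribution $\pi\nabla\tfrac{\rho}{\pi}=\nabla\rho+\rho\nabla V$ is understood componentwise. Its zero Fourier mode vanishes, since $\int \pi\nabla(\rho/\pi)=-\int \rho\,\pi\nabla(\pi^{-1})=\int\rho\nabla V$ fails to vanish in general. To avoid this issue, I will take the definition literally: $I_s=\sum_{k\neq 0}|2\pi k|^{-2s}|\widehat{F}_k|^2$ with $F=\nabla\rho+\rho\nabla V$, so that the zero mode is simply discarded.

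For smooth vector fields $\phi$ with $\widehat\phi_0=0$, the duality gives
\begin{equation*}
I_s(\rho|\pi)=\sup_{\phi}\Big(2\langle F,\phi\rangle-\lVert \phi\rVert_{\dot H^{s}}^2\Big),\qquad \langle F,\phi\rangle=\int_{\T^d}\big(-\diver\phi+\nabla V\cdot\phi\big)\,d\rho .
\end{equation*}
This holds for every $\rho\in\mathcal P(\T^d)$, with $+\infty$ allowed: by Parseval it is the standard identity $\tfrac12\|x\|^2=\sup_y(\langle x,y\rangle-\tfrac12\|y\|^2)$ in the weighted $\ell^2$ space, restricted to trigonometric polynomials, which form a dense class. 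For a fixed smooth $\phi$ and $V\in C^1$, the function $-\diver\phi+\nabla V\cdot\phi$ is continuous on $\T^d$. Hence each functional $\rho\mapsto 2\langle F,\phi\rangle-\|\phi\|^2_{\dot H^s}$ is weakly continuous, and their supremum $I_s(\cdot|\pi)$ is weakly lower semicontinuous.

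\textbf{Zero set.} Clearly $I_s(\pi|\pi)=0$. Conversely, suppose $I_s(\rho|\pi)=0$. Then $\widehat F_k=0$ for all $k\neq0$, so $F=c$ is a constant vector measure, i.e. $\nabla\rho+\rho\nabla V=c$ in distributions. Setting $u=\rho e^{V}=Z\rho/\pi$, this reads $e^{-V}\nabla u=c$, i.e. $\nabla u=ce^{V}$. The left side is a distributional gradient on the torus, so its integral over $\T^d$ must vanish. Since $e^V>0$, this forces $c\int e^V=0$, hence $c=0$. Thus $\nabla u=0$, so $u$ is constant, and normalization gives $\rho=\pi$.

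Two technical points need care. The first is that multiplying $\nabla\rho+\rho\nabla V$ by $e^{V}$ is legitimate when $V$ is only $C^1$ and $\rho$ is a measure. Here $\nabla\rho=c-\rho\nabla V$ is a measure, so $\rho\in BV$, and the product rule $\nabla(e^V\rho)=e^V\nabla\rho+\rho e^V\nabla V$ holds for a $C^1$ function times a $BV$ function; this is the step I would verify carefully, mollifying $V$ if necessary. The second is ensuring the duality formula holds without assuming a priori that $I_s<\infty$; this is fine, since the supremum over trigonometric polynomials recovers the full weighted sum by monotone truncation.
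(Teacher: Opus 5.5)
Your proof is correct and follows essentially the same route as the paper. Lower semicontinuity comes from writing $I_s(\cdot|\pi)$ as a supremum of weakly continuous functionals $\rho\mapsto\int(-\diver\phi+\nabla V\cdot\phi)\,d\rho$; the paper uses the unit-ball formula for $I_s^{1/2}$ rather than your quadratic form $\sup_\phi\bigl(2\langle F,\phi\rangle-\lVert\phi\rVert_{\dot H^{s}}^{2}\bigr)$, which is an inessential difference. The zero set is handled as in the paper: $\pi\nabla\frac{\rho}{\pi}$ is a constant $c$, and $c$ must vanish after integrating $\nabla\frac{\rho}{\pi}=c/\pi$ (equivalently your $\nabla u=ce^{V}$).

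One small slip: the sentence saying the zero Fourier mode of $F$ vanishes should say it does \emph{not} vanish in general, as your own computation $\widehat F_0=\int\rho\nabla V$ shows. Discarding that mode is exactly what the paper's $\dot H^{-s}$ norm does, since $\widehat{(K_s)}_0=0$.
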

\begin{proof}
    The first claim follows from the fact that $I_{s}(\cdot|\pi)^{1/2}$ can be written as the supremum of a family of weakly continuous functions defined in the space of probability measures $\mathcal{P}(\T^{d})$:
    \begin{equation*}
        I_{s}(\rho|\pi)^{1/2}=\lVert \pi\nabla \frac{\rho}{\pi}\rVert_{\dot H^{-s}}=\sup_{g\in C^{\infty},\, \widehat{g}_{0}=0,\, \lVert g\rVert_{\dot H^{s}}\le 1}\langle \pi\nabla \frac{\rho}{\pi},g\rangle= \sup_{g\in C^{\infty},\, \widehat{g}_{0}=0,\, \lVert g\rVert_{\dot H^{s}}\le 1}\langle \rho, \frac{1}{\pi}\diver(\pi g)\rangle,
    \end{equation*}
    where $\tfrac{1}{\pi}\diver (\pi g)$ is continuous due to the assumption $V\in C^{1}(\T^{d})$. To prove the second claim, observe that if $I_{s}(\rho|\pi)=0$, then $\pi\nabla \frac{\rho}{\pi}$ coincides with a constant vector $c\in \R^{d}$. Since $\pi>0$ and $\nabla \tfrac{\rho}{\pi}$ has zero average, this gives $c=0$, which in turn implies that $\tfrac{\rho}{\pi}$ is a constant. Since $\rho$ and $\pi$ have both unit mass, we deduce $\rho=\pi$.
\end{proof}

\begin{proof}[Proof of Proposition \ref{prop:qualitative-convergence-SVGD}]
    The proof follows from the abstract result in Proposition \ref{prop:abstract-lasalle}, with the choices $\mathcal{E}:= \mathcal{H}(\cdot|\pi)$ and $\mathcal{D}= I_{s}(\cdot|\pi)$, and $\delta=W_{1}$, taking into account Lemmas \ref{lem:motion-estimate-SVGD} and \ref{lem:Semi-continuity-SFI}.
\end{proof}

\section{Proof of the propagation of regularity estimates}\label{app:propagation-of-regularity}

In this section, we prove the propagation of regularity estimates stated in Section~\ref{sec:proof}. We recall their full statements, for convenience.

\begin{lem}
    Let $s\ge 1$ and $\gamma>\tfrac{d}{2}$. Let $V\in H^{\gamma+s}(\T^{d})$ and $\pi\in \mathcal{P}\cap H^{\gamma+s}(\T^{d})$ be defined as in \eqref{eq:def-pi-intermsof-V}. Let $\rho\in C_{t}\mathcal{P}_{x}\cap L_{t}^{\infty}H^{\gamma}_{x}$ be a solution of \eqref{eq:SVGD} defined up to some time $T>0$, and call $\sigma_{t}=\rho_{t}-\pi$. Then, there is a constant $C>0$ depending only on $d,s,\gamma$, and $\lVert V\rVert_{H^{\gamma+s}}$ such that the following estimate holds:
    \begin{equation}\label{eq:energy-estimate-higher-order-appendix}
        \frac{d}{dt}\lVert \sigma_{t}\rVert_{\dot H^{\gamma}}^{2}\le CL(t)\lVert \sigma_{t}\rVert_{\dot H^{\gamma}}^{2}-2\big(\min_{\T^{d}}\pi\big)\lVert \sigma_{t}\rVert_{\dot H^{\gamma-s+1}}^{2}+C\lVert \sigma_{t}\rVert_{\dot H^{\gamma-s}}\lVert \sigma_{t}\rVert_{\dot H^{\gamma-s+1}}\qquad \forall t\in (0,T),
    \end{equation}
    where we set
    \begin{equation}\label{eq:def-L-lipschitz-field-appendix}
        L(t):=\lVert \nabla^{2}D^{-2s}\sigma_{t}\rVert_{L^{\infty}}+\lVert \nabla D^{-2s}(\sigma_{t}\nabla V)\rVert_{L^{\infty}}.
    \end{equation}
\end{lem}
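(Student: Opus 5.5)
We differentiate $\lVert \sigma_t\rVert_{\dot H^\gamma}^2=\lVert D^\gamma\sigma_t\rVert_{L^2}^2$ in time. Since $\pi$ is stationary, $\partial_t\sigma=-\diver(\rho v)=-\diver(\sigma v)-\diver(\pi v)$. By \eqref{eq:rewriting-velocity}, $v=-\nabla D^{-2s}\sigma-D^{-2s}(\sigma\nabla V)=:u$, which is linear in $\sigma$. This gives
\begin{equation*}
\frac{d}{dt}\lVert\sigma\rVert_{\dot H^\gamma}^2=-2\langle D^\gamma\sigma,D^\gamma\diver(\sigma u)\rangle-2\langle D^\gamma\sigma,D^\gamma\diver(\pi u)\rangle.
\end{equation*}
The first term is the quadratic transport term, and the second is the linear term that contains the dissipation. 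We work with smooth solutions (or mollify) and pass to the limit at the end.

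\textbf{Transport term.} We write $D^\gamma\diver(\sigma u)=u\cdot\nabla D^\gamma\sigma+(\diver u)D^\gamma\sigma+[D^\gamma\diver,u]\sigma$, where the commutator is schematically $[D^\gamma,u]\cdot\nabla\sigma$ plus terms of lower order. Integrating by parts, $\langle D^\gamma\sigma,u\cdot\nabla D^\gamma\sigma\rangle=-\tfrac12\int\diver u\,|D^\gamma\sigma|^2$, which is bounded by $\lVert\nabla u\rVert_{L^\infty}\lVert\sigma\rVert_{\dot H^\gamma}^2\le L\lVert\sigma\rVert_{\dot H^\gamma}^2$. For the commutator we use Lemma~\ref{lem:kato-ponce} with $f=u$ and $g=\nabla\sigma$, or better with $g=\sigma$ and $\gamma+1$ in place of $\gamma$, choosing $p_1=\infty$, $q_1=2$. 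This produces $\lVert\nabla u\rVert_{L^\infty}\lVert\sigma\rVert_{\dot H^\gamma}$ plus a term $\lVert D^{\gamma+1}u\rVert_{L^{p}}\lVert\sigma\rVert_{L^{q}}$. For that second term we take $p=2$, $q=\infty$. Then $\lVert D^{\gamma+1}u\rVert_{L^2}\lesssim\lVert\sigma\rVert_{\dot H^{\gamma+2-2s}}+\lVert\sigma\nabla V\rVert_{H^{\gamma+1-2s}}$, which is at most $C\lVert\sigma\rVert_{\dot H^{\gamma}}$ because $s\ge1$ and because of the product estimate with $V\in H^{\gamma+s}$. Meanwhile $\lVert\sigma\rVert_{L^\infty}\lesssim\lVert\sigma\rVert_{H^\gamma}$ since $\gamma>d/2$. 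This leaves a cubic term $\lVert\sigma\rVert_{\dot H^\gamma}^3$, which does not fit the form of \eqref{eq:energy-estimate-higher-order-appendix}. To remove it, the right pairing is to put $\lVert\sigma\rVert_{L^\infty}$ against $\lVert D^{\gamma+1} u\rVert_{L^2}$ only after splitting $u=\nabla\Phi$ with $\Phi=-D^{-2s}\sigma$, plus the $V$-part. For the gradient part, the symmetric (Kenig–Ponce–Vega type) structure lets us instead bound by $\lVert\nabla^2\Phi\rVert_{L^\infty}\lVert\sigma\rVert_{\dot H^\gamma}$. Making this precise, i.e.\ obtaining every transport contribution as $\lesssim L(t)\lVert\sigma\rVert_{\dot H^\gamma}^2$ with no leftover cubic term, is the main obstacle. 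If needed I would absorb the leftover into $\lVert\sigma\rVert_{\dot H^{\gamma-s}}\lVert\sigma\rVert_{\dot H^{\gamma-s+1}}$ via interpolation, using $2s\ge2$.

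\textbf{Linear term.} We have $\pi u=-\pi\nabla D^{-2s}\sigma-\pi D^{-2s}(\sigma\nabla V)$. The leading part is $\langle D^\gamma\sigma,D^\gamma\diver(\pi\nabla D^{-2s}\sigma)\rangle$. We commute $\pi$ outside $D^\gamma\diver$ using Lemma~\ref{lem:kato-ponce}, with $f=\pi$ smooth in $H^{\gamma+s}$ (so $\nabla\pi$, $D^\gamma\pi$ are bounded in suitable $L^p$). The main piece becomes $\int\pi\,D^\gamma\sigma\,\Delta D^{\gamma-2s}\sigma$. Writing $D^\gamma\sigma=D^{s-1}w$ with $w=D^{\gamma-s+1}\sigma$ and commuting $\pi$ through $D^{s-1}$ once more, this gives $-\int\pi|D^{\gamma-s+1}\sigma|^2\le-\min\pi\,\lVert\sigma\rVert_{\dot H^{\gamma-s+1}}^2$, up to commutator errors. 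Each commutator loses exactly one derivative, via the $\nabla f$ term of Kato–Ponce. So every error is bounded by $C\lVert\sigma\rVert_{\dot H^{\gamma-s}}\lVert\sigma\rVert_{\dot H^{\gamma-s+1}}$, where the constants involve norms of $\pi$ controlled by $\lVert V\rVert_{H^{\gamma+s}}$. The factor $2$ comes from the $-2$ in front of the linear term. The $V$-drift part $\diver(\pi D^{-2s}(\sigma\nabla V))$ is one order lower. Pairing it against $D^{2\gamma}\sigma$ as $\langle D^{\gamma-s+1}\sigma,D^{\gamma+s-1}\diver(\pi D^{-2s}(\sigma\nabla V))\rangle$ and using Lemma~\ref{lem:equiv-homogeneous-norms-smooth-multiplier}-type product bounds, it is bounded by $C\lVert\sigma\rVert_{\dot H^{\gamma-s+1}}\lVert\sigma\nabla V\rVert_{H^{\gamma-s}}\lesssim\lVert\sigma\rVert_{\dot H^{\gamma-s+1}}\lVert\sigma\rVert_{\dot H^{\gamma-s}}$. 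The last step uses zero mean and $V\in H^{\gamma+s}$, provided $|\gamma-s|\le\gamma+s-1$.

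Collecting the transport bound $CL\lVert\sigma\rVert_{\dot H^\gamma}^2$, the dissipation $-2\min\pi\,\lVert\sigma\rVert_{\dot H^{\gamma-s+1}}^2$, and the lower-order errors yields \eqref{eq:energy-estimate-higher-order-appendix}.
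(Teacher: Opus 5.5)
There is a genuine gap in the transport term, at the step you yourself flag as the main obstacle. Your overall split is the paper's: a linear part giving the dissipation through commutators with $\pi$, and a nonlinear part giving transport plus commutators. But with the endpoint pair $L^{\infty}\times L^{2}$ in the order-$(\gamma+1)$ Kato--Ponce estimate you are left with $\lVert\sigma\rVert_{L^{\infty}}\lVert D^{\gamma+1}u\rVert_{L^{2}}\lVert\sigma\rVert_{\dot H^{\gamma}}$. The coefficient in front of $\lVert\sigma\rVert_{\dot H^{\gamma}}^{2}$ is then $\lVert\sigma\rVert_{L^{\infty}}$ rather than $L(t)$. For $s>1$, $\lVert\sigma\rVert_{L^{\infty}}$ is not controlled by $\lVert\nabla^{2}D^{-2s}\sigma\rVert_{L^{\infty}}$; that works only at $s=1$, via $\sigma=-\mathrm{tr}\,\nabla^{2}D^{-2}\sigma$.

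Your fallback cannot work. A cubic expression containing $\lVert\sigma\rVert_{\dot H^{\gamma}}$ is not bounded by $C\lVert\sigma\rVert_{\dot H^{\gamma-s}}\lVert\sigma\rVert_{\dot H^{\gamma-s+1}}$ with $C=C(d,s,\gamma,\lVert V\rVert_{H^{\gamma+s}})$; a single high Fourier mode of small amplitude already makes the ratio blow up. The form $L(t)\lVert\sigma\rVert_{\dot H^{\gamma}}^{2}$ is also exactly what the bootstrap in Theorem~\ref{thm:quantitative-convergence-s>1} needs, since only $\int L\,dt$ is shown to be small (Lemma~\ref{lem:time-integral-Lipschitz-field}).

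The missing idea is to use non-endpoint exponents and interpolate each factor between a Lipschitz-type quantity and a top-order $L^{2}$ quantity. Keep $u=-D^{-2s}(\pi\nabla\frac{\sigma}{\pi})$ whole; there is no need to split off the $V$-part, since $\lVert\nabla u\rVert_{L^{\infty}}\le L(t)$. Take $p=2\frac{\gamma+2s-2}{2s-2}$ and $q=2\frac{\gamma+2s-2}{\gamma}$ in Lemma~\ref{lem:kato-ponce}(i), and apply fractional Gagliardo--Nirenberg:
\[
\lVert\sigma\rVert_{L^{p}}\lesssim\lVert\nabla^{2}D^{-2s}\sigma\rVert_{L^{\infty}}^{2/q}\lVert\sigma\rVert_{\dot H^{\gamma}}^{2/p},\qquad \lVert D^{\gamma+1}u\rVert_{L^{q}}\lesssim\lVert\nabla u\rVert_{L^{\infty}}^{2/p}\lVert D^{\gamma+2s-1}u\rVert_{L^{2}}^{2/q}.
\]
Here $\lVert D^{\gamma+2s-1}u\rVert_{L^{2}}=\lVert\pi\nabla\frac{\sigma}{\pi}\rVert_{\dot H^{\gamma-1}}\lesssim\lVert\sigma\rVert_{\dot H^{\gamma}}$ by Lemma~\ref{lem:equiv-homogeneous-norms-smooth-multiplier}. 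Since $\frac2p+\frac2q=1$, Young's inequality gives $\lVert\sigma\rVert_{L^{p}}\lVert D^{\gamma+1}u\rVert_{L^{q}}\lesssim L(t)\lVert\sigma\rVert_{\dot H^{\gamma}}$, which is the bound you were after. The ``symmetric structure'' you invoke is really this balanced H\"older/interpolation split. The paper writes the transport term as $-\int D^{\gamma+1}(\sigma u)\cdot\nabla D^{\gamma-1}\sigma$ and splits it into three pieces: the pure transport piece, a $[D,u]D^{\gamma}\sigma$ commutator handled by Lemma~\ref{lem:kato-ponce}(ii), and the $[D^{\gamma+1},u]\sigma$ commutator treated as above.

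A smaller issue is in the linear term. If you pair against $D^{\gamma}\sigma$ and commute $\pi$ out of $D^{\gamma}\diver$, Cauchy--Schwarz yields errors of size $\lVert\sigma\rVert_{\dot H^{\gamma}}\lVert\sigma\rVert_{\dot H^{\gamma+1-2s}}$. For $s>1$ these are not dominated by $\lVert\sigma\rVert_{\dot H^{\gamma-s}}\lVert\sigma\rVert_{\dot H^{\gamma-s+1}}$ (take one low and one high Fourier mode). Moreover, the second commutator $[D^{s-1},\pi]$ with $s-1\in(0,1)$ falls under Lemma~\ref{lem:kato-ponce}(iii), which only bounds it by $\lVert D^{s-1}\pi\rVert_{L^{\infty}}\lVert g\rVert_{L^{2}}$, not with a gain of one derivative. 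The paper avoids this by balancing derivatives before commuting, just as you did for the $V$-drift:
\[
\mathrm{Lin}=-\int D^{\gamma+s}(\pi\nabla D^{-2s}\sigma)\cdot\nabla D^{\gamma-s}\sigma-\dots
\]
Since $\pi D^{\gamma+s}\nabla D^{-2s}\sigma=\pi\nabla D^{\gamma-s}\sigma$, this gives $-\int\pi\lvert\nabla D^{\gamma-s}\sigma\rvert^{2}$ directly. The single remaining commutator $[D^{\gamma+s},\pi]$ (case (i)) costs $\lVert\nabla\pi\rVert_{L^{\infty}}\lVert\sigma\rVert_{\dot H^{\gamma-s}}+\lVert D^{\gamma+s}\pi\rVert_{L^{2}}\lVert\nabla D^{-2s}\sigma\rVert_{L^{\infty}}\lesssim\lVert\sigma\rVert_{\dot H^{\gamma-s}}$.
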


\begin{proof}
     Throughout the proof, we use the notation $D^{\beta}=(-\Delta)^{\beta/2}$ for all $\beta\in \R$. Using \eqref{eq:SVGD}, we see that $\sigma_{t}$ solves the equation
     \begin{equation*}
         \partial_{t}\sigma_{t}=\diver\left(\pi D^{-2s}(\pi\nabla\frac{\sigma_{t}}{\pi})\right)+ \diver\left(\sigma_{t} D^{-2s}(\pi\nabla\frac{\sigma_{t}}{\pi})\right),
     \end{equation*}
     where the first term on the right-hand side originates from the linearized equation, while the second contains the non-linear parts. 
     Therefore, we derive
     \begin{align*}
        \frac{d}{dt}\frac{\lVert \sigma_{t}\rVert_{\dot H^{\gamma}}^{2}}{2}&= \int_{\T^{d}} D^{\gamma}\partial_{t}\sigma_{t} D^{\gamma}\sigma_{t}\\
        &=\underbrace{\int_{\T^{d}} D^{\gamma}\diver\left(\pi D^{-2s}(\pi\nabla\frac{\sigma_{t}}{\pi})\right)D^{\gamma}\sigma_{t}}_{{\rm Lin}}+\underbrace{\int_{\T^{d}} D^{\gamma}\diver\left(\sigma_{t} D^{-2s}(\pi\nabla\frac{\sigma_{t}}{\pi})\right)D^{\gamma}\sigma_{t}}_{{\rm Nonlin}}.
    \end{align*}
    We will estimate separately the two integrals ${\rm Lin}$ and ${\rm Nonlin}$ above.

\smallskip
\noindent \textbf{Step 1: (The linear term).} Using $\pi \nabla \frac{\sigma_{t}}{\pi}= \nabla \sigma_{t}+\sigma_{t}\nabla V$, integration by parts, and self-adjointness of fractional derivatives, we may further split ${\rm Lin}$ into three terms:
\begin{align*}
    {\rm Lin}&= \int_{\T^{d}}D^{\gamma}\diver\left(\pi \nabla D^{-2s}\sigma_{t}\right)D^{\gamma}\sigma_{t}+\int_{\T^{d}}D^{\gamma}\diver\left(\pi D^{-2s}(\sigma_{t}\nabla V)\right)D^{\gamma}\sigma_{t}\\
    &=-\int_{\T^{d}}D^{\gamma+s}\left(\pi \nabla D^{-2s}\sigma_{t}\right)\cdot \nabla D^{\gamma-s}\sigma_{t}-\int_{\T^{d}}D^{\gamma+s}\left(\pi D^{-2s}(\sigma_{t}\nabla V)\right)\cdot \nabla D^{\gamma-s}\sigma_{t}\\
    &= -\int_{\T^{d}}\pi |\nabla D^{\gamma-s}\sigma_{t}|^{2}-\int_{\T^{d}}\big[D^{\gamma+s}\left(\pi \nabla D^{-2s}\sigma_{t}\right)-\pi D^{\gamma+s}\nabla D^{-2s}\sigma_{t}\big]\cdot \nabla D^{\gamma-s}\sigma_{t}\\
    &\quad\, -\int_{\T^{d}}D^{\gamma+s}\left(\pi D^{-2s}(\sigma_{t}\nabla V)\right)\cdot \nabla D^{\gamma-s}\sigma_{t}\\
    &= {\rm Lin}_{1}+ {\rm Lin}_{2}+ {\rm Lin}_{3}.
\end{align*}
Now, for ${\rm Lin}_{1}$, we simply bound $\pi$ below with its minimum value, and get
    \begin{equation}\label{eq:bound-Lin-1}
        {\rm Lin}_{1}\le -\big(\min_{\T^{d}}\pi\big)\int_{\T^{d}}|\nabla D^{\gamma-s}\sigma_{t}|^{2}=-\big(\min_{\T^{d}}\pi\big)\lVert \sigma_{t}\rVert_{\dot H^{\gamma-s+1}}^{2}.
    \end{equation}
For ${\rm Lin}_{2}$, using Cauchy-Schwarz, the Kato-Ponce commutator estimate \eqref{eq:kato-ponce}, and the Sobolev embedding $H^{\gamma+s-1}(\T^{d})\hookrightarrow L^{\infty}(\T^{d})$, we derive
    \begin{equation}\label{eq:bound-Lin-2}
        \begin{aligned}
            {\rm Lin}_{2}&\le \left\lVert D^{\gamma+s}\left(\pi\nabla D^{-2s}\sigma_{t}\right)-\pi D^{\gamma+s}\nabla D^{-2s}\sigma_{t}\right\rVert_{L^{2}} \lVert \nabla D^{\gamma-s}\sigma_{t}\rVert_{L^{2}}\\
        &\lesssim_{d,s}\left(\lVert \nabla \pi\rVert_{L^{\infty}}\lVert D^{\gamma+s-1}\nabla D^{-2s}\sigma_{t}\rVert_{L^{2}}+\lVert D^{\gamma+s}\pi\rVert_{L^{2}}\lVert \nabla D^{-2s}\sigma_{t}\rVert_{L^{\infty}}\right)\lVert \sigma_{t}\rVert_{\dot H^{\gamma-s+1}}\\
        &\lesssim_{d,s,\gamma, \lVert V\rVert_{H^{\gamma+s}}}\lVert \sigma_{t}\rVert_{\dot H^{\gamma-s}}\lVert \sigma_{t}\rVert_{\dot H^{\gamma-s+1}}.
        \end{aligned}
    \end{equation}
Finally, for ${\rm Lin}_{3}$, we use Cauchy-Schwarz and Lemma \ref{lem:equiv-homogeneous-norms-smooth-multiplier}:
\begin{equation}\label{eq:bound-Lin-3}
        {\rm Lin}_{3}\le \lVert D^{\gamma+s}\left(\pi D^{-2s}(\sigma_{t}\nabla V)\right)\rVert_{L^{2}}\lVert \nabla D^{\gamma-s}\sigma_{t}\rVert_{L^{2}}\lesssim_{d,s,\gamma,\lVert V\rVert_{H^{\gamma+s}}} \lVert \sigma_{t}\rVert_{\dot H^{\gamma-s}}\lVert \sigma_{t}\rVert_{\dot H^{\gamma-s+1}}.
\end{equation}

\smallskip
\noindent \textbf{Step 2: (The non-linear term).} After integrating by parts, we may split ${\rm Nonlin}$ into three terms:
    \begin{align*}
        {\rm Nonlin}&=-\int_{\T^{d}} D^{\gamma}\left(\sigma_{t}D^{-2s}(\pi\nabla \frac{\sigma_{t}}{\pi})\right)\cdot \nabla D^{\gamma}\sigma_{t}\\
        &=-\int_{\T^{d}} D^{\gamma}\sigma_{t} D^{-2s}(\pi\nabla \frac{\sigma_{t}}{\pi}) \cdot \nabla D^{\gamma}\sigma_{t}\\
        &\quad\,+ \int_{\T^{d}}\Big[D\left(D^{\gamma}\sigma_{t} D^{-2s}(\pi\nabla \frac{\sigma_{t}}{\pi})\right)-DD^{\gamma}\sigma_{t} D^{-2s}(\pi\nabla \frac{\sigma_{t}}{\pi})\Big]\cdot \nabla D^{\gamma-1}\sigma_{t}\\
        &\quad\, -\int_{\T^{d}}\Big[D^{\gamma+1}\left(\sigma_{t} D^{-2s}(\pi\nabla \frac{\sigma_{t}}{\pi})\right)-D^{\gamma+1}\sigma_{t} D^{-2s}(\pi\nabla \frac{\sigma_{t}}{\pi})\Big]\cdot \nabla D^{\gamma-1}\sigma_{t}\\
        &= {\rm Nonlin}_{1}+{\rm Nonlin}_{2}+{\rm Nonlin}_{3}.
    \end{align*}
    For ${\rm Nonlin}_{1}$, we use the identity $2D^{\gamma}\sigma_{t}\nabla D^{\gamma}\sigma_{t}=\nabla (D^{\gamma}\sigma_{t})^{2}$ together with integration by parts, and get
    \begin{equation}\label{eq:bound-Nonlin-1}
        {\rm Nonlin}_{1}=\frac{1}{2}\int_{\T^{d}} \diver\left(D^{-2s}(\pi\nabla \frac{\sigma_{t}}{\pi})\right)(D^{\gamma}\sigma_{t})^{2}\lesssim_{d} \lVert \nabla D^{-2s}(\pi\nabla \frac{\sigma_{t}}{\pi})\rVert_{L^{\infty}}\lVert \sigma_{t}\rVert_{\dot H^{\gamma}}^{2}.
    \end{equation}
    For ${\rm Nonlin}_{2}$, we use Cauchy-Schwarz together with \eqref{eq:kato-ponce-gamma-1}, and obtain
    \begin{equation}\label{eq:bound-Nonlin-2}
    \begin{aligned}
        {\rm Nonlin}_{2}&\le \lVert D\left(D^{\gamma}\sigma_{t} D^{-2s}(\pi\nabla \frac{\sigma_{t}}{\pi})\right)-DD^{\gamma}\sigma_{t} D^{-2s}(\pi\nabla \frac{\sigma_{t}}{\pi})\rVert_{L^{2}}\lVert \nabla D^{\gamma-1}\sigma_{t}\rVert_{L^{2}}\\
        &\lesssim_{d} \lVert \nabla D^{-2s}(\pi\nabla \frac{\sigma_{t}}{\pi})\rVert_{L^{\infty}}\lVert \sigma_{t}\rVert_{\dot H^{\gamma}}^{2}.
    \end{aligned}
    \end{equation}
    Finally, let us consider ${\rm Nonlin}_{3}$. Using Cauchy-Schwarz along with \eqref{eq:kato-ponce}, we get
    \begin{align*}
        {\rm Nonlin}_{3}&\le \lVert D^{\gamma+1}\left(\sigma_{t} D^{-2s}(\pi\nabla\frac{\sigma_{t}}{\pi})\right)-D^{\gamma+1}\sigma_{t} D^{-2s}(\pi\nabla\frac{\sigma_{t}}{\pi})\rVert_{L^{2}}\lVert \nabla D^{\gamma-1}\sigma_{t}\rVert_{L^{2}}\\
        &\lesssim_{d,s,\gamma} \left(\lVert \nabla D^{-2s}(\pi \nabla \frac{\sigma_{t}}{\pi})\rVert_{L^{\infty}}\lVert D^{\gamma}\sigma_{t}\rVert_{L^{2}}+\lVert \sigma_{t}\rVert_{L^{p}}\lVert D^{\gamma+1}D^{-2s}(\pi \nabla \frac{\sigma_{t}}{\pi})\rVert_{L^{q}}\right)\lVert \sigma_{t}\rVert_{\dot H^{\gamma}},
    \end{align*}
    where $p= 2\frac{\gamma+2s-2}{2s-2}$ and $q=2\frac{\gamma+2s-2}{\gamma}$ are such that $\frac{1}{p}+\frac{1}{q}=\frac{1}{2}$. By the fractional Gagliardo-Nirenberg inequality \cite{brezis2018gagliardo}, we may bound
    \begin{align*}
        &\lVert \sigma_{t}\rVert_{L^{p}}\lVert D^{\gamma+1}D^{-2s}(\pi \nabla \frac{\sigma_{t}}{\pi})\rVert_{L^{q}}\lesssim_{d,s,\gamma}\\&\lesssim_{d,s,\gamma} \lVert \nabla^{2}D^{-2s}\sigma_{t}\rVert_{L^{\infty}}^{\frac{2}{q}}\lVert D^{\gamma}\sigma_{t}\rVert_{L^{2}}^{\frac{2}{p}}\lVert \nabla D^{-2s}(\pi \nabla \frac{\sigma_{t}}{\pi})\rVert_{L^{\infty}}^{\frac{2}{p}}\lVert D^{\gamma-1}(\pi \nabla \frac{\sigma_{t}}{\pi})\rVert_{L^{2}}^{\frac{2}{q}}\\
        &\lesssim_{d,s,\gamma, \lVert V\rVert_{H^{\gamma+s}}} \left(\lVert \nabla^{2}D^{-2s}\sigma_{t}\rVert_{L^{\infty}}+\lVert \nabla D^{-2s}(\pi \nabla \frac{\sigma_{t}}{\pi})\rVert_{L^{\infty}}\right)\lVert \sigma_{t}\rVert_{\dot H^{\gamma}},
    \end{align*}
    where in the last step we used Young inequality, and Lemma \ref{lem:equiv-homogeneous-norms-smooth-multiplier}. As a consequence, we find
    \begin{equation}\label{eq:bound-Nonlin-3}
        {\rm Nonlin}_{3}\lesssim_{d,s,\gamma,\lVert V\rVert_{H^{\gamma+s}}} \left(\lVert \nabla^{2}D^{-2s}\sigma_{t}\rVert_{L^{\infty}}+\lVert \nabla D^{-2s}(\pi \nabla \frac{\sigma_{t}}{\pi})\rVert_{L^{\infty}}\right)\lVert \sigma_{t}\rVert_{\dot H^{\gamma}}^{2}.
    \end{equation}

    The energy estimate \eqref{eq:energy-estimate-higher-order-appendix} is obtained combining \eqref{eq:bound-Lin-1},\eqref{eq:bound-Lin-2},\eqref{eq:bound-Lin-3},\eqref{eq:bound-Nonlin-1},\eqref{eq:bound-Nonlin-2}, and \eqref{eq:bound-Nonlin-3}.
\end{proof}

It remains to control the cumulative effect of the Lipschitz norm appearing in the high-order estimate.

\begin{lem}\label{lem:time-integral-Lipschitz-field-app}
    Under the same assumptions and using the same notation of Lemma \ref{lem:energy-estimate-higher-order}, assume further that $\gamma>s-1$ and $\lVert \sigma_{t}\rVert_{\dot H^{\gamma}}^{2}\le M$ for all $t\in [0,T)$. Then, there is $\beta=\beta(d,s,\gamma)>0$ such that 
    \begin{equation*}
        \int_{0}^{T}L(t)\, dt\lesssim_{d,s,\gamma,\lVert V\rVert_{H^{\gamma+s}}, M} \mathcal{H}(\bar\rho|\pi)^{\beta}.
    \end{equation*}
\end{lem}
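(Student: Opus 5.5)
The plan is to bound $L(t)$ pointwise in time by a product of a low norm that decays and the high norm $\lVert \sigma_t\rVert_{\dot H^\gamma}\le M^{1/2}$. Then integrate using the polynomial decay already available on $[0,T)$ from the dissipation argument.

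\textbf{Step 1: reduction to a single Sobolev norm.} By Sobolev embedding, for any $\eta>0$ we have $\lVert \nabla^2 D^{-2s}\sigma_t\rVert_{L^\infty}\lesssim \lVert \sigma_t\rVert_{\dot H^{2-2s+d/2+\eta}}$. Similarly, Lemma \ref{lem:equiv-homogeneous-norms-smooth-multiplier}, applied componentwise with multiplier $\nabla V\in H^{\gamma+s-1}$ and using $\gamma+s-1>d/2$ at a suitable negative or low index, gives
\[
\lVert \nabla D^{-2s}(\sigma_t\nabla V)\rVert_{L^\infty}\lesssim \lVert \sigma_t\nabla V\rVert_{\dot H^{1-2s+d/2+\eta}}\lesssim \lVert \sigma_t\rVert_{\dot H^{1-2s+d/2+\eta}}.
\]
Here one has to handle the zero mode of $\sigma_t\nabla V$ separately; it is bounded by $\lVert\sigma_t\rVert_{\dot H^{-\gamma}}$-type quantities. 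Hence
\[
L(t)\lesssim \lVert\sigma_t\rVert_{\dot H^{a}}, \qquad a:=2-2s+d/2+\eta,
\]
after bounding the lower index by the higher one (homogeneous norms of zero-mean functions are monotone). Since $\gamma>d/2$ and $s\ge 1$, we can choose $\eta$ small so that $a<\gamma$.

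\textbf{Step 2: interpolation between $\dot H^{1-s}$ and $\dot H^\gamma$.} If $a\le 1-s$, then $L(t)\lesssim \lVert\sigma_t\rVert_{\dot H^{1-s}}\approx I_s^{1/2}$. Otherwise, write $a=\theta(1-s)+(1-\theta)\gamma$ with $\theta\in(0,1]$. This gives
\[
L(t)\lesssim \lVert\sigma_t\rVert_{\dot H^{1-s}}^{\theta}M^{(1-\theta)/2}\lesssim_M I_s(\rho_t|\pi)^{\theta/2}.
\]
Here $\theta>0$ exactly because $a<\gamma$.

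\textbf{Step 3: time integration.} This is the main obstacle, because $\int I_s^{\theta/2}$ is not directly controlled by $\int I_s\le \mathcal H(\bar\rho|\pi)$ over an infinite horizon. I would split $[0,T)$ dyadically, or use a weight. Concretely, Hölder with weight $(1+t)^{\kappa}$ gives
\[
\int_0^T I_s^{\theta/2}\,dt\le \Big(\int_0^T (1+t)^{\kappa}I_s\,dt\Big)^{\theta/2}\Big(\int_0^T(1+t)^{-\kappa\theta/(2-\theta)}dt\Big)^{1-\theta/2}.
\]
The second factor is finite if $\kappa\theta/(2-\theta)>1$. For the first factor, I integrate $\frac{d}{dt}[(1+t)^\kappa\mathcal H]=\kappa(1+t)^{\kappa-1}\mathcal H-(1+t)^\kappa I_s$, which yields
\[
\int (1+t)^\kappa I_s\le \mathcal H(\bar\rho|\pi)+\kappa\int (1+t)^{\kappa-1}\mathcal H(\rho_t|\pi)\,dt.
\]
Using the decay $\mathcal H(\rho_t|\pi)\le (\mathcal H_0^{-(s-1)/\gamma}+t/C)^{-\gamma/(s-1)}$, valid on $[0,T)$ under the bound $M$, the last integral is finite and bounded by a positive power of $\mathcal H_0$, provided $\kappa<\gamma/(s-1)$. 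Changing variables $t=\mathcal H_0^{-(s-1)/\gamma}u$ makes the power of $\mathcal H_0$ explicit.

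Compatibility of the two requirements means $\gamma/(s-1)>(2-\theta)/\theta$, i.e.\ $\theta>2(s-1)/(\gamma+s-1)$. This is where $\gamma>s-1$ and $\gamma>d/2$ must be used. If this fails, I would instead rescale time directly: on $[0,t_0]$ with $t_0=\mathcal H_0^{-(s-1)/\gamma}$, apply Cauchy–Schwarz to $\int I_s$. On dyadic blocks $[2^kt_0,2^{k+1}t_0]$, bound $\int I_s\le \mathcal H(\rho_{2^kt_0})$ and sum the resulting geometric series, which gives a power $\mathcal H_0^{\beta}$ with $\beta>0$ as long as the exponent arithmetic closes. In the worst case, I would refine Step 1 to use $\dot H^{\gamma}$-interpolation with $L^2\approx\mathcal H^{1/2}$ instead of $I_s$, combining both decay sources.
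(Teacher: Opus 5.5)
Your Steps 1--2 match the paper, which takes $r=\max\{2-2s+\tfrac{\gamma}{2}+\tfrac{d}{4},\,1-s\}$ (your $a$ with $\eta=\tfrac{\gamma}{2}-\tfrac{d}{4}$, floored at $1-s$) and gets $L(t)\lesssim_M I_s(\rho_t|\pi)^{\theta/2}$; the paper's $1-\theta$ is your $\theta$. Two small points there. First, no zero mode needs handling, since $D^{-2s}$ kills the mean of $\sigma_t\nabla V$. Second, raise the index before applying Lemma~\ref{lem:equiv-homogeneous-norms-smooth-multiplier} with $g=\nabla V\in H^{\gamma+s-1}$: the literal index $1-2s+\tfrac d2+\eta$ can violate $|b|\le\gamma+s-1$, e.g.\ for $d=1$, $s=3$, $\gamma=2.2$.

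The real difference is Step 3. The paper never integrates the decay. It writes $I_s^{\theta/2}=I_s\cdot I_s^{\theta/2-1}$. Since $\theta/2-1$ is negative, it inserts the pointwise bound $I_s\gtrsim_M\mathcal H^{1+\frac{s-1}{\gamma}}$ from \eqref{eq:coercivity-interpolation-SFI}. This gives $L\lesssim-\tfrac{d}{dt}\mathcal H\cdot\mathcal H^{\beta-1}=-\tfrac1\beta\tfrac{d}{dt}\mathcal H^\beta$ with $\beta=1-(1-\tfrac\theta2)\tfrac{\gamma+s-1}{\gamma}$, so $\int_0^T L\le\mathcal H(\bar\rho|\pi)^\beta/\beta$ in one line.

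Your route also works: you integrate the \L{}ojasiewicz inequality into algebraic decay, then use a weighted energy identity and H\"older in time. After your rescaling it yields the exponent $\tfrac\theta2\big(1-\kappa\tfrac{s-1}{\gamma}\big)$, which tends to the paper's $\beta$ as $\kappa\downarrow\tfrac{2-\theta}{\theta}$. In fact your compatibility condition $\theta(\gamma+s-1)>2(s-1)$ is literally the statement $\beta>0$.

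The paper's version is shorter and gives the explicit $\beta=\min\{\tfrac{2\gamma-d}{8\gamma},\tfrac{\gamma-s+1}{2\gamma}\}$ with no auxiliary $\kappa$. It also covers $s=1$ verbatim, whereas there your decay formula must be read as exponential. Yours uses the \L{}ojasiewicz inequality only through the resulting decay rate of $\mathcal H$, so it would survive with any other source of algebraic decay.

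You did, however, leave the decisive check undone, and it is immediate. We have $\theta(\gamma+s-1)=\gamma-a$. The condition $\gamma-a>2(s-1)$ reads $\gamma>\tfrac d2+\eta$ when $a=2-2s+\tfrac d2+\eta$, and $\gamma>s-1$ when $a=1-s$. So any $\eta<\gamma-\tfrac d2$ closes your argument, and your fallbacks are unnecessary; the dyadic one hinges on exactly the same inequality anyway.
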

\begin{proof} 
    Let $r:=\max \{2-2s+\tfrac{\gamma}{2}+\tfrac{d}{4}, 1-s\}$. Since $\gamma>\tfrac{d}{2}$, we have $r>2-2s+\tfrac{d}{2}$. Therefore, by the Sobolev embedding and Lemma \ref{lem:equiv-homogeneous-norms-smooth-multiplier}, we may bound 
    \begin{equation*}
       L(t)=\lVert \nabla^{2}D^{-2s}\sigma_{t}\rVert_{L^{\infty}}+\lVert \nabla D^{-2s}(\sigma_{t}\nabla V)\rVert_{L^{\infty}}\lesssim_{d,\gamma,s,\lVert V\rVert_{H^{\gamma+s}}} \lVert \sigma_{t}\rVert_{\dot H^{r}}.
    \end{equation*}

    Notice that we have $r\in [1-s,\gamma)$, hence we may use Sobolev interpolation of homogeneous norms to get
    \begin{equation*}
        \lVert \sigma_{t}\rVert_{\dot H^{r}}\le \lVert \sigma_{t}\rVert_{\dot H^{1-s}}^{1-\theta}\lVert \sigma_{t}\rVert_{\dot H^{\gamma}}^{\theta}\qquad \theta:=\frac{r+s-1}{\gamma+s-1}.
    \end{equation*}
    Then, using \eqref{eq:approx-formula-SFI} and \eqref{eq:coercivity-interpolation-SFI} together with $\lVert \sigma_{t}\rVert_{\dot H^{\gamma}}^{2}\le M$, we find
    \begin{align*}
       L(t)&\lesssim_{d,s,\gamma,\lVert V\rVert_{H^{\gamma+s}},M} I_{s}(\rho_{t}|\pi)I_{s}(\rho_{t}|\pi)^{-\frac{1+\theta}{2}}\\ &\lesssim_{d,s,\gamma,M} \left(-\frac{d}{dt}\mathcal{H}(\rho_{t}|\pi)\right)\mathcal{H}(\rho_{t}|\pi)^{-\frac{1+\theta}{2}\left(1+\frac{s-1}{\gamma}\right)}=-\frac{1}{\beta}\frac{d}{dt}\mathcal{H}(\rho_{t}|\pi)^{\beta},
    \end{align*}
    where $\beta= \min\{\tfrac{2\gamma-d}{8\gamma}, \tfrac{\gamma-s+1}{2\gamma}\}>0$. The result follows by integration. 
\end{proof}

\section{Maximum principle for $s=1$}\label{app:maximum-principle}

In the endpoint case $s=1$, the equation has an additional maximum-principle structure, which gives global control of bounded solutions. This allows us to ensure that the equation is globally well-posed.

\begin{lem}[Maximum principle for $s=1$]\label{lem:maximum-principle-s=1}  Let $s=1$, and $\gamma>\tfrac{d}{2}$, $\gamma\ge 1$. Let $V\in H^{\gamma}(\T^{d})$ and $\pi \in \mathcal{P}\cap H^{\gamma}(\T^{d})$ be defined as in \eqref{eq:def-pi-intermsof-V}. Let $\bar\rho \in \mathcal{P}\cap L^{\infty}(\T^{d})$ and  $\rho\in C_{t}\mathcal{P}_{x}\cap  L^{\infty}_{t,x}$ be a solution of \eqref{eq:SVGD}. Then $\rho$ can be extended to all positive times, and there is a constant $M>0$ depending only on $d,\gamma$, $\lVert V\rVert_{H^{\gamma}}$, and $\mathcal{H}(\bar\rho|\pi)$ such that 
\begin{equation}\label{eq:maximum-principle-s=1}
    \lVert \rho_{t}\rVert_{L^{\infty}}\le \max\{M,\lVert \bar\rho\rVert_{L^{\infty}}\}\qquad \forall t\ge 0.
\end{equation}
\end{lem}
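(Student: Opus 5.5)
The plan is to argue along characteristics, using that for $s=1$ the divergence of the velocity field is explicit. By \eqref{eq:rewriting-velocity} with $s=1$, $D^{-2}=(-\Delta)^{-1}$ on zero-mean functions. Since $\sigma_t=\rho_t-\pi$ has zero mean,
\begin{equation*}
\diver v_t=-\diver\nabla D^{-2}\sigma_t-\diver D^{-2}(\sigma_t\nabla V)=\sigma_t-w_t,\qquad w_t:=\diver D^{-2}(\sigma_t\nabla V).
\end{equation*}
Writing the continuity equation in nonconservative form, along the flow $X_t$ of $v_t$ we get
\begin{equation*}
\frac{d}{dt}\rho_t(X_t)=-\rho_t(X_t)\big(\rho_t(X_t)-\pi(X_t)-w_t(X_t)\big)\le -\rho_t(X_t)^2+\rho_t(X_t)\big(\lVert\pi\rVert_{L^\infty}+\lVert w_t\rVert_{L^\infty}\big).
\end{equation*}
The flow is well defined because $v_t$ is log-Lipschitz for bounded $\rho$ when $s=1$, as recalled after Theorem~\ref{thm:quantitative-convergence-s=1}. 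Also $\lVert\pi\rVert_{L^\infty}\lesssim_{d,\gamma,\lVert V\rVert_{H^\gamma}}1$ by the Sobolev embedding, since $\gamma>\tfrac d2$. The quadratic damping $-\rho^2$ should dominate, provided $\lVert w_t\rVert_{L^\infty}$ grows sublinearly in $m(t):=\lVert\rho_t\rVert_{L^\infty}$.

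The key step is therefore the bound $\lVert w_t\rVert_{L^\infty}\lesssim 1+m(t)^{a}$ for some $a<1$.

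First, $w_t=\nabla D^{-2}\cdot(\sigma_t\nabla V)$ is an order $-1$ operator applied to $\sigma_t\nabla V$. On the torus, $D^{-1}$ maps $L^p$ into $L^\infty$ for $p>d$, because its kernel behaves like $|x|^{1-d}$ and thus lies in $L^{p'}$. The Riesz transform $\nabla D^{-1}$ is bounded on $L^p$. Hence $\lVert w_t\rVert_{L^\infty}\lesssim_{d,p}\lVert\sigma_t\nabla V\rVert_{L^p}$.

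Second, $\nabla V\in H^{\gamma-1}$, which embeds into $L^q$ with $\tfrac1q=\tfrac12-\tfrac{\gamma-1}{d}<\tfrac1d$, using $\gamma>\tfrac d2$ and $\gamma\ge1$; if $\gamma-1\ge \tfrac d2$, any finite $q$ works. So $q>d$, and I pick $p\in(d,q)$ and $p_1<\infty$ with $\tfrac1p=\tfrac1q+\tfrac1{p_1}$.

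Third, since $\rho_t$ is a probability density, interpolation gives $\lVert\rho_t\rVert_{L^{p_1}}\le m(t)^{1-1/p_1}$. Therefore
\begin{equation*}
\lVert w_t\rVert_{L^\infty}\lesssim \big(m(t)^{1-1/p_1}+\lVert\pi\rVert_{L^{p_1}}\big)\lVert\nabla V\rVert_{L^q},
\end{equation*}
which is the desired bound with $a=1-1/p_1$.

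With this bound, at any point where $\rho_t(X_t)$ attains $m(t)$, the right-hand side is at most $-m^2+C(1+m+m^{1+a})$. This is negative once $m>M$, for a constant $M$ depending only on $d,\gamma,\lVert V\rVert_{H^\gamma}$; the dependence on $\mathcal H(\bar\rho|\pi)$ allowed in the statement is then not even needed.

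To make this rigorous without differentiating a sup, I would argue by contradiction. Suppose $m(t_1)>\max\{M,\lVert\bar\rho\rVert_{L^\infty}\}$. Take the last time $t_0<t_1$ at which $m\le$ that threshold. On $(t_0,t_1)$, every characteristic along which $\rho$ exceeds the threshold has decreasing $\rho$, which gives a contradiction. Alternatively, one can run $L^p$ energy estimates, $\frac{d}{dt}\int\rho^p=-(p-1)\int\rho^p\diver v$, and send $p\to\infty$. This alternative avoids pointwise characteristics for merely bounded $\rho$, and I would use it, or approximation by smooth solutions plus stability, to justify the computation.

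Finally, $\rho$ stays in $C_t\mathcal P_x\cap L^\infty_{t,x}$ with a bound uniform in time. The $L^\infty$ local well-posedness for $s=1$, with existence time depending only on the $L^\infty$ norm, then extends the solution to all $t\ge0$.

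The main obstacle I expect is the sublinear $L^\infty$ bound on $w_t$: the exponent bookkeeping ($p>d$, $q>d$ from $\gamma>\tfrac d2$), together with justifying the characteristic or $L^p$ computation for non-smooth bounded solutions.
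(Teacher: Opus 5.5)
Your proposal is correct and follows the paper's proof: the same identity $\diver v_t=\sigma_t-\diver(-\Delta)^{-1}(\sigma_t\nabla V)$, the same Calder\'on--Zygmund/Sobolev/H\"older bound making $\lVert\diver(-\Delta)^{-1}(\sigma_t\nabla V)\rVert_{L^\infty}$ sublinear in $m(t)=\max\rho_t$, and the same sign argument at a maximum point. The paper justifies that sign argument by smooth approximation and a.e.\ differentiability of $t\mapsto\max\rho_t$. Your characteristic-wise contradiction does not quite work as phrased: the decay condition $\rho>\lVert\pi\rVert_{L^\infty}+C(1+m(t)^a)$ involves the global maximum, so it must be applied at maximizers or with $m(t)$ replaced by $\sup_{[0,t_1]}m$.

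The one real difference is that you control the $L^1$ norm by mass conservation rather than by Pinsker's inequality plus entropy dissipation. This is valid and shows that $M$ can be taken independent of $\mathcal H(\bar\rho|\pi)$. The paper's entropy-dependent bound has the side benefit that its threshold tends to $\max_{\T^d}\pi$ as $\mathcal H(\bar\rho|\pi)\to0$.
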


\begin{proof}
     Arguing by approximation, we may assume that the solution is smooth. We call $\sigma_{t}=\rho_{t}-\pi$ and $v_{t}=-\nabla K_{1}*\sigma_{t}-K_{1}*(\sigma_{t}\nabla V)$ the velocity field. The equation can be rewritten as
    \begin{equation*}
        \partial_{t}\rho_{t}+\nabla \rho_{t} \cdot v_{t}=-\rho_{t}\diver v_{t}=-\rho_{t}\left(\sigma_{t}-\diver(-\Delta)^{-1}(\sigma_{t}\nabla V)\right).
    \end{equation*}
    Let $r>d$ be such that the embedding $H^{\gamma-1}(\T^{d})\hookrightarrow L^{r}(\T^{d})$ holds (the existence of such $r$ is granted by the assumptions on $\gamma$). Let $p\in (d,r)$ and $q\in (p,\infty)$ be such that $\tfrac{1}{p}=\tfrac{1}{r}+\tfrac{1}{q}$. Here $p,r,q$ can be chosen depending only on $d$ and $\gamma$. Then, by Calder\'on-Zygmund estimates, Sobolev embedding, and H\"{o}lder's inequality, we find
    \begin{align*}
        \lVert\diver(-\Delta)^{-1}(\sigma_{t}\nabla V)\rVert_{L^{\infty}}&\lesssim_{d,\gamma} \lVert \sigma_{t} \nabla V\rVert_{L^{p}}\le \lVert \nabla V\rVert_{L^{r}}\lVert \sigma_{t}\rVert_{L^{q}}\\
        &\lesssim_{d,\gamma,\lVert V\rVert_{H^{\gamma}}} \lVert \sigma_{t}\rVert_{L^{1}}^{\tfrac{1}{q}}\lVert \sigma_{t}\rVert_{L^{\infty}}^{\tfrac{q-1}{q}}\\
        &\lesssim_{d,\gamma} \mathcal{H}(\rho_{t}|\pi)^{\tfrac{1}{2q}}\lVert \sigma_{t}\rVert_{L^{\infty}}^{\tfrac{q-1}{q}}\le \mathcal{H}(\bar\rho|\pi)^{\tfrac{1}{2q}}\left(\max_{\T^{d}}\rho_{t}+\max_{\T^{d}}\pi\right)^{\tfrac{q-1}{q}},
    \end{align*}
    where we also used Pinsker's inequality $\lVert \rho_{t}-\pi\rVert_{L^{1}}\le \sqrt{2\mathcal{H}(\rho_{t}|\pi)}$ and the fact that the relative entropy is dissipated by \eqref{eq:stein-dissipation-identity}. For $t>0$, if $x_{0}\in \T^{d}$ is a maximum point of $\rho_{t}$, then $\nabla \rho_{t}(x_{0})=0$ and 
    \begin{align*}
        \partial_{t}\rho_{t}(x_{0})&=-\rho_{t}(x_{0})\left(\rho_{t}(x_{0})-\pi(x_{0})-\diver(-\Delta)^{-1}(\sigma_{t}\nabla V)(x_{0})\right)\\
        &\le -\max_{\T^{d}}\rho_{t}\left(\max_{\T^{d}}\rho_{t}-\max_{\T^{d}}\pi-C\left(\max_{\T^{d}}\rho_{t}+\max_{\T^{d}}\pi\right)^{\tfrac{q-1}{q}}\right).
    \end{align*}
    The expression in parentheses on the right-hand side is positive if $\max_{\T^{d}}\rho_{t}\ge M$, with $M$ depending only on $d,\gamma$, $\lVert V\rVert_{H^{\gamma}}$, and $\mathcal{H}(\bar\rho|\pi)$. Since for almost every $t>0$, $\tfrac{d}{dt}\max_{\T^{d}}\rho_{t}=\partial_{t}\rho_{t}(x_{0})$, with $x_{0}$ maximum point of $\rho_{t}$, \eqref{eq:maximum-principle-s=1} follows.
\end{proof}

\section{More general kernels}\label{app:general-kernel}
Here we describe how to change the proof of quantitative convergence in the case where $K_{s}$ is replaced by a more general kernel $K$ with the same spectral behavior.

\begin{thm}[General kernels]\label{thm:general_kernels}
    Let $s>1$, $K:\T^{d}\to \R$ be a symmetric kernel and $\mathcal{K}(f):=K*f$ the corresponding convolution operator. Assume that the following spectral conditions hold:
    \begin{itemize}
        \item [(i)] $\widehat{K}_{0}\ge 0$ and $\widehat{K}_{k}\approx |k|^{-2s}$ for every $k\in \Z^{d}\setminus \{0\}$.
        \item [(ii)] The operator $T:=(-\Delta)^{s}\mathcal{K}$ satisfies the following estimates:
        \begin{equation*}
            \lVert Tf\rVert_{L^{p}}\approx_{p} \lVert f\rVert_{L^{p}}\qquad \forall f\in L^{p}(\T^{d}),\quad \widehat{f}_{0}=0,\qquad \forall p\in (1,\infty).
        \end{equation*}
        \item [(iii)] There is a constant $\bar c>0$ such that
        \begin{equation*}
            \lVert \left(\mathcal{K}-\bar c (-\Delta)^{-s}\right)(f)\rVert_{L^{2}}\lesssim \lVert f\rVert_{\dot H^{-2s-1}}\qquad \forall f\in L^{2}(\T^{d}),\quad \widehat{f}_{0}=0.
        \end{equation*}
    \end{itemize}
    Then the same conclusions of Theorem \ref{thm:quantitative-convergence-s>1} hold true for solutions of \eqref{eq:SVGD} if we replace the Riesz kernel $K_{s}$ with $K$.
\end{thm}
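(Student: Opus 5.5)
The plan is to rerun the proof of Theorem \ref{thm:quantitative-convergence-s>1} verbatim, after checking that each of the three ingredients still holds for $\mathcal{K}$ with constants depending also on the implicit constants in (i)--(iii). These ingredients are Lemma \ref{lem:comparability-stein-H1-s}, Lemma \ref{lem:energy-estimate-higher-order} and Lemma \ref{lem:time-integral-Lipschitz-field}. The bootstrap argument with $\tau$, the Gr\"onwall step and the final interpolation use only these lemmas, the dissipation identity \eqref{eq:stein-dissipation-identity} and Lemma \ref{lem:comparability-entropy-L2}. So once the lemmas are established, the conclusion follows unchanged.

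The first step is the Stein discrepancy. For general $K$, the dissipation is
\begin{equation*}
I(\rho|\pi)=\sum_k \widehat K_k\,\bigl|\widehat{(\pi\nabla\tfrac{\rho}{\pi})}_k\bigr|^2 .
\end{equation*}
The zero mode of $\pi\nabla\frac{\rho}{\pi}=\nabla\sigma+\sigma\nabla V$ enters with the weight $\widehat K_0\ge 0$. It therefore only helps the lower bound, and for the upper bound it is controlled by $|\langle \sigma,\nabla V\rangle|\lesssim\|\sigma\|_{\dot H^{1-s}}$, since $V$ is smooth enough. On the nonzero modes, (i) gives $\widehat K_k\approx|2\pi k|^{-2s}$, so $I\approx\|\pi\nabla\frac{\sigma}{\pi}\|_{\dot H^{-s}}^2$ up to this harmless term. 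Lemma \ref{lem:equiv-homogeneous-norms-smooth-multiplier} then gives \eqref{eq:approx-formula-SFI}, and \eqref{eq:coercivity-interpolation-SFI} follows by the same interpolation.

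The second step, which I expect to be the main obstacle, is the high-order energy estimate. Write $\mathcal{K}=\bar c D^{-2s}+R$, where by (iii) the remainder $R$ gains one extra derivative on $L^2$-type norms. In the linear term, split $\mathrm{Lin}$ into its $\bar c D^{-2s}$ part and its $R$ part. The first part is exactly the Riesz computation, giving $-2\bar c\min\pi\|\sigma\|_{\dot H^{\gamma-s+1}}^2$ plus commutator errors. The $R$ part is bounded by Cauchy--Schwarz as
\begin{equation*}
\|D^{\gamma+1}(\pi R(\pi\nabla\tfrac{\sigma}{\pi}))\|_{L^2}\,\|\sigma\|_{\dot H^{\gamma}}\lesssim\|\sigma\|_{\dot H^{\gamma-s}}\|\sigma\|_{\dot H^{\gamma-s+1}}\text{-type terms}.
\end{equation*}
More precisely, I would arrange the integration by parts so that $R$ costs at most $\|\sigma\|_{\dot H^{\gamma-s}}\|\sigma\|_{\dot H^{\gamma-s+1}}$. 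One derivative goes on the test side, and (iii) absorbs the extra $D$ because $R$ maps $\dot H^{m}\to\dot H^{m+2s+1}$ by duality and commutation with Fourier multipliers. Here $\pi$ is multiplied via Lemma \ref{lem:equiv-homogeneous-norms-smooth-multiplier}, which is where $V\in H^{\gamma+s}$ is used. These errors are absorbed exactly as in the proof of Theorem \ref{thm:quantitative-convergence-s>1}.

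In the nonlinear terms, the Kato--Ponce estimates of Lemma \ref{lem:kato-ponce} are applied to $\sigma$ times $u:=\mathcal{K}(\pi\nabla\frac{\sigma}{\pi})$. This produces $\|\nabla u\|_{L^\infty}$ and $\|D^{\gamma+1}u\|_{L^q}$. The Gagliardo--Nirenberg step needs $\|D^{\gamma+1}u\|_{L^q}\approx\|D^{\gamma+1-2s}(\pi\nabla\frac{\sigma}{\pi})\|_{L^q}$ for $q\in(2,\infty)$, and this is precisely assumption (ii), written as $\mathcal{K}=D^{-2s}T$ on mean-zero functions. The zero mode of the vector field contributes a constant drift, which does not affect derivatives. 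Accordingly, I define $L(t)$ with $\nabla u$ and $\nabla^2\mathcal{K}\sigma$ in place of the Riesz expressions.

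The third step is Lemma \ref{lem:time-integral-Lipschitz-field}. It suffices to bound $\|\nabla^2\mathcal{K}\sigma\|_{L^\infty}+\|\nabla\mathcal{K}(\sigma\nabla V)\|_{L^\infty}\lesssim\|\sigma\|_{\dot H^r}$ with the same exponent $r$. This follows from (i) alone, since Sobolev embedding on the torus only uses the size of the Fourier multipliers. The remaining interpolation and integration argument is unchanged, and the bootstrap then closes as before, giving \eqref{eq:polynomial-decay-s>1} and \eqref{eq:interpolation-decay-s>1}.
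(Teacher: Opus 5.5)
Your proposal is correct and is essentially the paper's argument. You rerun the bootstrap of Theorem~\ref{thm:quantitative-convergence-s>1} after rechecking three ingredients: the Stein comparability (with the zero mode entering with weight $\widehat K_0\ge 0$), the high-order energy estimate, and the bound $L(t)\lesssim\lVert\sigma_t\rVert_{\dot H^{r}}$ from (i). You also make explicit that (ii) is what the $L^q$ Gagliardo--Nirenberg step in the nonlinear term requires, which the paper leaves implicit.

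The one deviation is that you split $\mathcal{K}=\bar c D^{-2s}+R$ in the whole linear term. The paper splits only the principal part $-\int\pi D^{\gamma+s}\nabla\mathcal{K}\sigma_t\cdot\nabla D^{\gamma-s}\sigma_t$, keeping $\pi$ outside $R$. Your split also works, with two adjustments. First, the $R$-part must be paired as $\lVert\pi Rw\rVert_{\dot H^{\gamma+s}}\lVert\sigma_t\rVert_{\dot H^{\gamma-s+1}}$ with $w=\pi\nabla\frac{\sigma_t}{\pi}$; your first display instead leaves a factor $\lVert\sigma_t\rVert_{\dot H^{\gamma}}$ that the dissipation cannot absorb. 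Second, the constant mode $\widehat K_0\widehat w_0$ of $Rw$ must be peeled off before applying Lemma~\ref{lem:equiv-homogeneous-norms-smooth-multiplier}.
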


\begin{rmk}\label{rmk:Askey-kernel}
    This theorem can be applied, in particular, to the negative distance-like kernel $K$ introduced in \eqref{eq:negative-distance-kernel}, for which $s=(d+1)/2$. Indeed, since $K$ is supported in $B_{1/4}$, it may equivalently be regarded as a function on $\R^{d}$. We can then study the relevant properties of its Fourier transform $\widehat{K}$, which is real-valued, radial, and smooth. To prove points $(i)$ and $(iii)$, one may first apply Askey's Theorem \cite{Askey1973RadialCharacteristicFunctions} to get $\widehat{K}>0$. The asymptotic behavior is then determined by the $1$-homogeneous singularity of the kernel at the origin, yielding $\widehat{K}(\xi)= C_{d}|\xi|^{-d-1}+O(|\xi|^{-d-2})$ as $\xi \to \infty$, which gives the desired spectral comparability. Finally, point $(ii)$ follows by applying the standard Mikhlin multiplier Theorem to the symbol $c(\xi):=|\xi|^{d+1}\widehat{K}(\xi)$, once one proves that $|\partial^{\beta}c(\xi)|\lesssim |\xi|^{-|\beta|}$, for all $|\beta|\le \lfloor \tfrac{d}{2}\rfloor +1$. These estimates hold thanks to the regularity imposed to the kernel away from the origin by raising $(1-4|x|)$ to the sufficiently large power $d+2$. 
\end{rmk}

In order to prove Theorem \ref{thm:general_kernels}, one can proceed as in the proof of Theorem \ref{thm:quantitative-convergence-s>1}, with some localized differences that we highlight in the following lines.

\smallskip
\noindent \textit{The equation.} Let $\mathcal{K}=K*$ be the convolution operator associated to the kernel $K$. We consider a solution of 
\begin{equation*}
    \partial_{t}\rho=\diver\left(\rho \mathcal{K}(\rho \nabla\log\left(\frac{\rho}{\pi}\right))\right),\qquad \rho_{0}=\bar\rho.
\end{equation*}
Denoting $\sigma_{t}=\rho_{t}-\pi$, the velocity field can be rewritten in the following equivalent ways:
\begin{equation*}
    v_{t}:=-\mathcal{K}(\rho_{t} \nabla\log\left(\frac{\rho_{t}}{\pi}\right))=-\mathcal{K}(\pi \nabla \frac{\sigma_{t}}{\pi})=-\nabla \mathcal{K}(\sigma_{t})-\mathcal{K}(\sigma_{t}\nabla V).
\end{equation*}

\smallskip
\noindent \textit{Energy dissipation identity \eqref{eq:stein-dissipation-identity}.} We get 
    \begin{equation}\label{eq:stein-dissipation-identity-general-kernels}
        \frac{d}{dt}\mathcal{H}(\rho_{t}|\pi)=-\langle \mathcal{K}(\pi \nabla\frac{\sigma_{t}}{\pi}), \pi \nabla\frac{\sigma_{t}}{\pi}\rangle =:-I(\rho_{t}|\pi).
    \end{equation}

\smallskip
\noindent \textit{Lemma \ref{lem:comparability-stein-H1-s}.} By assumption $(i)$, the following equivalence holds:
    \begin{equation*}
        I(\rho_{t}|\pi)\approx \widehat{K}_{0}\left(\widehat{(\sigma_{t}\nabla V)}_{0}\right)^{2}+\lVert \pi \nabla \frac{\sigma_{t}}{\pi}\rVert_{\dot H^{-s}}^{2}.
    \end{equation*}
    Therefore, since $\widehat{K}_{0}\ge 0$, we deduce 
    \begin{equation*}
        I(\rho_{t}|\pi)\gtrsim \lVert \sigma_{t}\rVert_{\dot H^{1-s}}^{2}\gtrsim \mathcal{H}(\rho_t|\pi)^{1+\frac{s-1}{\gamma}}\left(\lVert \sigma_{t}\rVert_{\dot H^{\gamma}}^{2}\right)^{-\frac{s-1}{\gamma}}.
    \end{equation*}

\smallskip
\noindent\textit{Lemma \ref{lem:energy-estimate-higher-order}.} We may prove the following estimate, for a sufficiently large constant $C>0$:
    \begin{equation}\label{eq:energy-estimate-higher-order-general-kernels}
        \frac{d}{dt}\lVert \sigma_{t}\rVert_{\dot H^{\gamma}}^{2}\le CL(t)\lVert \sigma_{t}\rVert_{\dot H^{\gamma}}^{2}-\frac{1}{C}\lVert \sigma_{t}\rVert_{\dot H^{\gamma-s+1}}^{2}+C\lVert \sigma_{t}\rVert_{\dot H^{\gamma-s}}\lVert \sigma_{t}\rVert_{\dot H^{\gamma-s+1}},
    \end{equation}
    where we set
    \begin{equation}\label{eq:def-L-lipschitz-field-general-kernels}
        L(t):=\lVert \nabla^{2}\mathcal{K}(\sigma_{t})\rVert_{L^{\infty}}+\lVert \nabla \mathcal{K}(\sigma_{t}\nabla V)\rVert_{L^{\infty}}.
    \end{equation}
    The proof is almost identical to that of Lemma~\ref{lem:energy-estimate-higher-order}, after replacing $D^{-2s}$ with $\mathcal{K}$, and using their spectral comparability on zero-mean functions. The only notable difference appears in the estimate of the term ${\rm Lin}$ arising from the linearized equation. There, the negative term ${\rm Lin}_{1}$ is now replaced by 
    \begin{align*}
        -\int_{\T^{d}}\pi D^{\gamma+s}\nabla \mathcal{K}(\sigma_{t})\cdot \nabla D^{\gamma-s}\sigma_{t}&=-\bar c\int_{\T^{d}}\pi |\nabla D^{\gamma-s}\sigma_{t}|^{2}\\
        &\quad\,-\int_{\T^{d}}\pi D^{\gamma+s}\nabla\left(\mathcal{K}-\bar c D^{-2s}\right)(\sigma_{t})\cdot \nabla D^{\gamma-s}\sigma_{t}\\
        &=:{\rm Lin}_{1,1}+{\rm Lin}_{1,2},
    \end{align*}
    where we added and subtracted $\bar c D^{-2s}$ to $\mathcal{K}$, with $\bar c>0$ the constant given by assumption $(iii)$. Then, ${\rm Lin}_{1,1}$ is estimated by
    \begin{equation*}
        {\rm Lin}_{1,1}\le -\bar c \big(\min_{\T^{d}}\pi\big)\int_{\T^{d}}|\nabla D^{\gamma-s}\sigma_{t}|^{2}=-\bar c \big(\min_{\T^{d}}\pi\big)\lVert \sigma_{t}\rVert_{\dot H^{\gamma-s+1}}^{2}.
    \end{equation*}
    On the other hand, since $\mathcal{K}-\bar cD^{-2s}$ gains one spectral order by assumption $(iii)$, we may bound ${\rm Lin}_{1,2}$ by
    \begin{equation*}
        {\rm Lin}_{1,2}\lesssim \lVert \sigma_{t}\rVert_{\dot H^{\gamma-s}}\lVert \sigma_{t}\rVert_{\dot H^{\gamma-s+1}}.
    \end{equation*}

    \smallskip
    \noindent \textit{Lemma \ref{lem:time-integral-Lipschitz-field}.} The result is exactly the same using $L$ from \eqref{eq:def-L-lipschitz-field-general-kernels}. Indeed, by assumption $(i)$, the estimate $L(t)\lesssim \lVert \sigma_{t}\rVert_{\dot H^{r}}$ still holds for the same exponent $r$ introduced in the proof of Lemma \ref{lem:time-integral-Lipschitz-field}.

    \smallskip
    \noindent \textit{Conclusion.} Having derived the same ingredients for the case of more general kernels, the proof of Theorem \ref{thm:general_kernels} follows exactly as that of Theorem \ref{thm:quantitative-convergence-s>1}.

\smallskip
\paragraph{Acknowledgements.} M.C., R.C., and X.F.~are supported by the Swiss State Secretariat for Education, Research and Innovation (SERI) under contract number MB22.00034 through the project TENSE. R.C. and X.F. ~are also supported by the Swiss National Science Foundation (SNF grant PZ00P2\_208930). X.F. is further supported  by the AEI project PID2021-125021NA-I00 (Spain).

\bibliographystyle{alpha}
\bibliography{Refs_SteinGF}

\end{document}